\theoremstyle{plain}
\newtheorem{theorem}{Theorem}[section]
\newtheorem{proposition}[theorem]{Proposition}
\newtheorem{lemma}[theorem]{Lemma}
\theoremstyle{definition}
\newtheorem{definition}[theorem]{Definition}
\theoremstyle{remark}
\newcommand{\Ibb}{\mathbb{I}}
\DeclareMathOperator{\argminH}{argmin}
\DeclareMathOperator{\argmaxH}{argmax}
\newtheorem*{counterexample*}{\textnormal{\textbf{Counterexample to}}}
\newtheoremstyle{example}
  {3pt} 
  {3pt} 
  {\normalfont} 
  {} 
  {\bfseries} 
  {.} 
  {.5em} 
  {} 
\theoremstyle{example}
\newtheorem{example}{Example}
\tikzstyle{startstop} = [rectangle, rounded corners, minimum width=3cm, minimum height=1cm,text centered, draw=black, fill=red!30]
\tikzstyle{process} = [rectangle, minimum width=3cm, minimum height=1cm, text centered, draw=black, fill=orange!30]
\tikzstyle{arrow} = [thick,->,>=stealth]
\title{A Unifying Post-Processing Framework for Multi-Objective Learn-to-Defer Problems}
\author{%
  Mohammad-Amin~Charusaie \\
  Max Planck Institute for Intelligent Systems\\
  Tuebingen, 72076\\
  \texttt{mcharusaie@tuebingen.mpg.de} \\
  \And
  Samira Samadi \\
    Max Planck Institute for Intelligent Systems\\
    Tuebingen, 72076\\
  \texttt{samira.samadi@tuebingen.mpg.de} \\
}
\begin{document}

\maketitle

\begin{abstract}
    Learn-to-Defer is a paradigm that enables learning algorithms to work not in isolation but as a team with human experts. In this paradigm, we permit the system to defer a subset of its tasks to the expert. Although there are currently systems that follow this paradigm and are designed to optimize the accuracy of the final human-AI team, the general methodology for developing such systems under a set of constraints (e.g., algorithmic fairness, expert intervention budget, defer of anomaly, etc.) remains largely unexplored. In this paper, using a $d$-dimensional generalization to the fundamental lemma of Neyman and Pearson ($d$-GNP), we obtain the Bayes optimal solution for learn-to-defer systems under various constraints. Furthermore, we design a generalizable algorithm to estimate that solution and apply this algorithm to the COMPAS and ACSIncome datasets. Our algorithm shows improvements in terms of constraint violation over a set of baselines.
\end{abstract}

\addtocontents{toc}{\protect\setcounter{tocdepth}{-1}}
\section{Introduction}\label{sec:introduction}

\newcommand{\SpaceTable}{0.16cm}

Machine learning algorithms are increasingly used in diverse fields, including critical applications, such as medical diagnostics \cite{vermeulen2023ultra} and  predicting optimal prognostics \cite{sammut2022multi}. To address the sensitivity of such tasks, existing approaches suggest keeping the human expert in the loop and using the machine learning prediction as advice \cite{jiang2012calibrating}, or playing a supportive role by taking over the tasks on which machine learning is uncertain \cite{kompa2021second, raghu2019algorithmic, beede2020human}. The abstention of the classifier in making decisions, and letting the human expert do so, is where the paradigm of learn-to-defer (L2D) started to exist. 

The development of L2D algorithms has mainly revolved around optimizing the accuracy of the final system under such paradigm \cite{raghu2019algorithmic, mozannar2020consistent}. Although they achieve better accuracy than either the machine learning algorithm or the human expert in isolation, these works provide inherently single-objective solutions to the L2D problem. In the critical tasks that are mentioned earlier, more often than not, we face a challenging multi-objective problem of ensuring the safety, algorithmic fairness, and practicality of the final solution.  In such settings, we seek to limit the cost of incorrect decisions \cite{metz1978basic}, algorithmic biases \cite{chen2023algorithmic}, or human expert intervention \cite{Okati21}, while optimizing the accuracy of the system.
Although the seminal paper that introduced the first L2D algorithm targeted an instance of such multi-objective problem \cite{madras2018predict}, a general solution to such class of problems, besides specific examples \cite{donahue2022human, Okati21, narasimhan2022post, narasimhanlearning}, has remained unknown to date. 

Multi-objective machine learning extends beyond the realm of L2D problems. A prime example that is extensively studied in various settings is ensuring algorithmic fairness \cite{corbett2017algorithmic} while optimizing accuracy. Recent advances in the algorithmic fairness literature have suggested the superiority of \emph{post-processing} methodology for tackling this multi-objective problem \cite{xian2023fair, chen2023post, cruz2023unprocessing, zeng2022bayes}. Post-processing algorithms operate in two steps: first, they find a calibrated estimation of a set of probability scores for each input via learning algorithms, and then they obtain the optimal predictor as a function of these scores. Similarly, in a recent set of works, optimal algorithms to reject the decision-making under a variety of secondary objectives are determined via post-processing algorithms \cite{narasimhan2022post, narasimhanlearning}, which is in line with classical results such as Chow's rule \cite{chow1970optimum} that is the simplest form of a post-processing method, thresholding the likelihood.

Inspired by the above works, in this paper, we fully characterize the solution to multi-objective L2D problems using a post-processing framework. In particular, we consider a deferral system together with a set of conditional performance measures $\{\Psi_0, \ldots, \Psi_{m}\}$ that are functions of the system outcome $\hat{Y}$, the target label $Y$, and the input $X$. The goal is to optimize the average value of $\Psi_{0}$ over data distribution while keeping the average value of the rest of performance measures $\Psi_1, \ldots, \Psi_{m}$ for all inputs under control. As an example, in binary classification, $\Psi_0$ can be the $0-1$ deferral loss function, while $\Psi_1$ can be the difference between positive prediction rates of $\hat{Y}$ for all instances of $X$ that belong to demographic group $A=0$ or $A=1$. The solution for which we aim optimizes the accuracy while assuring that the demographic parity measure between the two groups is bounded by a tolerance value $\delta_1\in[0, 1]$.

To provide the optimal solution, we move beyond staged learning \cite{pmlr-v162-charusaie22a} methodology, in which the classifier $h(x)$ is trained in the absence of human decision-makers, and then the optimal rejection function $r(x)$ is obtained for that classifier to decide when the human expert should intervene ($r(x)=1$). Instead, we jointly obtain the classifier and rejection function. The reason that we avoid this methodology is that firstly, objectives such as algorithmic fairness are not compositional, i.e., even if the classifier and the human are fair,  due to the emergence of Yule's effect \cite{ruggieri2023can} the obtained deferral system is not necessarily fair (see Appendix \ref{app: compose}), and in fact abstention systems can deter the algorithmic biases \cite{jones2020selective}. Secondly, the feasibility of constraints is not guaranteed under staged learning methodology \cite{yin2023fair}, e.g., there can be cases in which achieving a purely fair solution is impossible, while this occurs neither in vanilla classification \cite{cruz2023unprocessing} nor in our solution.

This paper shows that the joint learning of classifier and rejection function for finding the optimal multi-objective L2D solution boils down to a generalization of the fundamental Neyman-Pearson lemma \cite{neyman1933ix}. This lemma is initially introduced in studying hypothesis testing problems and characterizes the most powerful test (i.e., the test with the highest true positive rate) while keeping the significance level (true negative rate) under control. As a natural extension to this paradigm, we consider a multi-hypothesis setting where for each true positive prediction and false negative prediction, we receive a reward and loss, respectively. Then, we show that the extension of Neyman-Pearson lemma to this setting provides us with a solution for our multi-objective L2D problem.

In summary, the contribution of this paper is as below:
\begin{figure}
\centering
\includegraphics[width = \linewidth]{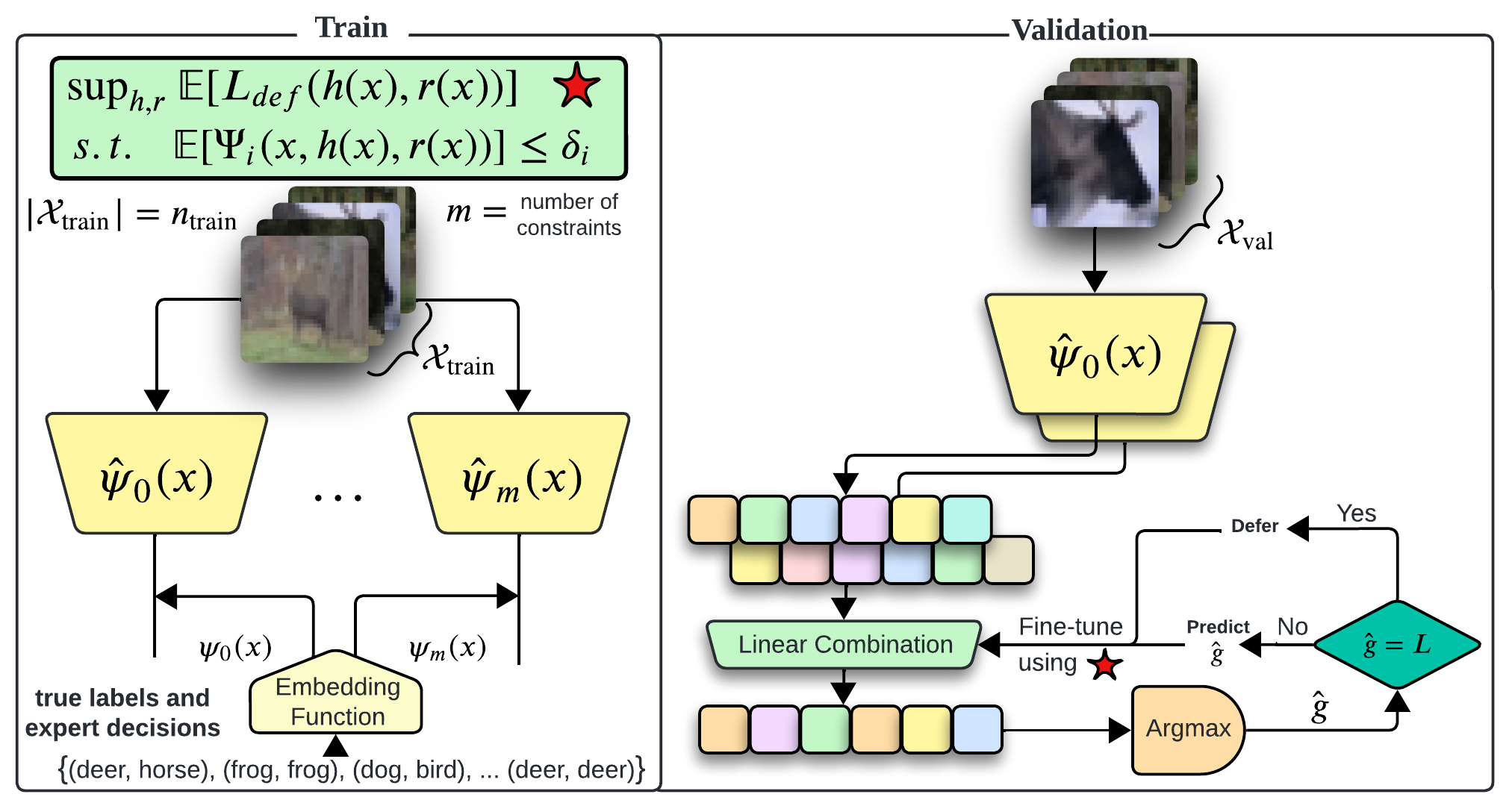}
\caption{Diagram of applying $d$-GNP to solve multi-objective L2D problem via Algorithm \ref{alg:bayes}. The role of randomness is neglected due to simplicity of presentation.}
\end{figure}
\begin{itemize}[topsep=0pt, partopsep=0pt, itemsep=0pt, parsep=0pt]
    \item  In Section \ref{sec: problem_setting}, we show that obtaining the optimal deterministic classifier and rejection function under a constraint is, in general, an NP-Hard problem, then
    \item by introducing randomness, we rephrase the multi-objective L2D problem into a functional linear programming.
    \item In Section \ref{sec: dgnp}, we show that such linear programming problem is an instance of $d$-dimensional generalized Neyman-Pearson ($d$-GNP) problem, then
    \item we characterize the solution to $d$-GNP problem, and we particularly derive the corresponding parameters of the solution when the optimization is restricted by a single constraint.
    \item In Section \ref{sec: gen}, we show that a post-processing algorithm that is based on $d$-GNP solution generalizes in constraints and objective with the rate $O(\sqrt{\log n/n}, \sqrt{\log (1/\epsilon)/n}, \epsilon')$ and $O(({\log n/n})^{1/2\gamma}, ({\log (1/\epsilon)/n})^{1/2\gamma}, \epsilon')$, respectively, with probability at least $1-\ep$ where $n$ is the size of the set using which we fine-tune the algorithm, $\ep'$ measures the accuracy of learned post-processing scores, and $\gamma$ is a parameter that measures the sensitivity of the constraint to the change of the predictor. Then,
    \item we show that the use of in-processing methods in L2D problem does not necessarily generalize to the unobserved data, and finally
    \item we experiment our post-processing algorithm on two tabular datasets, and observe its performance compared to the baselines for ensuring demographic parity and equality of opportunity on final predictions.
    
\end{itemize}

Lastly, the $d$-GNP theorem has potential use cases beyond the L2D problem, particularly in vanilla classification problems under constraints. However, such applications are beyond the scope of this paper, and except for a brief explanation of the use of $d$-GNP in algorithmic fairness for multiclass classification, we leave them to future works. 

\section{Related Works}


Human and ML's collaboration in decision-making has been demonstrated to enhance the accuracy of final decisions compared to predictions that are made solely by humans or ML \cite{kamar2012combining,tan2018investigating}. This overperformance is due to the ability to estimate the accuracy and confidence of each agent on different regions of data and subsequently allocate instances between human and ML to optimize the overall accuracy \cite{bansal2021most}. Since the introduction of the L2D problem, the implementation of its optimal rule has been the focus of interest in this field \cite{caogeneralizing,mozannar2020consistent,pmlr-v162-charusaie22a, narasimhan2022post, cao2024defense, liu2024mitigating, mozannar2023should, mao2024predictor}. The multi-objective classification with abstention problems is studied for specific objectives in \cite{madras2018predict, Okati21, mozannar2023should} via in-processing methods. The application of Neyman-Pearson lemma for learning problems with fairness criteria is recently introduced in \cite{zeng2024bayes}. 

We refer the reader to Appendix \ref{app: related} for further discussion on related works.
%
%
%
%
%
%
%
%
%
%
%
%
%
%
%
%
%
%
%
%
%
%
\section{Problem Setting}\label{sec: problem_setting}
\newcommand{\widthA}{3.5cm}
\newcommand{\spaceA}{5mm}

Assume that we are given input features  
$x_i\in \Xcal$ , 
corresponding labels $y_i\in \Ycal = \{1, \ldots, L\}$, and the human expert decision $m_i$ for such input, and assume that these 
are i.i.d. realizations of random variables $X,Y,M \sim \mu=\mu_{XYM}$.
Since there exists randomness in the human decision-making process, for the sake of generality, we treat $M$ as a random variable similar to $Y$ and do not assume that $m_i=m(x_i)$ for some function~$m$.
Further, assume that for the true label~$y$ and a certain feature vector~$x$, the cost of incorrect predictions is measured by a loss function $\ell_{AI}(y, h(x))$ for the classifier prediction $h(x)$, and a loss function $\ell_{H}(y, m)$ for human's prediction $m$. 
The question that we tackle in this paper is the following: \emph{What is an optimal classifier and otherwise an optimal way of deferring the decision to the human when there are constraints that limit the decision-making?}
The constraints above can be algorithmic fairness constraints (e.g., demographic parity, equality of opportunity, equalized odds), expert intervention constraints (e.g., when the human expert can classify up to $b$ proportion of the data), or spatial constraints to enforce deferral on certain inputs, or any combination thereof.

 Let us put the above question in a formal optimization form. To that end, let $r(x)\in \{0, 1\}$ be the rejection function\footnote{The rejection here differs from hypothesis rejection and indicates that the classifier rejects making a decision and defers the decision to the human expert.}, i.e., when $r(x)=0$ the classifier makes the decision for input $x$ and otherwise $x$ is deferred to the expert. We obtain the deferral loss on $x$ and given a label $y$ and the expert decision $m$ as
\begin{align*}
\ell_{\mathrm{def}}(y, m, h(x), r(x)) &= r(x)\ell_{H}(y, m)+(1-r(x))\ell_{AI}(y, h(x)).
\end{align*}
Therefore, we can find the average deferral loss on distribution $\mu$ as
\ea{
L_{\mathrm{def}}^{\mu}(h, r) := \mathbb{E}_{X, Y, M\sim \mu}\big[\ell_{\mathrm{def}}(Y, M, h(X), r(X))\big].\label{eqn: deferralLoss}
}
We aim to find a randomized algorithm $\Acal$ that defines a probability distribution $\mu_{\Acal}$ on $\Hcal\times \Rcal$ that solves the optimization problem 
\ea{
&\mu_{\Acal}\in \argmin_{\mu_\Acal}\Ebb_{(h, r)\sim \Acal}\big[ L_{\mathrm{def}}^{\mu}(h, r)\big],\nonumber\\s.t.~ ~~&\Ebb_{X, Y, M\sim 
\mu}\Ebb_{(h, r)\sim \mu_{\Acal}}\big[\Psi_i\big(X, Y, M, h(X), r(X)\big)\big]\leq \delta_i \label{eqn: optem}
}
where $\Psi_i$ is a performance measure that induces the desired constraint in our optimization problem. We assume that $\Psi_i$, similar to $\ell_{\mathrm{def}}$, is an \emph{outcome-dependent} function, i.e., if the deferral occurs, the outcome of the classifier does not change $\Psi_i$, and otherwise, if deferral does not occur, the human decision does not change $\Psi_i$. In other words, the value of the constraints can only be a function of input feature $x$ and of the deferral system prediction $\hat{Y}= r(x)M+\big(1-r(x)\big)h(x)$. Here, $\hat{Y}$ is the expert decision when deferral occurs, and is the classifier decision otherwise.

 \renewcommand{\arraystretch}{1.2}
\begin{table}[]
 \caption{A list of embedding functions corresponding to the constraints that are discussed in Section \ref{sec: problem_setting}. This list is a version of the results in Appendix \ref{app: test} when we assume that the input feature contains demographic group identifier $A$. To simplify the notations, we define $t(A, y):=\tfrac{\Ibb_{A=1}}{Pr(Y=y, A=1)} - \tfrac{\Ibb_{A=0}}{Pr(Y=y, A=0)}$.}
 
    \centering
    \begin{tabular}{ll}
    \toprule
    Name  & Embedding Function $\psi_i(x)$ \\\midrule
            Accuracy& $[\Pr(Y= 0 | x), \ldots, \Pr(Y= n | x), \Pr(Y= M | x)]$\\\midrule
       Expert Intervention Budget \cite{Okati21}  & $[{0, \ldots, 0}, 1]$\\\midrule
               OOD Detection \cite{narasimhan2023plugin}  & $[0, \ldots, 0, \tfrac{f_X^{\mathrm{out}}(x)}{f_{X}^{\mathrm{in}}(x)}]$\\\midrule
                & $-\Big[\sum_{i=1}^K\frac{\Pr(Y\neq 1, Y\in G_i|X=x)}{\alpha_i\Pr(Y\in G_i)}, \ldots, \sum_{i=1}^K\frac{\Pr(Y\neq l, Y\in G_i|X=x)}{\alpha_i\Pr(Y\in G_i)}, 0\Big]$\\
       Long-Tail Classification \cite{narasimhanlearning}  & and\\
       & $\frac{\Pr(Y\in G_i|X=x)}{\Pr(Y\in G_i)}\Big[1, \ldots, 1, 0\Big] - \frac{\alpha_i}{K}$\\\midrule
        Bound on Type-$K$ Error \cite{tian2021neyman} &$\frac{1}{\Pr(Y= k)}[1,\ldots, \underbrace{0}_{k\text{-th}}, \ldots, 1, \Pr(M\neq k, Y=k|x)]$\\\midrule
        Demographic Parity \cite{dwork2012fairness} & $(\tfrac{\Ibb_{A=1}}{Pr(A=1)} - \tfrac{\Ibb_{A=0}}{Pr(A=0)})[0, 1, \Pr(M=1|x)]$\\\hline
        Equality of Opportunity  \cite{hardt2016equality} & $t(A, 1)[0, \Pr(Y=1|x), \Pr(M=1, Y=1|x)]$\\\midrule
         &     $t(A, 1)[0, \Pr(Y=1|x), \Pr(M=1, Y=1|x)]$\\
        Equalized Odds \cite{hardt2016equality}  & 
             and\\
             &
    $t(A, 0)[\Pr(Y=0|x), 0, \Pr(M=0, Y=0|x)]$
\\\bottomrule
    \end{tabular}

    \label{tab:test}
\end{table}
{\bfseries Types of constraints.} Before we discuss our methodology to solve \eqref{eqn: optem}, it is beneficial to review the types of constraints with which we are concerned:
    {\bfseries {(1)} expert intervention budget} that can be written in form of 
    $
    \Pr\big(r(X)=1\big)\leq \delta$,
    limits the rejection function to defer up to $\delta$ proportion of the instance,
    {\bfseries {(2)} demographic parity} that is formulated as
    $
    \big|P(\hat{Y}=1|A=0)-P(\hat{Y}=1|A=0)\big|\leq \delta
    $,
    ensures that the proportion of positive predictions for the first demographic group ($A=0$) is comparable to that for the second demographic group ($A=1$).
{\bfseries {(3)} equality of opportunity} that is defined as
$
|Pr(\hat{Y}=1|A=1, Y=1)-Pr(\hat{Y}|A=0, Y=1)|\leq \delta
$
limits the differences between correct positive predictions among two demographic groups,
{\bfseries (4) equalized odds} that is similar to equality of opportunity but targets the differences of correct positive and negative predictions among two groups, i.e.,
$
\max_{y=0, 1} \big| Pr(\hat{Y}=1|A=1, Y=y)-Pr(\hat{Y}=1|A=0, Y=y)\big|\leq \delta
$,
{\bfseries (5) out-of-distribution (OOD) detection} 
that is written as
$\Pr_{\mathrm{out}}(r(X)=0)\leq \delta$
limits the prediction of the classifier on points that are outside its training distribution and incentivizes deferral in such cases,
{\bfseries (6) long-tail classification} deals with high class imbalances. This method aims to minimize a balanced error of classifier prediction on instances where deferral does not occur. Achieving this objective as mentioned in \cite{narasimhan2023plugin} is equivalent to minimizing $\sum_{i=1}^{K}\frac{1}{\alpha_i}\Pr(Y\neq h(X), r(X)=0|Y\in G_i)$ when the feasible set is $\Pr(r(X)=0, Y\in G_i)=\frac{\alpha_i}{K}$, and where $\{G_i\}_{i=1}^K$ is a partition of classes, and finally   
 {\bfseries (7) type-$k$ error bounds} that is a generalization of Type-I and Type-II errors,  limits errors of a specific class $k$ using
$
\Pr(\hat{Y}\neq k | Y=k)\leq \delta
$.

All above constraints are expected values of outcome-dependent functions (see Appendix \ref{app: test} for proof). To put it informally, if we change the classifier outcome after the rejection, such constraints do not vary. 

{\bfseries Linear Programming Equivalent to \eqref{eqn: optem}.} The outcome-dependence property helps us to show that (see Appendix \ref{app: rephrase}) obtaining the optimal classifier and rejection function is equivalent to obtaining the solution of
\ea{
&f^*=[f_1^*, \ldots, f_d^*] \in \argmax_{f\in \Delta_{d}^{\Xcal}} \Ebb\big[\langle f(X), \psi_{0}(X)\rangle\big],~~~~\text{s.t. } \Ebb\big[\langle f(x), \psi_i(x)\rangle \big]\leq \delta_i, i\in[1:m] \label{eqn: optim_gen}
}
where $\Delta_d$ is a simplex of $d$ dimensions, $d=L+1$, and $\psi_i:\Xcal\to\Rbb^{d}$ is defined as
\ea{
\psi_{i}(x):=\Ebb_{Y, M|X=x}\bigg[\Big[\Psi_i(x, Y, M, 1, 0), \ldots, \Psi_i(x, Y, M, l, 0), \Psi_i(x, Y, M, 0, 1)\big]\Big]\bigg] \label{eqn: def_test}
}
that we name the \emph{embedding function}\footnote{We named this an embedding function because it embeds the constraint or loss of the optimization problem into a vector function.} corresponding to the performance measure $\Psi_i$ for $i\in[0:m]$, where for simplifying the notation we define $\Psi_{0}\equiv -\ell_{\mathrm{def}}$. Furthermore, the optimal algorithm is obtained by predicting $h(x)=i$ with normalized probability of ${f_i^*(x)}/{\sum_{j=1}^{d-1} f_j^*(x)}$, where $\sum_{j=1}^{d-1} f_j^*(x)\neq 0$, and rejecting $r(x)=1$ with probability $f_{d}^*(x)$. In case of $\sum_{j=1}^{d-1} f_j^*(x)= 0$ the classifier is defined arbitrarily.
A list of embedding functions for the mentioned constraints and objectives is provided in Table \ref{tab:test} (See Appendix \ref{app: test} for derivations).

{\bfseries Hardness.} We first derive the following negative result for the optimal deterministic predictor in \eqref{eqn: optim_gen}. We use the similarity between \eqref{eqn: optim_gen} and $0-1$ Knapsack problem (see \cite[pp. 374]{papadimitriou1998combinatorial}) to show that there are cases in which solving the former is equivalent to solving an NP-Hard problem. More particularly, if we assume that the distribution of $X$ contains finite atoms $x_1, \ldots, x_n$, each of which have probability of $\Pr(X=x_i)=p_i$, and if we set  $\psi_1(x_i)=[0, \frac{w_i}{p_i}]$ and $\psi_0(x_i)=[0, \frac{v_i}{p_i}]$ for $v_i, w_i\in  \Rbb^{+}$, then \eqref{eqn: optim_gen} reduces in $\argmax \sum_{i} f^1(x_i)v_i$ subjected to ${f^1:\Xcal\to \{0, 1\}}$  and $\sum_{i} f^1(x_i)w_i\leq \delta_1$, which is the main form of the Knapsack problem. In the following theorem, we show that a similar result can be obtained if we choose $\psi_0$ and $\psi_1$ to be embedding functions corresponding to accuracy and expert intervention budget. All proofs of theorems can be found in the appendix.

\begin{theorem}[NP-Hardness of \eqref{eqn: optem}] \label{thm: np}
Let the human expert and the classifier induce $0-1$ losses and assume $\mathcal{X}$ to be finite. Finding an optimal deterministic classifier and rejection function for a bounded expert intervention budget is an NP-Hard problem. 
\end{theorem}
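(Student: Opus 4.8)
The plan is to reduce the NP-hard $0$--$1$ Knapsack optimization problem to the problem of finding an optimal deterministic pair $(h,r)$ under an expert-intervention budget, mirroring the informal sketch given just above the theorem but realizing the weights and values through genuine \emph{accuracy} and \emph{budget} embeddings rather than through arbitrary vectors $\psi_0,\psi_1$.

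First I would observe that for a fixed deterministic rejection rule $r$ the classifier $h$ decouples pointwise: since $h(x_i)$ enters the accuracy objective only through the term $(1-r(x_i))\Pr(Y=h(x_i)\mid x_i)$ and does not appear in the budget constraint $\sum_i p_i\, r(x_i)\le \delta$, the optimal choice is the Bayes label $h(x_i)\in\argmaxH_y \Pr(Y=y\mid x_i)$, giving AI accuracy $a_i:=\max_y \Pr(Y=y\mid x_i)$ on $x_i$. Writing $b_i:=\Pr(M=Y\mid x_i)$ for the human accuracy and $S:=\{i: r(x_i)=1\}$ for the deferral set, the objective collapses to the constant $\sum_i p_i a_i$ plus $\sum_{i\in S} p_i(b_i-a_i)$, maximized subject to $\sum_{i\in S} p_i\le \delta$. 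Thus choosing $(h,r)$ optimally is exactly a subset-selection problem with per-item weight $p_i$ and per-item gain $p_i(b_i-a_i)$.

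Next I would encode an arbitrary Knapsack instance $(v_i,w_i)_{i=1}^n$ with capacity $W$. I place $n$ atoms with $p_i=w_i/\sum_j w_j$, make $Y$ a uniform binary label given each $x_i$ so that $a_i=\tfrac12$, and choose the joint law of $(M,Y\mid x_i)$ so that $b_i=\tfrac12+\lambda\, v_i/p_i$ for a single global scale $\lambda>0$; I set $\delta=W/\sum_j w_j$. Then the per-item gain is $p_i(b_i-a_i)=\lambda v_i$ and the per-item weight is $p_i\propto w_i$, so the subset-selection problem above coincides with the Knapsack instance up to the harmless global scaling $\lambda$ of all values, which does not change the optimal subset. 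Consequently an optimal deterministic $(h,r)$ produces, through its deferral set $S$, an optimal Knapsack solution, and the entire construction is computable in time polynomial in the input size.

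I expect the main obstacle to lie in the realizability bookkeeping rather than in the reduction idea: I must check that the target accuracies are attainable, i.e. that $b_i\in[0,1]$ and hence $\lambda\, v_i/p_i\le \tfrac12$, which I can enforce by taking $\lambda\le \big(2\max_i v_i/p_i\big)^{-1}$, and that a genuine conditional distribution $\mu_{XYM}$ realizes the prescribed $a_i$ and $b_i$ simultaneously (for instance with $Y\sim\mathrm{Unif}\{0,1\}$ given $x_i$ and $M$ agreeing with $Y$ with probability $b_i$ independently of the value of $Y$). A secondary point is to phrase the argument so that it certifies NP-hardness of the optimization problem, equivalently by routing it through the Knapsack decision problem with a threshold on the attained objective; after this the claim follows, since the reduction is a polynomial-time many-one reduction.
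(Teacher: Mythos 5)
Your proposal is correct and follows essentially the same route as the paper: both reduce from $0$--$1$ Knapsack by observing that the classifier decouples into a pointwise Bayes rule, so that the deferral set becomes a subset-selection problem with weights $p_i$ and gains proportional to the knapsack values. The only difference is cosmetic --- the paper encodes the item values through the AI loss $\ell_i$ with a perfect human ($M=Y$), whereas you fix the AI accuracy at $\tfrac12$ and encode the values through the human accuracy $b_i$; your explicit realizability check on $b_i\in[0,1]$ and the remark about routing through the decision version are sound.
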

Note that the above finding is different from the complexity results for deferral problems in \cite[][Theorem~1]{mozannar23} and \cite[][Theorem~1]{de2020regression}. NP-hardness results in these settings are consequences of restricting the search to a specific space of models, i.e., the intersection of half-spaces and linear models on a subset of the data. However, in our theorem, the hardness arises due to a possibly complex data distribution and not because of the complex model space.

The above hardness theorem for deterministic predictors justifies our choice of using randomized algorithms to solve multi-objective L2D. In the next section, by finding a closed-form solution for the randomized algorithm, we show that such relaxation indeed simplifies the problem.
\section{$d$-dimensional Generalization of Neyman-Pearson Lemma}\label{sec: dgnp}

The idea behind minimizing an expected error while keeping another expected error bounded is naturally related to the problem that is designed by Neyman and Pearson \cite{neyman1933ix}. They consider two hypotheses $H_0, H_1$ as two distributions with density functions $g_0(x)$ and $g_1(x)$ for which a given point $x$ can be drawn. Then, they maximize the probability of correctly rejecting $H_0$, while bounding the probability of incorrectly rejecting $H_0$, i.e., for a test $T(x)\in{[0, 1]}$ that rejects the null hypothesis when $T(x)=1$, they solved the problem
\ea{
\max_{T\in{[0, 1]^{\Xcal}}} \Ebb_{X\sim g_1}\big[T(X)\big], ~~~~s.t. ~~\Ebb_{X\sim g_0}\big[T(X)\big] \leq \alpha. \label{eqn: np}
}
They concluded that thresholding the likelihood ratio is a solution to the above problem. Formally, they show that all optimal hypothesis tests take the value $T(x)=1$ when ${g_1(x)}/{g_0(x)}>k$ and take the value $T(x)=0$ when ${g_1(x)}/{g_0(x)}<k$,
where $k$ is a scalar and dependent on $\alpha$.

{\bfseries Multi-hypothesis testing with rewards.} In this section, we aim to solve \eqref{eqn: optim_gen} as a generalization of Neyman-Pearson lemma for binary testing to the case of multi-hypothesis testing, in which correctly and incorrectly rejecting each hypothesis has a certain reward and loss. To clarify how the extension of this setting and the problem \eqref{eqn: optim_gen} are equivalent, assume the general case of $d$ hypotheses $H_0, \ldots, H_{d-1}$, each of which corresponding to $X$ being drawn from the density function $g_i(x)$ for $i\in\{0, \ldots, d-1\}$.   Further, assume that for each hypothesis $H_i$, in case of true positive, we receive the reward $r_i(x)$, and in case of false negative, we receive the loss $\ell_i(x)$. Assume that we aim to find a test $f:\Xcal\to \Delta_d$ that for each input $x
\in \Xcal$ rejects $d-1$ hypotheses, each hypothesis $H_i$ with probability $1-f^i(x)$ and maximizes a sum of true positive rewards, and that keeps the sum of false negative losses under control. Then, this is equivalent to
$
\argmax_{f\in \Delta_{d}^{\Xcal}} \sum_{i=0}^{d-1}\Ebb_{X\sim g_i}\Big[f^i(x)r_i(x)\Big]
$
subjected to
$\sum_{i=0}^{d-1}\Ebb_{X\sim g_i}\Big[(1-f^i(x))\ell_i(x)\Big]\leq \delta_1$
which in turns is equivalent to
\ea{
\argmax_{f\in \Delta_{d}^{\Xcal}} \Ebb_{X\sim g_0}\Big[\sum_{i=0}^{d-1}f^i(x)r_i(x)\frac{g_i(x)}{g_0(x)}\Big]~~~~~~\text{s.t.   } \Ebb_{X\sim g_0}\Big[\sum_{i=0}^{d-1}f^i(x)\sum_{j\neq i}\ell_j(x)\frac{g_j(x)}{g_0(x)}\Big]\leq \delta_1. \label{eqn: multihyp}
}
This problem can be seen as instance of \eqref{eqn: optim_gen}, when we set {$\psi_0(x)=[r_0(x), \ldots, r_{d-1}(x)\frac{g_{d-1}(x)}{g_0(x)}]$} and {$\psi_1(x)=\big[\sum_{j\neq 0}\ell_j(x)\tfrac{g_j(x)}{g_0(x)}, \ldots, \sum_{j\neq d-1}\ell_j(x)\tfrac{g_j(x)}{g_0(x)}\big]$}. Similarly, we can show that for all $\psi_0(x), \psi_{1}(x)$ in \eqref{eqn: optim_gen} there exists a set of densities $g_1(x), \ldots, g_{d-1}(x)$ and rewards and losses such that \eqref{eqn: multihyp} and \eqref{eqn: optim_gen} are equivalent. This can be done by setting $g_i\equiv g_0$ and noting that the mapping from $\ell_i$s and $r_i$s into $\psi_0$ and $\psi_2$ is invertible.

The formulation of \eqref{eqn: optim_gen} can be seen as an extension of the setting in \cite{tian2021neyman} when we move beyond type-$k$ error bounds to a general set of constraints. That work achieves the optimal test by applying strong duality on the Lagrangian form of the constrained optimization problem. However, we avoided using this approach in proving our solution, since finding $f^*$, and not the optimal objective, is possible via strong duality only when we know apriori that the Lagrangian has a single saddle point (for more details and fallacy of such approach, see Section \ref{app: cost_sensitive}). As another improvement to the duality method, we not only find a solution to \eqref{eqn: optim_gen}, but also show that there is no other solution that works as well as ours.

Before we express our solution in the following theorem, we define an import notation as an extension of the $\argmax$ function that helps us articulate the optimal predictor. In fact, we define 
\ea{
\Tcal_d = \big\{\tau:\Rbb^d\times\Rbb^{d}\to \Delta_{d}\, | \,  \sum_{i: x_i=\max\{x_1, \ldots, x_d\}} \big( \tau( \xv_{1}^{d}, \cdot)\big)(i)=1\big\}
}
that is a set of functions that result in one-hot encoded $\argmax$ when there is a clear maximum, and otherwise, based on its second argument, results in a probability distribution on all components that achieved the maximum value.

\begin{theorem}[$d$-GNP]\label{thm: gen_NP}
    For a set of functions $\psi_i$ where $i\in [0, m]$, assume that $(\delta_1, \ldots, \delta_m)$ is an interior point\footnote{A point is an interior point of a set, if the set contains an open neighborhood of that point.} of the set
    $
    \Fcal = \Big\{ \big(\Ebb[\langle r(x), \psi_1(x)\rangle], \ldots, \Ebb[\langle r(x), \psi_m(x)\rangle]\big): f\in \Delta_{d}^{\Xcal}\Big\}
    $.
    Then, there is a set of fixed values $k_1, \ldots, k_m$ and $\tau\in \Tcal_{d}$ such that the predictor
    \ea{
    f^*(x)= \tau\big(\psi_{0}(x)-\sum_{i=1}^{m} k_i \psi_i(x), x\big), \label{eqn: optimal_pred}
    }
    obtains the optimal solution of  $\sup_{f\in \Delta_d^{\Xcal}}\Ebb\big[\langle f(x), \psi_{0}(x)\rangle\big]$, subjected to the constraints being achieved tightly, i.e., when for $i\in [1:m]$ we have
    $
    \Ebb\big[\langle f(x), \psi_i(x)\rangle\big] = \delta_i 
    $.
    If $k_1, \ldots, k_m$ are further non-negative, then $f^*(x)$ is the optimal solution to \eqref{eqn: optim_gen}.
    Moreover, all optimal solutions of \eqref{eqn: optim_gen}
    that tightly achieve the constraints
    are in form of \eqref{eqn: optimal_pred} almost everywhere on $\Xcal$.
    
\end{theorem}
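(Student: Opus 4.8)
The plan is to read \eqref{eqn: optim_gen} as the maximisation of a linear functional over a convex body of achievable value-vectors, to pull the multipliers $k_i$ out of a supporting hyperplane, and then to verify optimality and uniqueness by a duality-free \emph{pointwise} comparison (so that, as the text promises, the argument pins down $f^*$ itself rather than merely the optimal value). Introduce the achievable region
\[
\mathcal{S}=\Big\{\big(\Ebb[\langle f,\psi_0\rangle],\ldots,\Ebb[\langle f,\psi_m\rangle]\big):f\in\Delta_d^{\Xcal}\Big\}\subseteq\Rbb^{m+1},
\]
whose projection onto its last $m$ coordinates is exactly $\Fcal$. Because $\Delta_d^{\Xcal}$ is convex and $f\mapsto(\Ebb[\langle f,\psi_i\rangle])_i$ is linear, $\mathcal{S}$ is convex; regarding $\Delta_d^{\Xcal}$ as a weak-$*$ compact subset of $L^\infty$ and assuming each $\psi_i$ integrable, this map is weak-$*$ continuous, so $\mathcal{S}$ is also compact. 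Since $\vec\delta\in\Fcal$, the slice $\{r_0:(r_0,\delta_1,\ldots,\delta_m)\in\mathcal{S}\}$ is a non-empty compact interval; its maximum $v^*$ is the optimal tight value and is attained by some $f^*$, so $(v^*,\vec\delta)$ is a boundary point of $\mathcal{S}$.

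\textbf{Step 1 (multipliers).} As $(v^*,\vec\delta)$ is a boundary point of the convex set $\mathcal{S}$, a supporting hyperplane furnishes a non-zero $(\lambda_0,\ldots,\lambda_m)$ with $\lambda_0 r_0+\sum_i\lambda_i r_i\le\lambda_0 v^*+\sum_i\lambda_i\delta_i$ for all $(r_0,\vec r)\in\mathcal{S}$. The interior-point hypothesis is exactly what excludes $\lambda_0=0$: were $\lambda_0=0$, the functional $\vec r\mapsto\sum_i\lambda_i r_i$ would attain its maximum over $\Fcal$ at the \emph{interior} point $\vec\delta$, forcing $\vec\lambda=0$ and contradicting $(\lambda_0,\vec\lambda)\neq 0$. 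A short sign check (using that the slice above realises values $r_0\le v^*$) gives $\lambda_0>0$, and with $k_i:=\lambda_i/\lambda_0$ the hyperplane inequality becomes
\[
\Ebb\big[\langle f,\;\psi_0-\textstyle\sum_i k_i\psi_i\rangle\big]\;\le\;v^*-\textstyle\sum_i k_i\delta_i\qquad\text{for all }f\in\Delta_d^{\Xcal},
\]
i.e.\ $f^*$ maximises the Lagrangian $\Ebb[\langle f,\psi_0-\sum_i k_i\psi_i\rangle]$.

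\textbf{Step 2 (form, sufficiency, necessity — without duality).} Write $c(x):=\psi_0(x)-\sum_i k_i\psi_i(x)$. For fixed $k_i$ the Lagrangian decouples across $x$, and since $\max_{p\in\Delta_d}\langle p,c\rangle=\max_j c_j$ is attained precisely by the distributions supported on $\operatorname{argmax}_j c_j$, the pointwise maximisers are exactly the maps $\tau(c(x),x)$ with $\tau\in\Tcal_d$ — which is \eqref{eqn: optimal_pred}. This delivers both directions at once. For sufficiency with $k_i\ge 0$: for any feasible $f'$ the pointwise inequality $\langle f^*(x)-f'(x),c(x)\rangle\ge 0$ integrates to $\Ebb[\langle f^*,\psi_0\rangle]\ge\Ebb[\langle f',\psi_0\rangle]+\sum_i k_i(\delta_i-\Ebb[\langle f',\psi_i\rangle])\ge\Ebb[\langle f',\psi_0\rangle]$, so $f^*$ solves \eqref{eqn: optim_gen}. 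For necessity: any optimal, tightly feasible $f'$ must also maximise the Lagrangian, whence $\langle f'(x),c(x)\rangle=\max_j c_j(x)$ almost everywhere; thus $f'(x)$ is supported on $\operatorname{argmax}_j c_j(x)$ a.e.\ — forced to the one-hot value wherever the argmax is unique and free to randomise over ties elsewhere, which is precisely membership in the family \eqref{eqn: optimal_pred}.

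\textbf{Main obstacle.} The crux is Step 1. First, one must fix a topology in which $\Delta_d^{\Xcal}$ is compact and the value map continuous (the weak-$*$ topology on $L^\infty$, under integrability of the $\psi_i$) so that $\mathcal{S}$ is genuinely compact and $(v^*,\vec\delta)$ is attained; without attainment there is no boundary point to separate. Second — and this is where the hypothesis does its real work — one must rule out the degenerate multiplier $\lambda_0=0$ and fix the sign $\lambda_0>0$; the interior-point assumption on $\vec\delta$ is indispensable here, since otherwise a purely constraint-driven hyperplane could support $\mathcal{S}$ and no finite $k_i$ would exist. A secondary subtlety is that the \emph{exact} tight values $\delta_i$ are realised only through the tie-breaking freedom encoded in $\Tcal_d$; rather than constructing this tie-breaking explicitly, the argument obtains it for free from the attainment of $(v^*,\vec\delta)\in\mathcal{S}$ together with the necessity half of Step 2, which shows that every tight optimiser — in particular one hitting $\vec\delta$ — already lies in the family \eqref{eqn: optimal_pred}.
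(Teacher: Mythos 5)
Your proposal is correct and follows essentially the same route as the paper's proof: attainment via weak compactness of $\Delta_d^{\Xcal}$, a supporting hyperplane at the boundary point $(v^*,\vec\delta)$ of the convex achievable set with the interior-point hypothesis ruling out $\lambda_0=0$, and a pointwise Lagrangian-maximization argument for both sufficiency and the a.e.\ necessity of the form \eqref{eqn: optimal_pred}. The only place you are lighter than the paper is the degenerate case where the objective slice over $\vec\delta$ is a single point (there the sign of $\lambda_0$ is not pinned down by your parenthetical check, and the paper handles it separately by showing the whole value set lies in a hyperplane, so that every predictor is trivially of the required form), but this is a presentational gap rather than a missing idea.
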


\begin{example}[L2D with Demographic Parity]
    In the setting that we have a deferral system and we aim for controlling demographic disparity under the tolerance $\delta$, we can set $\psi_0(x)=\big[\Pr(Y=0|x), \Pr(Y=1|x), \Pr(Y=M|x)\big]$ and $\psi_1(x)=s(A)\big[0, 1, \Pr(M=1|x)\big]$, using Table \ref{tab:test}, where $s(A):=\big(\tfrac{\Ibb_{A=1}}{Pr(A=1)} - \tfrac{\Ibb_{A=0}}{Pr(A=0)}\big)$. Therefore, $d$-GNP, together with the discussion after \eqref{eqn: def_test} shows that the optimal classifier and rejection function are obtained as
    \ean{
    h(x)=\left\{\begin{array}{cc}
        1 & \Pr(Y=1|x)>\frac{1+ks(A)}{2}\\
        0 & \Pr(Y=1|x)<\frac{1+ks(A)}{2}
    \end{array}
    \right. ,
    }
    and
    \ean{
    r(x)=\left\{\begin{array}{cc}
        1 & \Pr(Y=M|x)-ks(A)\Pr(M=1|x)>\lambda(A, x)\\
        0 & \Pr(Y=M|x)-ks(A)\Pr(M=1|x)<\lambda(A, x)
    \end{array}
    \right. ,
    }
     for a fixed value $k\in \Rbb$, and where $\lambda(A, x):=\max\{\Pr(Y=0|x), \Pr(Y=1|x)- ks(A)\}$.
    The above identities imply that the optimal fair classifier for the deferral system thresholds the scores for different demographic groups using two thresholds $ks(0)$ and $ks(1)$. This is similar in form to the optimal fair classifier in vanilla classification problem \cite{chen2023post, cruz2023unprocessing}. However, the rejection function does not merely threshold the scores for different groups, but adds an input-dependent threshold $ks(A)\Pr(M=1|x)$ to the unconstrained deferral system scores. 

    It is important to note that although we have a thresholding rule for the classifier, the thresholds are not necessarily the same as of isolated classifier under fairness criteria. Furthermore, the deferral rule is dependent on the thresholds that we use for the classifier. Therefore, we cannot train the classifier for a certain demographic parity and a rejection function in two independent stages. This further affirms the lack of compositionality of algorithmic fairness that we discussed earlier in the introduction of this paper.
\end{example}
\begin{example}[L2D with Equality of Opportunity]
    Here, similar to the previous example, we can obtain the embedding function for accuracy and equality of opportunity constraint as $\psi_0(x)=\big[\Pr(Y=0|x), \Pr(Y=1|x), \Pr(Y=M|x)\big]$ and $\psi_1(x)=t(A, 1)\big[0, \Pr(Y=1|x), \Pr(M=1, Y=1|x)\big]$, respectively. Therefore, the characterization of optimal classifier and rejection function using $d$-GNP results in
    \ean{
    h(x)=\left\{\begin{array}{cc}
        1 &  \big(2-kt(A, 1)\big)\Pr(Y=1|x)>1\\
        0 & \big(2-kt(A, 1)\big)\Pr(Y=1|x)<1
    \end{array}
    \right. ,
    }
    and
    \ean{
    r(x)=\left\{\begin{array}{cc}
        1 & \Pr(Y=M|x)\big(1-kt(A, 1)\Pr(M=1|Y=M, x)\big)>\nu(A, x) \\
        0 & \Pr(Y=M|x)\big(1-kt(A, 1)\Pr(M=1|Y=M, x)\big)<\nu(A, x)
    \end{array}
    \right. ,
    }
    for $k\in \Rbb$ and where $\nu(A, x):=\max\{\Pr(Y=0|x),\big(1-kt(A, 1)\big)\Pr(Y=1|x)\}$. Assuming $2-kt(A, 1)$ takes positive values for all choices of $A$, we conclude that the optimal classifier is to threshold positive scores differently for different demographic groups. However, the optimal deferral is a function of probability of positive prediction by human expert.  
\end{example}
\begin{example}[Algorithmic Fairness for Multiclass Classification]
    In addition to addressing the L2D problem, the formulation of $d$-GNP in Theorem \ref{thm: gen_NP} allows for finding the optimal solution in vanilla classification. In fact, for an $L$-class classifier, if we aim to set constraints on demographic parity  $\big|\Pr(\hat{Y}=0|A=0)-\Pr(\hat{Y}=0|A=1)\big|\leq \delta$ or equality of opportunity $\big|\Pr(\hat{Y}=0|Y=0, A=0)-\Pr(\hat{Y}=0|Y=0, A=1)\big|\leq \delta$ on Class $0$, then we can follow similar steps as in Appendix \ref{app: test} to find the embedding functions as
    \ean{
    \psi_{\mathrm{DP}} = s(A)\big[1, 0, \ldots, 0\big],
    }
    and
    \ean{
    \psi_{\mathrm{EO}} = t(A, 0)\big[\Pr(Y=0|x), 0, \ldots, 0\big].
    }
    As a result, since the accuracy embedding function is $\psi_0(x)=\big[\Pr(Y=0|x), \ldots, \Pr(Y=L|x)\big]$, then, by neglecting the effect of randomness, the optimal classifier under such constraints are as
    \ea{
    h_{\mathrm{DP}}(x)=\argmax\big\{\Pr(Y=0|x)-ks(A), \Pr(Y=1|x), \ldots, \Pr(Y=L|x)\big\},
    }
    and
    \ea{
    h_{\mathrm{EO}}(x)=\argmax\big\{\Pr(Y=0|x)\big(1-kt(A, 0)\big), \Pr(Y=1|x), \ldots, \Pr(Y=L|x)\big\}.
    }
    Equivalently, for demographic parity, the optimal classifier includes a shift on the score of Class $0$ as a function of demographic group, and for equality of opportunity, the optimal classifier includes a multiplication of the score of Class $0$ with a value that is a function of demographic group. It is easy to show that under condition of positivity of the multiplied value, these classifiers both reduce to thresholding rules in binary setting.
\end{example}

Note that although Theorem~\ref{thm: gen_NP} characterizes the optimal solution of \eqref{eqn: optim_gen}, it leaves us uninformed regarding parameters $k_1, \ldots, k_m$, and further does not give us the form of the optimal solution when $\psi_{0}(x)-\sum_{i=1}^m k_i \psi_i(x)$ has more than one maximizer. In the following theorem, we address these issues for the case that we have a single constraint.

\begin{theorem}[$d$-GNP with a single constraint]\label{thm: single_const}
    The optimal solution \eqref{eqn: optimal_pred} of the optimization problem \eqref{eqn: optim_gen} with one constraint
    is equal to
    $
    f^*_{k, p}(x)=\tau\big(\psi_0(x)-k\psi_1(x), x\big)
    $
    where $\tau$ is a member of $\Tcal_{d}$ such that if there is a non-singleton set $\Ical$ of maximizers of a vector $\yv\in \Rbb^d$, then we have $\big(\tau(\yv, x)\big)(i)=p$  and $\big(\tau(\yv, x)\big)(j)=1-p$, where $i$ and $j$ are the first indices in $\Ical$ that minimizes $\psi_1(x)$, and maximizes $\psi_0(x)$, respectively.
 
    In this case, $k$ is a member of the set
    $
    \Kcal = \Big\{t:\, \delta\in \big[\lim_{\tau\uparrow t}C(\tau),
    C(t)
    \big]
    \Big\}
    $
    where
    $
    C(t)=\Ebb\big[\langle f^*_{t, 0}(x), \psi_0(x)\rangle\big]
    $ is the expected constraint of the predictor $f^*_{t, 0}$. Moreover, 
 $p=\frac{C(k)-c}{C(k)-\lim_{\tau\uparrow t^{-}}C(\tau)}$, if $C(\cdot)$ is lower-discontinuous at $k$, and otherwise $p=0$.

\end{theorem}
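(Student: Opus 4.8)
The plan is to start from the general characterization already in hand. By Theorem~\ref{thm: gen_NP} specialized to a single constraint ($m=1$), any optimal solution of \eqref{eqn: optim_gen} that meets the constraint tightly must have the form $f^*(x)=\tau\big(\psi_0(x)-k\psi_1(x),x\big)$ for some $k\ge 0$ and some $\tau\in\Tcal_d$. Thus the only quantities left to pin down are the multiplier $k$ and the behaviour of $\tau$ on ties, i.e.\ the probability $p$. I would organize the whole argument around the scalar function $C(t):=\Ebb\big[\langle f^*_{t,0}(x),\psi_1(x)\rangle\big]$, the constraint value attained by the Lagrangian-optimal predictor with multiplier $t$ and the $p=0$ tie-break, and reduce the theorem to understanding how $C$ sweeps across the target level $\delta$.

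First I would show that $C$ is non-increasing by a revealed-preference argument: for $t_1<t_2$, write down the pointwise optimality of $f^*_{t_1,0}(x)$ for $\psi_0-t_1\psi_1$ and of $f^*_{t_2,0}(x)$ for $\psi_0-t_2\psi_1$, add the two inequalities, cancel the $\psi_0$ terms, and obtain $(t_2-t_1)\langle f^*_{t_1,0},\psi_1\rangle\ge(t_2-t_1)\langle f^*_{t_2,0},\psi_1\rangle$ pointwise; integrating gives $C(t_1)\ge C(t_2)$. Next I would analyze the jumps of $C$: a jump at $t$ can only arise from a positive-measure set of $x$ at which $\psi_0(x)-t\psi_1(x)$ has a non-singleton maximizer set $\Ical(x)$, and the $p=0$ convention, which places all mass on the $\psi_0$-maximizer within $\Ical(x)$, makes $C$ one-sided continuous so that $C$ only drops, to $\lim_{\tau\downarrow t}C(\tau)$, as $t$ increases through the jump. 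This identifies $\Kcal$ as exactly the set of crossing points of $C$ through $\delta$: either a continuity point with $C(k)=\delta$, or a single jump location with $\delta$ lying between $\lim_{\tau\downarrow k}C(\tau)$ and $C(k)$. Existence of such a $k$ in $[0,\infty)$ follows from the interior-point hypothesis on $\delta$ inherited from Theorem~\ref{thm: gen_NP}, which forces $\delta$ to lie strictly between the unconstrained value $C(0)$ and $\lim_{t\to\infty}C(t)$.

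It remains to tune $p$ at a jump location $k$ so that the constraint is met exactly, and this is the step that forces the specific choice of the indices $i$ and $j$. On the tie set $\Ical(x)$ every index yields the same value of $\psi_0-k\psi_1$, so every probability vector supported on $\Ical(x)$ is a pointwise maximizer; hence any such $\tau$ keeps $f^*_{k,p}$ Lagrangian-optimal and, by the $k\ge 0$ clause of Theorem~\ref{thm: gen_NP}, optimal for \eqref{eqn: optim_gen}. Because the tie forces $\psi_0^a=v(x)+k\psi_1^a$ for all $a\in\Ical(x)$, for $k>0$ the $\psi_0$-maximizer $j$ coincides with the $\psi_1$-maximizer while $i$ is the $\psi_1$-minimizer; moving weight $p$ from $j$ to $i$ therefore sweeps $\langle f(x),\psi_1(x)\rangle$ linearly across the \emph{entire} tied range of $\psi_1$. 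Aggregating over $x$, the constraint value of $f^*_{k,p}$ equals $(1-p)\,C(k)+p\lim_{\tau\downarrow k}C(\tau)$, and setting this equal to $\delta$ yields $p=\big(C(k)-\delta\big)/\big(C(k)-\lim_{\tau\downarrow k}C(\tau)\big)$, which lies in $[0,1]$ precisely on $\Kcal$; at a continuity point the denominator vanishes and we take $p=0$.

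I expect the main obstacle to be the measure-theoretic core of this last step: proving that varying $p$ really realizes every intermediate constraint value between the two one-sided limits of $C$ at $k$. This needs a careful intermediate-value argument over the positive-measure tie set, handling ties among more than two indices, where one must show that the two extreme-$\psi_1$ members, identified here as $i$ and $j$, already span the full achievable range, and confirming that contributions from the non-tied $x$ stay frozen as $p$ varies so that the aggregate is genuinely the stated affine function of $p$. Establishing the precise one-sided continuity of $C$ under the $p=0$ tie-break, and reconciling it with the direction of the interval defining $\Kcal$, is the delicate bookkeeping that underlies the entire characterization.
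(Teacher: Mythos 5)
Your plan follows essentially the same route as the paper: everything is organized around the monotone scalar function $C$, the set $\Kcal$ is identified as the crossing locus of $C$ through $\delta$ (with existence of a non-negative $k$ from the interior-point hypothesis and the identification of $k=0$ with the unconstrained optimum), the randomization $p$ interpolates affinely between the two extreme tie-breaks on the positive-measure tie set, and optimality is concluded from the sufficiency clause of Theorem~\ref{thm: gen_NP} once the constraint is tight. This is exactly the paper's two-step argument built on Lemma~\ref{lem: semicont}. Two caveats. First, the ``measure-theoretic core'' you explicitly defer \emph{is} Lemma~\ref{lem: semicont}, and it is where most of the paper's work lives: proving one-sided continuity of $C$ and identifying its one-sided limits with the two extreme tie-break predictors requires controlling the tie sets as $k$ varies, which the paper does via a shrinking-interval argument (Lemma~\ref{lem: cont}) together with a pigeonhole bound showing that the total mass of ``essential'' ties over all values of $k$ is at most $2^{d}$; your proposal names this obstacle correctly but supplies no argument for it. Second, you consistently use the right limit $\lim_{\tau\downarrow k}C(\tau)$ where the theorem and the paper's proof use the left limit $\lim_{\tau\uparrow k}C(\tau)$. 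This traces to reading the statement literally ($p=0$ placing all mass on the $\psi_0$-maximizer $j$), whereas the paper's proof defines $f^*_{k,0}$ in \eqref{eqn: optimal_up} as placing all mass on the $\psi_1$-minimizer, under which $C$ is right-continuous and the jump --- hence the limit appearing in the formula for $p$ --- sits on the left. The two conventions yield the same set $\Kcal$ and the same mixture, so your derivation is internally consistent, but your closed form for $p$ does not literally match the one stated in the theorem; you should reconcile the tie-break convention before finalizing.
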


This theorem reduces the complexity of finding $k_i$s from the complexity of an exhaustive search to the complexity of finding the root of the monotone function $C(t)-\delta$ (see Lemma~\ref{lem: semicont} for the proof of monotonicity), and further finds the randomized response for the cases that Theorem~\ref{thm: gen_NP} leaves undetermined.

Before we proceed to the designed algorithm based on $d$-GNP, we should address two issues. Firstly, during the course of optimization, it can occur that the solution of Theorem~\ref{thm: gen_NP} does not compute non-negative values $k_i$ for an $i\in [1:m]$. This means that the constraints are not achieved tightly in the final solution of \eqref{eqn: optim_gen}. Therefore, we are able to achieve the optimal solution with the constraint $\delta'_i< \delta_i$. Now, if we can assure that the constraint tuples are still inner points of $\Fcal$ when we substitute $\delta_i$ by $\delta'_i$, then Theorem~\ref{thm: gen_NP} shows that \eqref{eqn: optimal_pred} is still an optimal solution to \eqref{eqn: optim_gen}.

Secondly, for tackling various objectives that are defined in Section \ref{sec: problem_setting}, we usually need to upper- and lower-bound a performance measure by $\delta$ and $-\delta$. However, since both bounds cannot hold tightly and simultaneously unless the tolerance is $\delta=0$, then we can use only one of the constraints in turn and apply the result of Theorem~\ref{thm: single_const} and check whether the constraint is active in the final solution. 

In the next section, we design an algorithm based on these results and show its generalization to the unseen data.

\section{Empirical $d$-GNP and its Statistical Generalization} \label{sec: gen}

In previous sections, we obtained the optimal solution to the constrained optimization problem \eqref{eqn: optim_gen} using $d$-GNP. 
In this section, we propose a plug-in method in Algorithm \ref{alg:bayes} and tackle the generalization error of the objective and constraints based on this solution.
%
%
The results in this section, which are extensions of the generalization results for Neyman-Pearson \cite{audibert2007fast, tong2013plug}, address single constraint setting, and the extension to the multiple constraint case is left for future research.

We start this section by the following theorem that shows if the solution to our plug-in method meets constraints of the optimization problem on training data, this generalizes to the unseen data.

\begin{algorithm}[t]
\caption{Finding Optimal Classifier and Rejection Function}
\label{alg:bayes}
\begin{algorithmic}[1]
\Require The formulation of $\ell_{\mathrm{def}}(\cdot, \cdot, \cdot)$ and $\{\Psi_i(\cdot, \cdot, \cdot)\}_{i=1}^m$, and the datasets $\Dcal_{\mathrm{train}} = \{(x^i, a^i, m^i, y^i)\}_{i=1}^{n_{\text{train}}}$, $\Dcal_{\mathrm{val}}=\{(x^i, a^i, m^i, y^i)\}_{i=n_{\text{train}}+1}^{n_{\text{train}}+n_{\text{val}}}$, and tolerances $\{\delta_i\}_{i=1}^m$
\Ensure Optimal deferral rule $r^*(x)$ and classifier $h^*(x)$\\
{\bfseries Parameters:} $\ep=1e-8$
\Procedure{ConstrainedDefer}{$\ell_{\mathrm{def}}$, $\{\Psi_i\}_{i=1}^{m}$, $\Dcal_{\mathrm{train}}$, $\Dcal_{\mathrm{val}}$}
    \State Obtain closed-form of $\{\psi_i(x)\}_{i=0}^{m}$ using $\ell_{\mathrm{def}}$ and $\Psi_i$s via \eqref{eqn: def_test} and in terms of the scores as in Table \ref{tab:test}
    \State Estimate the scores in Table \ref{tab:test} using classification/regression methods on $\Dcal_{\text{train}}$
    \State Find estimate $\{\hat{\psi}_i\}_{i=0}^{m}$ using estimated scores in previous step and closed-form of Step 3
    \If{$m=2$}
        \State Define routine $\hat{f}_{k, p}(x):=\tau\big(\hat{\psi}_0(x)-k\hat{\psi}_1(x), x\big)$ for $\tau$ in Theorem \ref{thm: single_const}
        \State Define routine $\hat{C}(t):=\widehat{\Ebb}_{\Dcal_{\mathrm{val}}}\Big[ \langle \hat{f}_{k, 0}(x_i), \hat{\psi}_1(x_i)\rangle\Big]$
        \State Find $\hat{k}=\min k$ over the feasibility set $\hat{C}(t)\leq \delta_1$
        \If{$\hat{k}=\emptyset$}
            \State \textbf{Return} `Not Feasible'
        \Else
            \If{$\hat{C}(\hat{k}-\ep)-\hat{C}(k^*)\leq 1e-3$}
                \State $\hat{p} \leftarrow 0$
            \Else
                \State $\hat{p} \leftarrow \frac{\delta - \hat{C}(\hat{k})}{\hat{C}(\hat{k}-\ep)-\hat{C}(\hat{k})}$
            \EndIf
        \EndIf
        \State $s(x):=\hat{f}_{\hat{k}, \hat{p}}(x)$
    \Else
    \State Optimize \eqref{eqn: optim_gen} for $\Dcal_{\mathrm{val}}$ and for $f(x)=\tau(\hat{\psi}_{0}(x)-\sum_{i=1}^{m}\hat{\psi}_i(x), x)$ for $\tau$ as in Theorem \ref{thm: gen_NP} and via exhaustive search over $\{k_1, \ldots, k_m\}$ and randomizations of $\tau$ and find $s(x):=\hat{f}(x)$
    \EndIf
    \State $h^*(x):=\argmax_{i\in[0:L-1]} s_i(x)$
    \State $r^*(x):=\argmax_{i\in\{0, 1\}}\big[s_{h^*(x)}(x), s_{L}(x)\big]$
    \State \textbf{Return} $h^*(x), r^*(x)$
\EndProcedure
\end{algorithmic}
\end{algorithm}

\begin{theorem}[Generalization of the Constraints] \label{thm: constraint_gen}
    For the approximation of the Neyman-Pearson solution $\hat{f}_{\hat{k}, \hat{p}}(x)$ in Algorithm \ref{alg:bayes} such that
    $
    \Ebb_{S^n}\big[\langle \hat{f}_{\hat{k}, \hat{p}}(x), \hat{\psi}_1(x)\rangle\big]\leq \delta
    $, if we assume that embedding functions are bounded, then
    for $d_n(\ep)\simeq O(\frac{\sqrt{\log n}+\sqrt{\log 1/\ep}}{\sqrt{n}})$ and $S^n\sim \mu$ we have
    $
    \Ebb_{\mu}\big[\langle \hat{f}_{\hat{k}, \hat{p}}(x), \psi_1(x)\rangle\big]\leq \delta+d_n(\ep)$
    with probability at least $1-\ep$.
\end{theorem}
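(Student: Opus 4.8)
The plan is to reduce the statement to a uniform law of large numbers over the one-parameter family of plug-in predictors, since the only reason a naive pointwise concentration bound fails is that the selected parameters $\hat{k},\hat{p}$ are chosen from the \emph{same} validation sample $S^n$ on which the constraint is evaluated. First I would condition on the training set $\Dcal_{\mathrm{train}}$, so that the estimated embeddings $\hat\psi_0,\hat\psi_1$ become fixed (deterministic) functions of $x$, while $S^n=\Dcal_{\mathrm{val}}$ remains an i.i.d.\ sample independent of them. Writing $g_{k,p}(x):=\langle \hat f_{k,p}(x),\hat\psi_1(x)\rangle$ and $\widehat{\Ebb}_{S^n}$ for the empirical average over $S^n$, the feasibility hypothesis reads $\widehat{\Ebb}_{S^n}[g_{\hat k,\hat p}]\le\delta$ and the target is $\Ebb_\mu[g_{\hat k,\hat p}]\le\delta+d_n(\ep)$. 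The decomposition $\Ebb_\mu[g_{\hat k,\hat p}]\le \widehat{\Ebb}_{S^n}[g_{\hat k,\hat p}]+\sup_{k,p}\big(\Ebb_\mu[g_{k,p}]-\widehat{\Ebb}_{S^n}[g_{k,p}]\big)$ then shows it suffices to bound the supremum of the empirical process indexed by $(k,p)$ by $d_n(\ep)$ with probability at least $1-\ep$.

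The key step is to show that the class $\mathcal{G}=\{g_{k,p}:k\in\Rbb,\,p\in[0,1]\}$ has complexity (VC/pseudo-dimension) bounded by a quantity depending only on $d$. Here I would exploit the explicit thresholding form of $\hat f_{k,p}$ from Theorem~\ref{thm: single_const}: for fixed $x$ each coordinate of $\hat\psi_0(x)-k\hat\psi_1(x)$ is affine in $k$, so the winning $\argmax$ index is the upper envelope of $d$ lines and changes at most $\binom{d}{2}$ times. Equivalently, ``coordinate $j$ wins at parameter $k$'' is the intersection over $i\ne j$ of the sets $\{x:(\hat\psi_0^j-\hat\psi_0^i)(x)\ge k(\hat\psi_1^j-\hat\psi_1^i)(x)\}$, each of which belongs to a one-parameter family of half-spaces in the fixed two-dimensional feature $x\mapsto(\hat\psi_0^j-\hat\psi_0^i,\hat\psi_1^j-\hat\psi_1^i)$ and hence has VC dimension at most two. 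Standard closure results for VC classes under finite intersections and unions over the $d$ indices then give a pseudo-dimension of order $O(d\log d)$ for $\mathcal{G}$. The randomization parameter $p$ only reweights the two active coordinates on the (generically measure-zero) tie set, so $g_{k,p}$ is affine in $p$ and the supremum is attained at $p\in\{0,1\}$; thus $p$ does not inflate the complexity.

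With the complexity bound in hand, I would invoke a standard uniform deviation inequality for bounded function classes — boundedness of $g_{k,p}$ by some $B$ follows from the assumed boundedness of the embeddings — e.g.\ a VC/Rademacher bound combined with a bounded-differences tail. Treating the $d$-dependent pseudo-dimension $V$ as constant, this yields $\sup_{k,p}|\widehat{\Ebb}_{S^n}[g_{k,p}]-\Ebb_\mu[g_{k,p}]|\le C\sqrt{(V\log n+\log(1/\ep))/n}\le d_n(\ep)$ with probability at least $1-\ep$, which is exactly the advertised rate $O((\sqrt{\log n}+\sqrt{\log 1/\ep})/\sqrt n)$. Plugging the realized $(\hat k,\hat p)$ into this uniform bound and combining with the feasibility hypothesis gives $\Ebb_\mu[\langle\hat f_{\hat k,\hat p},\hat\psi_1\rangle]\le\delta+d_n(\ep)$. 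Passing from the estimated embedding $\hat\psi_1$ to the true $\psi_1$ in the conclusion introduces only the extra term $\Ebb_\mu[\langle\hat f_{\hat k,\hat p},\psi_1-\hat\psi_1\rangle]$, which is controlled by the accuracy $\ep'$ of the learned scores and is either taken as exact here or folded into the companion objective-generalization estimate.

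The part I expect to be the main obstacle is the complexity bound: carefully arguing that the $d$-way $\argmax$ structure, parameterized by a single threshold $k$ together with the tie-breaking randomization $p$, yields a function class whose dimension depends only on $d$ (and not on $n$ or the ambient feature space), and doing the bookkeeping for ties so that the realized data-dependent predictor is genuinely a member of the class to which the uniform bound applies. Everything downstream — boundedness, the uniform inequality, and the final plug-in step using the feasibility hypothesis — is then routine.
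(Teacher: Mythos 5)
Your proposal is correct and follows essentially the same route as the paper: the paper likewise reduces the claim to a uniform deviation bound over the one-parameter family $\{ \hat f_{k,p}\}$, bounds the complexity of the induced function class by observing that each coordinate's ``winning'' event is a product of sign functions affine in $k$ (its Lemma on products of signs gives a constant VC dimension per factor, and a product-of-classes bound gives $O(d\log d)$ overall), handles the tie-breaking parameter by the linear decomposition $E(k,p)\le pE(k,1)+(1-p)E(k,0)$ exactly as you suggest, and concludes via a Rademacher/VC uniform convergence inequality before plugging in the realized $(\hat k,\hat p)$. The step you flag as the main obstacle --- the combinatorial bookkeeping for the $\arg\max$ structure and ties --- is indeed where the paper spends most of its effort, and your sketch of it matches the paper's argument.
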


In the above theorem, we show that the optimal empirical solution for the constraint, probably and approximately satisfies the constraint on true distribution. Therefore, if we assume that we have an approximation $\hat{\psi}_1(x)$ in hand where $\|\hat{\psi}_1(x)-\psi_1(x)\|_{\infty}\leq \ep'$ with high probability, this theorem together with H\"older's inequality shows that we need to assure $\Ebb_{S^n}\big[\langle \hat{f}_{\hat{k}, \hat{p}}(x), \hat{\psi}_1(x)\rangle\big]\leq \delta - d_n -\ep'$ to achieve the corresponding generalization with high probability.

Next, we ask whether the objectives of the empirical optimal solution and the true optimal solution are close. We answer to this question positively in the following theorem. First, however, let us define the notions of ($\gamma$, $\Delta$)-sensitivity condition as the following. This is an extension to detection condition in \cite{tong2013plug} and assumes that changing the parameter in predictor leads to a detectable change in constraints.

\begin{definition}
    For an embedding function $\psi_1$, and a distribution $\mu_{X}$ on $\Xcal$, we refer to a function $r_{k}(x)$ as a prediction with ($\gamma$, $\Delta$)-sensitivity around $k$, if there exists $C\in \Rbb^{+}$ such that for all $\delta\in (0, \Delta]$ we have
    \ea{
    \Big|\Ebb_{\mu_X}\big[\langle r_{k}(x) - r_{k+\delta}(x), \psi_1(x)\rangle\big]\Big|\geq C\delta^{\gamma}.
    }
\end{definition}

Now, we express the following generalization theorem for predictors that address the above conditions:
\begin{theorem}[Generalization of Objective]\label{thm: gen_obj}
Assume that $(\delta-\ep_l, \delta+\ep_u)$ is a subset of of all achievable constraints $\Ebb\big[\langle f(x), \psi_1(x)\rangle\big]$, and that  $\|\psi_i(x)\|_{\infty}\leq 1$ for $i=1, 2$. Further, let the size $n$ of validation data  be large enough such that $d_n(\delta/3)\leq \frac{\ep_l}{2}$. Now, if the optimal predictor $f^*_{k, 0}(x)$ is ($\gamma$, $\Delta$)-sensitive around optimal $k^*$ for $\Delta\geq \frac{\Big(2\max\{d_n(\delta/3), \delta_1\}+\sqrt{2\gamma C(\delta_0+K\delta_1)}\Big)}{C}^{1/\gamma}$ and $\gamma\leq 1$, then for $n\geq \frac{16}{\ep_l^2}\log \frac{3}{\delta}$, and with probability at least $1-\delta$, the optimal empirical classifier, as of Algorithm \ref{alg:bayes} has an objective that is close to the true optimal objective as
\ea{
\Ebb\big[\langle f^*_{k^*, p^*}(x), \psi_0(x)\rangle \big] - \Ebb\big[\langle \hat{f}_{\hat{k}, \hat{p}}(x), \psi_0(x)\rangle \big] \leq &2\Big(\frac{2\max\{d_n(\delta/3), \delta_0\}}{C}\Big)^{1/\gamma}+4\sqrt{\frac{2(\delta_0+K\delta_1)}{\gamma C}} \nonumber\\&+ 2(\delta_0+K\delta_1) + 2Kd_n(\delta/3),
}  
where $K$ is an upper-bound to the absolute value of $k^*$.
\end{theorem}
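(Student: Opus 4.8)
The plan is to adapt the plug-in Neyman--Pearson generalization argument of \cite{tong2013plug, audibert2007fast} to the $d$-dimensional, score-estimated setting by splitting the excess objective $\Ebb[\langle f^*_{k^*, p^*}, \psi_0\rangle] - \Ebb[\langle \hat{f}_{\hat{k}, \hat{p}}, \psi_0\rangle]$ into a \emph{threshold-mismatch} part (replacing $k^*$ by $\hat{k}$ while keeping the true scores) and a \emph{score-estimation} part (replacing $\psi$ by $\hat{\psi}$). The backbone of both bounds is the pointwise Lagrangian optimality of the $d$-GNP predictor: for every fixed $k$, the map $f^*_{k}(x) = \tau(\psi_0(x) - k\psi_1(x), x)$ maximizes $\langle f(x), \psi_0(x) - k\psi_1(x)\rangle$ over the simplex at each $x$. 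Consequently, for the true multiplier $k^*$ and any competitor $f$ one has $\Ebb[\langle f^*_{k^*} - f, \psi_0\rangle] \geq k^*(\Ebb[\langle f^*_{k^*}, \psi_1\rangle] - \Ebb[\langle f, \psi_1\rangle])$, and the analogous inequality holds at $\hat{k}$. Subtracting these lets me control the objective gap by the \emph{constraint} gap $|C(\hat{k}) - C(k^*)|$ up to a factor $K \geq |k^*|$ (which, on the relevant event, also bounds $|\hat{k}|$).

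First I would establish a high-probability bound on the threshold deviation $|\hat{k} - k^*|$. Combining empirical feasibility $\hat{C}(\hat{k}) \leq \delta$, the constraint-generalization Theorem~\ref{thm: constraint_gen} (which transfers the empirical constraint to $\mu$ at the price of $d_n(\delta/3)$), and the score-accuracy hypotheses $\|\hat{\psi}_i - \psi_i\|_\infty \leq \delta_i$ through H\"older's inequality, I obtain $|C(\hat{k}) - C(k^*)| = |C(\hat{k}) - \delta| \lesssim d_n(\delta/3) + \delta_0 + K\delta_1$. Here the monotonicity of $C(\cdot)$ (Lemma~\ref{lem: semicont}) guarantees that $\hat{k}$ is well-defined and that constraint values order consistently with multipliers. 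Inverting this through the $(\gamma, \Delta)$-sensitivity of $f^*_{k,0}$ around $k^*$, which gives $C\,|\hat{k} - k^*|^{\gamma} \leq |C(\hat{k}) - C(k^*)|$ on the range $|\hat{k} - k^*| \leq \Delta$, yields $|\hat{k} - k^*| \leq (|C(\hat{k})-C(k^*)|/C)^{1/\gamma}$. The hypothesis on $\Delta$ is precisely what keeps $|\hat{k} - k^*|$ inside the range where this sensitivity lower bound is valid, so the inversion is only legitimate once the preceding constraint-gap bound is in force.

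Next I would convert the threshold and score deviations into the stated objective bound. For the threshold-mismatch part, rather than the crude $\Ebb[\langle f^*_{k^*} - f^*_{\hat{k}}, \psi_0\rangle] \leq K|C(\hat{k}) - C(k^*)|$, I would bound the objective gap on the region where $f^*_{k^*}$ and $f^*_{\hat{k}}$ select different argmax coordinates: on such ``flip'' points the two competing coordinates of $\psi_0 - k^*\psi_1$ coincide at some intermediate multiplier, so they differ by $O(|\hat{k} - k^*|)$ and the local objective loss is $O(|\hat{k} - k^*|)$, which after plugging the sensitivity-inverted bound contributes the term $2(2\max\{d_n(\delta/3),\delta_0\}/C)^{1/\gamma}$. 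For the score-estimation part, I write $\psi_0 = (\psi_0 - \hat{k}\psi_1) + \hat{k}\psi_1$ and use that $\hat{f}_{\hat{k}}$ and $f^*_{\hat{k}}$ are the $\tau$-argmaxes of two vectors that are $\|\cdot\|_\infty$-close by $\eta := \delta_0 + K\delta_1$; the predictors agree except on an $\eta$-margin region, whose objective contribution splits into a direct $2\eta = 2(\delta_0 + K\delta_1)$ term and a term controlled by the $\psi_1$-mass of that region, while the residual Lagrangian slack from constraint concentration contributes $2K\,d_n(\delta/3)$.

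The step I expect to be the main obstacle is producing the square-root term $4\sqrt{2(\delta_0 + K\delta_1)/(\gamma C)}$, i.e., bounding the objective contribution of the $\eta$-margin region created by the score error. A naive estimate is only linear in $\eta$; to obtain the $\sqrt{\eta}$ rate I would introduce an auxiliary buffer width $w$ around the decision boundary, bound the flip contribution by a term linear in $w$ plus a term of order $\eta/w$, and optimize over $w$ by AM--GM, which produces $\sqrt{\eta}$ and injects the factor $1/\gamma$ through the sensitivity-controlled measure of the buffer. Making this width argument compatible with $(\gamma,\Delta)$-sensitivity (so the buffer stays inside the validity range $\Delta$) while remaining disjoint from the threshold deviation already consumed above is the delicate balancing act. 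The probabilistic wrapper is routine: each concentration event (constraint transfer and the validation-mean deviation) is controlled at level $\delta/3$, so a union bound over the $O(1)$ bad events gives overall confidence $1-\delta$ once $n \geq \frac{16}{\ep_l^2}\log\frac{3}{\delta}$ and $d_n(\delta/3)\leq \ep_l/2$, exactly as assumed.
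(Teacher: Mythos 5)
Your proposal matches the paper's proof in all essentials: the same Lagrangian decomposition $\Ebb[\langle f^*_{k^*,p^*}-\hat f_{\hat k,\hat p},\psi_0\rangle]=D_{k^*}(f^*_{k^*,p^*},\hat f_{\hat k,\hat p})+k^*\cdot(\text{constraint gap})$, the same control of $|\hat k-k^*|$ by inverting the $(\gamma,\Delta)$-sensitivity through a concentration-plus-score-error bound on the constraint gap, and the same auxiliary-parameter/AM--GM balancing that produces the $\sqrt{2(\delta_0+K\delta_1)/(\gamma C)}$ term (the paper realizes it as the trade-off parameter $T$ in Lemma~\ref{lem: bound_k^*} rather than a buffer width, and converts $|\hat k-k^*|$ into an objective bound via $1$-Lipschitzness of $k\mapsto\max_i\big(\psi_0(x)-k\psi_1(x)\big)(i)$ instead of your flip-point argument, but these are the same estimates). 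The probabilistic wrapper --- union bound at level $\delta/3$ together with the interior-point transfer of Lemma~\ref{lem: Sl_bound} and Lemma~\ref{lem: appr_const} --- is also exactly as you describe.
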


Now that we have proven generalization of our post-processing method, we should briefly compare this to other possible algorithms to learn an approximation of the optimal classifier and rejection function pair. A possible method is to find the appropriate 'defer' or `no defer' value for each instance in the training dataset, and for a given set of constraints. Although these types of in-processing algorithms can perform computationally efficient (e.g., $O(n\log n)$ complexity for $\frac{1}{n}$-suboptimal solution for human intervention budget as shown in Theorem~\ref{theorem:myname}), they do not necessarily generalize to unseen data. In particular, we can show that for all algorithms that estimate \emph{deferral labels} from empirical data, there exist two underlying distributions on the data on which the algorithm results in similar deferral labels, while the optimal rejection functions for these two distributions are not interchangeable. This argument is further formalized in the following proposition:

\begin{proposition}[Impossibility of generalization of deferral labels]
For every deterministic deferral rule $\hat{r}$ for empirical distributions and based on the two losses $\mathds{1}_{m\neq y}$ and $\mathds{1}_{h(x)\neq y}$, there exist two probability measures $\mu_1$ and $\mu_2$ on $\Xcal\times\Ycal\times\Mcal$ such that the corresponding $(\hat{r}, X)$ for both measures is distributed equally. However, the optimal deferral $r^*_{\mu_1}$ and $r^*_{\mu_2}$ for these measures are not interchangeable, that is  
 $L_{\mathrm{def}}^{\mu_i}(h, r^*_{\mu_i})\leq \frac{1}{3}$ while $L_{\mathrm{def}}^{\mu_i}(h, r^*_{\mu_j})= \frac{2}{3}$ for $i=1, 2$ and $j\neq i$.
\end{proposition}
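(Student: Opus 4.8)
The plan is to exhibit a single, explicit pair of measures that simultaneously fools \emph{every} deterministic rule of the stated type, rather than tailoring a construction to each $\hat r$. The key interpretive point is that a deferral rule ``based on the two losses'' can read each training instance only through the scalar deferral label $z_i := \mathds{1}\big[\ell_H(y_i,m_i) < \ell_{AI}(y_i,h(x_i))\big] = \mathds{1}\big[\mathds{1}_{m_i\neq y_i} < \mathds{1}_{h(x_i)\neq y_i}\big]$, i.e. ``the human strictly outperformed the classifier on this instance'' (the standard reduction of L2D to a classification problem). Hence $\hat r$ is measurable with respect to $\{(x_i,z_i)\}_i$, and any two measures inducing the same law of $(X,Z)$ are indistinguishable to it. I would then take a single atom $x_0$ with $\Pr(X=x_0)=1$, binary $\mathcal{Y}=\mathcal{M}=\{0,1\}$, and the fixed classifier $h(x_0)=0$, and specify the conditional law of $(Y,M)$ at $x_0$ by: under $\mu_1$, $(Y,M)\in\{(0,0),(1,1),(1,0)\}$ each with probability $1/3$; under $\mu_2$, $(Y,M)=(1,1)$ with probability $1/3$ and $(Y,M)=(0,1)$ with probability $2/3$.

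The first claim is that these measures are indistinguishable. A direct check shows that under both $\mu_1$ and $\mu_2$ one has $z=1$ exactly on the outcome $(Y,M)=(1,1)$, so $z\sim\mathrm{Bernoulli}(1/3)$ under each. Consequently the i.i.d.\ label sample fed to $\hat r$ has identical law under $\mu_1$ and $\mu_2$; since $\hat r$ is a deterministic function of that sample and $X\equiv x_0$, the pair $(\hat r,X)$ is equal in distribution under the two measures. This holds for \emph{every} such rule at once, which is slightly stronger than the quantifier the proposition requires.

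The second claim is the non-interchangeability, which reduces to arithmetic on a single atom, where $L_{\mathrm{def}}^{\mu}(h,r)$ equals $\Pr(M\neq Y)$ if $r\equiv 1$ and $\Pr(h\neq Y)=\Pr(Y=1)$ if $r\equiv 0$. Under $\mu_1$ one gets $\Pr(M\neq Y)=1/3<\Pr(Y=1)=2/3$, so $r^*_{\mu_1}\equiv 1$ with loss $1/3$, while not deferring costs $2/3$. Under $\mu_2$ one gets $\Pr(M\neq Y)=2/3>\Pr(Y=1)=1/3$, so $r^*_{\mu_2}\equiv 0$ with loss $1/3$, while deferring costs $2/3$. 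The two optimal rules are opposite, so $L_{\mathrm{def}}^{\mu_i}(h,r^*_{\mu_i})=1/3$ and $L_{\mathrm{def}}^{\mu_i}(h,r^*_{\mu_j})=2/3$ for $j\neq i$, matching the statement exactly.

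The main obstacle is conceptual rather than computational: one must identify the correct information restriction and the right ``invisible'' degree of freedom. If $\hat r$ were allowed to see the full loss pair $(\mathds{1}_{M\neq Y},\mathds{1}_{h\neq Y})$, then its conditional law would pin down both $\Pr(M\neq Y\mid x)$ and $\Pr(h\neq Y\mid x)$, hence the optimal deferral, and \emph{no} valid pair $\mu_1,\mu_2$ could exist; so the proof hinges on the fact that the single comparison label conflates ``both agents correct / both wrong'' with ``classifier strictly better,'' discarding precisely the quantity $\Pr(\text{AI better}\mid x)$ that the optimal rule must weigh against the observable $\Pr(\text{human better}\mid x)=\Pr(z=1\mid x)$. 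The remaining delicacy is that the target values $1/3$ and $2/3$ are not free: writing the two inequalities (own loss $\le 1/3$) and two equalities (cross loss $=2/3$) together with the fixed label probability forces $\Pr(z=1)=1/3$ and the stated pattern probabilities essentially uniquely, so the construction is rigid and must be guessed correctly before the verification goes through.
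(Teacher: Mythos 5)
Your construction is correct and verifies the stated numbers exactly for one particular rule, but it does not discharge the universal quantifier in the proposition, and the reason is the interpretive step you flag as the ``key point.'' The paper does not restrict $\hat r$ to be a function of the single comparison bit $z=\mathds{1}\big[\mathds{1}_{m\neq y}<\mathds{1}_{h(x)\neq y}\big]$; it takes $\hat r$ to be any deterministic function of the \emph{pair} of loss bits that is optimal for the empirical distribution. Empirical optimality pins down $\hat r=1$ on the cell (AI wrong, human right) and $\hat r=0$ on (AI right, human wrong), but leaves two free binary parameters $a$ (value on ``both wrong'') and $b$ (value on ``both right''), giving four rules. Your proof handles only $a=b=0$, which is exactly your $z$. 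For the other three completions your pair of measures is \emph{not} indistinguishable: with $h(x_0)=0$, under $\mu_1$ the outcome $(Y,M)=(1,0)$ is a both-wrong cell and $(0,0)$ is a both-right cell, so $\Pr_{\mu_1}(\hat r=1)=\tfrac{1}{3}+\tfrac{a}{3}+\tfrac{b}{3}$, whereas under $\mu_2$ no tie cells occur and $\Pr_{\mu_2}(\hat r=1)=\tfrac{1}{3}$. These agree only when $a=b=0$, so your claim that the single pair ``fools every deterministic rule at once'' fails for, e.g., the rule $\hat r=\mathds{1}_{h(x)\neq y}$ (the $a=1,b=0$ case). The paper's proof instead enumerates the four $(a,b)$ cases and constructs a tailored pair of measures for each; to complete your argument you would need to do the same (or give a uniform construction whose tie-cell masses are matched between $\mu_1$ and $\mu_2$, which your pair does not do).

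Within the case you do cover, your verification is clean and in fact lands precisely on the proposition's advertised values ($1/3$ own loss, $2/3$ cross loss), and your conceptual diagnosis---that the single output bit conflates ``both correct/both wrong'' with ``classifier strictly better'' and thereby discards the quantity the optimal rule must weigh---is the right intuition and matches the paper's. But as written the proof establishes the impossibility claim for one of the four rules the proposition quantifies over, so the gap is the missing case analysis, not the arithmetic.
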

In a nutshell, this proposition implies that, every algorithm that reduces the two-bit data of human accuracy and AI accuracy for an input into a single-bit data of `defer' or `no defer' looses the information that is important for obtaining the optimal rejection function that generalizes to the unseen data.
This is a drawback of in-processing algorithms that are used in multi-objective L2D problems.
We refer the reader to Appendix \ref{app: det_alg} for more details and proof of aforementioned proposition.

\section{Experiments}\label{sec:discussion}
\begin{figure}[t]
    \centering
    \includegraphics[width=0.32\linewidth]{./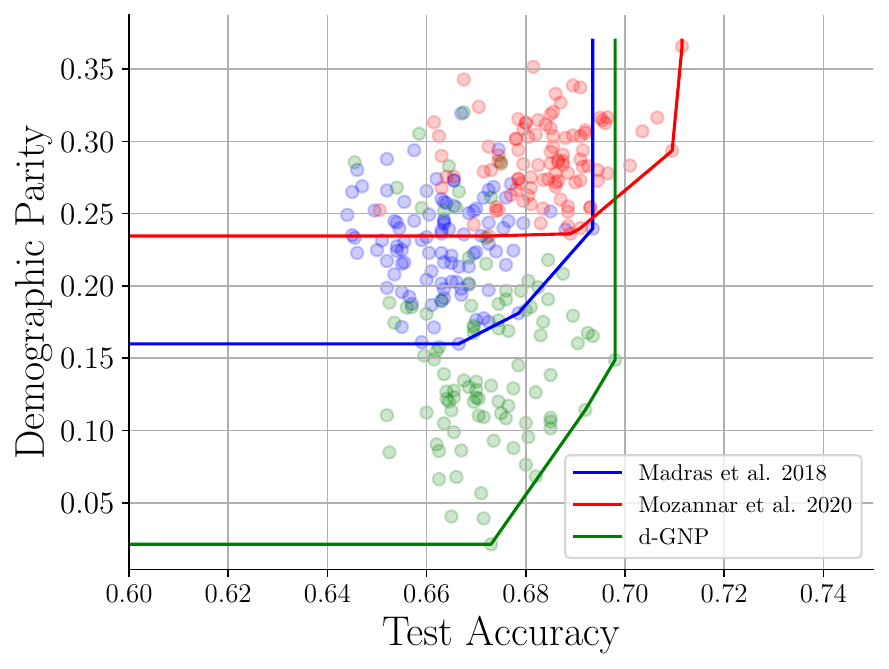}
    \includegraphics[width=0.32\linewidth]{./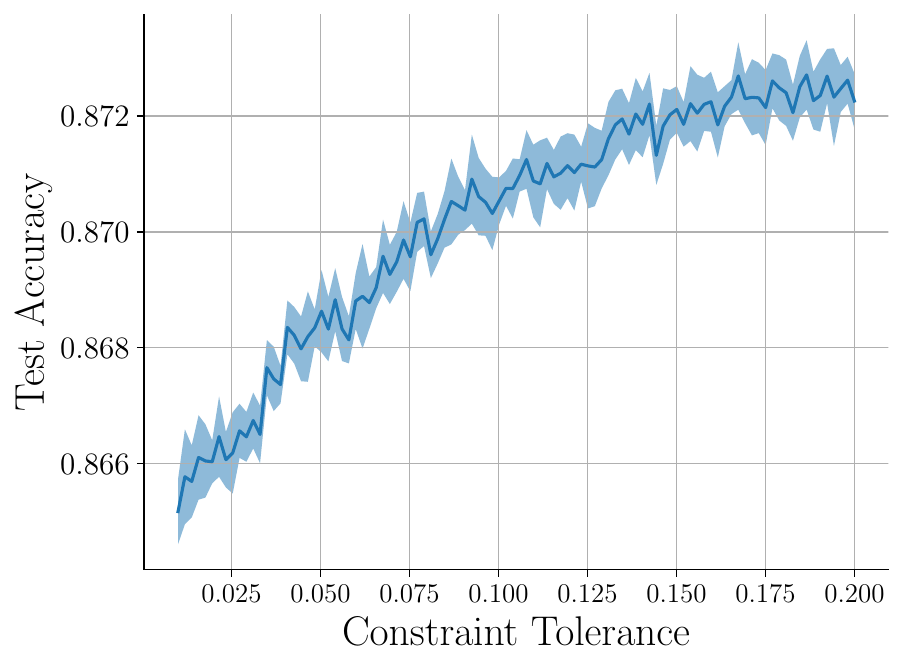}
    \includegraphics[width=0.32\linewidth]{./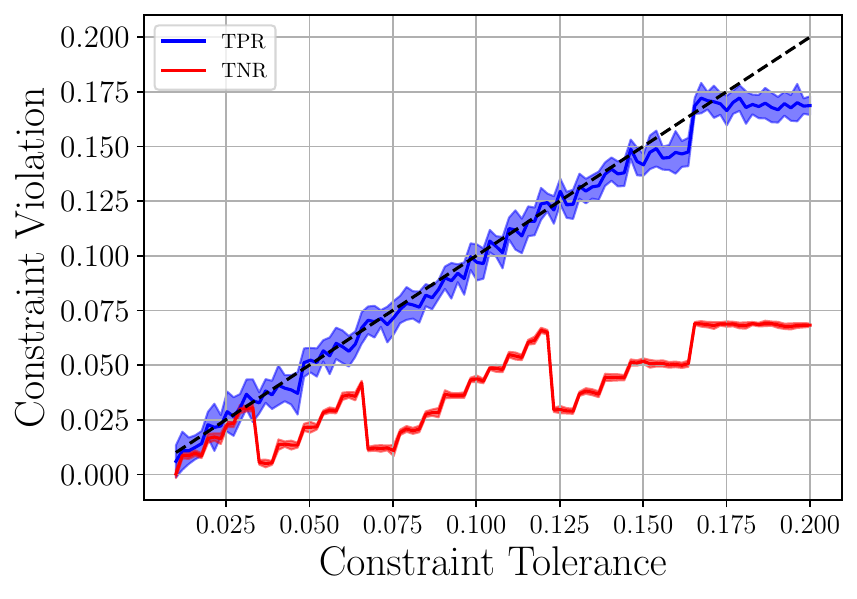}
    \caption{Performance of $d$-GNP on COMPAS dataset (left), and ACSIncome (center and right)}
    \label{fig:compas}
\end{figure}
We implemented
\footnote{The code is available in \url{https://github.com/AminChrs/PostProcess/}. }
Algorithm \ref{alg:bayes}, first for COMPAS dataset \cite{dressel2018accuracy} in which the recidivism rate of 7214 criminal defendants is predicted. The human assessment is done in this dataset on 1000 cases by giving humans a description of the case and asking them whether the defendant would recidivate within two years of their most recent crime.\footnote{This is as opposed to the experiment in \cite{madras2018predict} where the human decision is simulated.} The demographic parity is assessed for two racial groups of white and non-white defendants. Figure \ref{fig:compas} shows the average performance of $d$-GNP over $10$ random seeds compared to two baselines: (1) Madras et al. \cite{madras2018predict} in which a demographic parity regularizer is added to the surrogate loss, and over a variation of $100$ regularizer coefficient, and (2) Mozannar et al. \cite{mozannar2020consistent} in which after training the classifier and rejector pair, we shift the corresponding scores to find a new thresholding rule. All scores, classifiers, and rejection functions are trained on a $1$-layer feed-forward neural network. The figure shows that achieving better fairness criteria is possible using $d$-GNP, while this might not lead to better accuracy when the constraint violation is not of interest.

We further tested our method on \texttt{folktables} dataset \cite{ding2021retiring} that contains an income prediction task based on 1.6M rows of American Community Survey data. Since we had no access to human expert data, we simulated a human expert that has different accuracy on two racial groups of white and non-white individuals (85\% and 60\%, respectively). We considered the L2D problem with bounded equalized odds violation. Figure \ref{fig:compas} shows our method's accuracy and constraint violation, coupled with a confidence bound that is obtained using ten iterations of bootstrapping. This figure shows that violation bounds are accurately met for the test data, and the performance increases when these bounds are loosened. 
\section{Conclusion}
The $d$-GNP is a general framework that obtains the optimal solution to various constrained learning problems, including but not limited to multi-objective L2D problems. Using this post-processing framework, we can first estimate the scores related to our problem and then find a linear rule of these scores by fine-tuning for specific violation tolerances. This method reduces the computational complexity of in-processing methods while guaranteeing achieving a near-optimal solution in a large data regime.

\section{Acknowledgment}
M.A. Charusaie thanks the International Max Planck Research School for Intelligent Systems (IMPRS-IS) for the support and funding of this project. He is further grateful to Matthäus Kleindessner for his significant intellectual contributions to the first draft of this paper. The idea of obtaining an extension to the Neyman-Pearson lemma emerged from discussions with André Cruz and Florian Dorner. The very initial draft of this paper was written during an hours-long train delay in Germany, and thus, M.A. Charusaie is thankful to Deutsche Bahn in that regard.



\bibliographystyle{plain}
\bibliography{main.bib}

\appendix
\addtocontents{toc}{\protect\setcounter{tocdepth}{2}}
\renewcommand{\contentsname}{Content of Appendices}
\tableofcontents

\section{Lack of Compositionality of Fairness Criteria}
\label{app: compose}
Here, we show an example of lack of compositionality of fairness criteria for learn-to-defer problems. This falls in line with \cite{dwork2018fairness}, where the authors studied the effect of the operators such as `OR' or `AND'. Here, we show that a similar non-compositionality holds for the operator `DEFER'. The following example is found based on the insight that a fair predictor is fair over all the space $\Xcal$, and if it could take a decision over only a subset of $\Xcal$ it will not necessarily be a fair predictor. This can be seen as a particular application of Yule's effect \cite{ruggieri2023can} which explains that vanishing correlation in a mixture of distributions does not necessarily concludes vanishing correlation on each of such distributions.

Let us assume that the space $\Xcal$ contains only four points $x_1$, $x_2$, $x_3$, and $x_4$, and that the input takes these values with probability $\Pr(X=x_1)=\Pr(X=x_2)=\Pr(X=x_3)=\Pr(X=x_4)=\frac{1}{4}$. The first two points $x_1, x_2$ are corresponded to the demographic group $A=0$ and the last two points are corresponded to the demographic group $A=1$. Further, assume that the conditional target probability is $\Pr(Y=1|x_1)=\Pr(Y=1|x_2)=\Pr(Y=1|x_3)=\Pr(Y=1|x_4)=1$.  Moreover, we consider the equality of opportunity as the measure of fairness. Now, assume that the classifier $h(\cdot):\Xcal\to\{0, 1\}$ is taking values $h(x_1)=1$, $h(x_2)=0$, $h(x_3)=1$, and $h(x_4)=0$ and the human decision maker predicts $M=0$ conditioned on $x_1$, $M=1$ conditioned on $x_2$, and $M=1$ conditioned on $x_3$, and $M=0$ conditioned on $x_4$. Therefore, both classifier and human expert have accuracy of $\frac{1}{2}$ on the data.

Following the above assumptions, we can find the fairness measure for classifier as
\ea{
\Pr&(h(X)=1|Y=1, A=0)-\Pr(h(X)=1|Y=1, A=1)\nonumber\\=& \Pr(h(X)=1|Y=1, A=0, X=x_1)\Pr(X=x_1|Y=1, A=0) \nonumber\\&+ \Pr(h(X)=1|Y=1, A=0, X=x_2)\Pr(X=x_2|Y=1, A=0)\nonumber\\
&- \Pr(h(X)=1|Y=1, A=1, X=x_3)\Pr(X=x_3|Y=1, A=1)\nonumber\\
&-\Pr(h(X)=1|Y=1, A=1, X=x_4)\Pr(X=x_4|Y=1, A=1)
= \frac{1}{2}+0-\frac{1}{2}-0=0,
}
which means that the classifier is fully fair. We can derive a similar result for the human expert, i.e.,
\ea{
\Pr&(M=1|Y=1, A=0)-\Pr(M=1|Y=1, A=1)=0.
}
Now that we established a fair classifier and a fair expert, we take the step to find an optimal deferral solution, i.e., a deferral system that minimizes the overall loss. We can observe that for $x_1$ the classifier is accurate, while for $x_2$ the human expert is accurate. Furthermore, for $x_3$ and $x_4$ they both are equally inaccurate. Therefore, an optimal solution is not to defer for $x_1$, and defer for $x_2$, and take an arbitrary decision for $x_3$ and $x_4$. Now, if we find the fairness measure of the resulting deferral predictor, we have
\ea{
\Pr&(\hat{Y}=1|Y=1, A=0)-\Pr(\hat{Y}=1|Y=1, A=1)\nonumber\\=& \Pr(h(X)=1|Y=1, A=0, X=x_1)\Pr(X=x_1|Y=1, A=0) \nonumber\\&+ \Pr(M=1|Y=1, A=0, X=x_2)\Pr(X=x_2|Y=1, A=0)\nonumber\\
&- \Pr(h(X)=1|Y=1, A=1, X=x_3)\Pr(X=x_3|Y=1, A=1)\nonumber\\
&-\Pr(h(X)=1|Y=1, A=1, X=x_4)\Pr(X=x_4|Y=1, A=1)
= \frac{1}{2}+\frac{1}{2}-\frac{1}{2}-0=\frac{1}{2},
}
or equivalently the resulting predictor is unfair for the demographic group $A=1$. This means the `DEFER' composition of the predictors does not preserve fairness. One can further easily show that no deferral system from the above classifier and human expert that has the accuracy better than $\frac{1}{2}$ is fair.
\section{Extended Related Works}\label{app: related}

The deferral problem 
has been 
studied under a variety of conditions. 
Rejection learning \cite{cortes2016learning, bartlett2008classification, charoenphakdee2021classification, cheng2024regression} or selective classification \cite{el2010foundations, geifman2017selective, gangrade2021selective}, assumes that a fixed cost is incurred to the overall loss, when ML decides not to make a prediction on an input. The first Bayes optimal rule for rejection learning was derived in \cite{chow1970optimum}. 
%
%
%
%
%
Assuming that the accuracy of human, and consequently the cost of deferring to the human, can vary for different inputs, \cite{mozannar2020consistent} obtained the Bayes optimal deferral rule. The deferral problem is further studied assuming that the number of available instances for deferral are bounded  and a near-optimal classifier and deferral rule is required as a solution of empirical risk minimization \cite{de2020regression,de2021classification}. Most recently, the implementation of deferral rules using neural networks and surrogate losses is studied for binary and multi-class classification \cite{caogeneralizing,mozannar2020consistent,pmlr-v162-charusaie22a, narasimhan2022post, cao2024defense, liu2024mitigating, mozannar2023should, mao2024predictor}. A possible shift in human expert for L2D methods recently studied in \cite{tailor2024learning}. The problem multi-objective L2D and rejection learning is mainly studied in an in-processing approach. A few instances of tackling such problems can be found in \cite{Okati21, narasimhanlearning, narasimhan2023plugin} and \cite{yin2023fair, lee2021fair} for L2D and rejection learning, respectively.

Neyman-Pearson's fundamental lemma is introduced in \cite{neyman1933ix} originally for binary hypothesis testing and later was generalized in \cite{neyman1936contributions} to give a close-form formulation for a variety of binary constrained optimization problems. Later, \cite{dantzig1951fundamental} found conditions for which Neyman and Pearson solution exists and is unique. The generalization of the empirical solution to Neyman-Pearson problem is studied in two lines of works: (i) the generalization of direct (in-processing) solutions to the optimization problem \cite{scott2005neyman, scott2007performance, rigollet2011neyman}, and (ii) the generalization of plug-in methods \cite{tong2013plug} that first approximate the score functions and then use Neyman-Pearson lemma to approximate the predictor. The generalization of Neyman-Pearson lemma to multiclass setting is first empirically studied in \cite{landgrebe2005neyman} and under strong duality assumption is proved in \cite{tian2021neyman}. Our lemma $d$-GNP extends these works in order to (i) be able to control a general set of constraints instead of Type-$K$ errors, and (ii) be valid in absence of strong duality assumption.
Further, the idea of using Neyman-Pearson lemma for controlling fairness criteria originally dates back to \cite{zeng2022bayes} (later as \cite{zeng2024bayes}). More recently, a similar post-processing method is introduced in \cite{chen2023post} using cost-sensitive learning and strong duality technique. Although these works cover binary classification problem, in this paper we focus on solving multi-class classification problem, and particularly in a deferral system. 

Lastly, this work differs from multi-class classification with complex performance metrics \cite{narasimhan2022consistent} in the sense that they consider constraints that are non-linear functions of confusion matrix, while ignoring the dependence on input $x$. In our setting, the constraints are linear in terms of confusion matrix when conditioned on the input, but the linear coefficients vary with the input.
\section{Rephrasing \eqref{eqn: optem} into Linear Functional Programming} \label{app: rephrase}

Here, we first characterize functions that are outcome-dependent. To that end, we define $\imath(x)$ as
\ea{
\imath = \big[\Ibb_{r(x)=0}\Ibb_{h(x)=1}, \ldots, \Ibb_{r(x)=0}\Ibb_{h(x)=L}, \Ibb_{r(x)=1}\big].
}
This function can retrieve the value of $r(x)$ and can retrieve the value of $h(x)$ only if $r(x)=0$. In fact, we can obtain $r(x)=\big(\imath(x)\big)(L+1)$ and $h(x)=i$ if $r(x)=0$ and $\big(\imath(x)\big)(i) = 1$. Therefore, for a function $\overline{\Psi}(x, h(x), r(x))=\Ebb_{Y, M|X=x}\big[\Psi(x, Y, M, h(x), r(x))\big]$ and $\overline{\ell}_{\mathrm{def}}(x, h(x), r(x))=\Ebb_{Y, M|X=x}\big[\ell_{\mathrm{def}}(x, Y, M, h(x), r(x))\big]$ to be outcome dependent, it must only be a function of $x$ and $\imath(x)$. In fact, we must have
\ea{
\overline{\Psi}_i(x, h(x), r(x)) = \Psi'_i(x, \imath(x)),
}
and
\ea{
\overline{\ell}_{\mathrm{def}}(x, h(x), r(x))=\ell'_{\mathrm{def}}(x, \imath(x)),
}
for a choice of $\Psi'$ and $\ell'_{\mathrm{def}}$, where $\overline{\ell}_{\mathrm{def}}(x, h(x), r(x)) = \Ebb_{Y, M|X=x}\big[{\ell}_{\mathrm{def}}(x, Y, M, h(x), r(x))\big]$. 

Now, we can check that $\imath(x)$ can take $L+1$ different values, in each of which one of its components takes the value $1$ and others take the value $0$. Therefore, by conditioning on each of these $L+1$ values we have
\ea{
\Psi'_i(x, \imath(x))=\sum_{i=1}^{L+1} \Psi'(x, [0, \ldots, \underbrace{1}_{i}, \ldots, 0]) \Big(\big(\imath(x)\big)(i)\Big) = \langle \imath(x), \psi_i(x)\rangle,
}
where $\psi_i(x)$ is defined as
\ea{
\psi_i(x)&=\Big[\Psi'_i(x, [1, 0, \ldots, 0]), \ldots, \Psi'(x, [0, 0, \ldots, 1])\Big] \nonumber\\&= \big[\overline{\Psi}_i(x, 1, 0), \ldots, \overline{\Psi}_i(x, L, 0), \overline{\Psi}_i(x, 0, 1)\big]. \label{eqn: psix}
}
Similarly, we can show that
\ea{
\ell'_{\mathrm{def}}(x, \imath(x)) = \langle \imath(x), \vec{\ell}_{\mathrm{def}}(x) \rangle,
}
where $\vec{\ell}_{\mathrm{def}}(x)$ is defined as
\ea{
\vec{\ell}_{\mathrm{def}}(x) = \big[\overline{\ell}_{\mathrm{def}}(x, 1, 0), \ldots, \overline{\ell}_{\mathrm{def}}(x, L, 0), \overline{\ell}_{\mathrm{def}}(x, 0, 1)\big]. \label{eqn: def_ldefvec}
}

Next, we know that due to the randomization of $\Acal$, the vector $\imath(x)$ can take various values on each instance $x$. This, however, is not the case for $\psi_i(x)$ and $\vec{\ell}_{\mathrm{def}}(x)$, since they are defined independent of $r(x)$ and $h(x)$. Therefore, the average of constraints and loss can be rewritten as
\ea{
\Ebb_{(r, h)\sim \Acal}\big[\overline{\Psi}_i(x, h(x), r(x))\big] = \Ebb_{(r, h)\sim \Acal}\big[\langle \psi_i(x), \imath(x) \rangle\big] = \langle  f(x), \psi_i(x)\rangle ,
}
and
\ea{
\Ebb_{(r, h)\sim \Acal}\big[\ell_{\mathrm{def}}(x, h(x), r(x))\big] = \Ebb_{(r, h)\sim \Acal}\big[\langle \vec{\ell}_{\mathrm{def}}(x), \imath(x) \rangle\big] = \langle f(x), \vec{\ell}_{\mathrm{def}}(x)\rangle ,
}
where $f(x)$ is defined as
\ea{
f(x)= \Ebb[\imath(x)] = \big[\Pr(r(x)=0, h(x)=1), \ldots, \Pr(r(x)=0, h(x)=L), \Pr(r(x)=1)\big].
}
Therefore, the optimization problem in \eqref{eqn: optem} is effectively reduced to the linear programming problem in \eqref{eqn: optim_gen}. Moreover, if $f^*(x)$ is the solution to that linear program, then the corresponding $r(x)$ should be distributed as $\Pr(r(x)=1) = \big(f^*(x)\big)(L+1)$, where $h(x)$ should be distributed as $\Pr(h(x)=i)=\Pr(h(x)=i|r(x)=0)=\frac{\big(f(x)\big)(i)}{\sum_{i=1}^L \big(f(x)\big)(j)}$. Note that the assumption of independence of $h(x)$ and $r(x)$ comes with no loss of generality, since the value of $h(x)$ does not variate the loss or constraints in the system when we have $r(x)=1$.

\section{Derivation of Embedding Functions} \label{app: test}

In this appendix we derive the embedding functions in Table \ref{tab:test} that are corresponded to the constraints of choice, as named in Section \ref{sec: problem_setting}. The trick that we use for all these constraints is that we first rewrite the constraint in terms of the expected value of a function over the randomness of the algorithm $\Acal$ and the input variable $X$, and then we use \eqref{eqn: psix} to transform that function into the embedding function.
\begin{itemize}
    \item {\bfseries Overall Loss}: To find the embedding function that is corresponded to the overall loss of the system, we should first note that by loss we mean the probability of incorrectness of $\hat{Y}$. Therefore, the corresponding $\ell_{def}(x, h(x), r(x))$ in this case, as defined in \eqref{eqn: deferralLoss} is obtained as
    \ean{
    \overline{\ell}_{def}(x, h(x), r(x)) =&\Ebb_{Y, M|X=x}\big[ \Ibb_{r(x)=1} \Ibb_{M\neq Y}+\Ibb_{r(x)=0}\Ibb_{h(x)\neq Y}\big]\nonumber\\
    &= \Ibb_{r(x)=1} \Pr(M=Y|X=x)+ \Ibb_{r(x)=0} \Pr(Y\neq h(x)|X=x).
    }
    Therefore, using \eqref{eqn: def_ldefvec} we find $\vec{\ell}_{\mathrm{def}}$ as
    \ean{
    \vec{\ell}_{\mathrm{def}} = \big[\Pr(Y\neq 1|X=x), \ldots, \Pr(Y\neq n|X=x), \Pr(Y\neq M|X=x)\big].
    }
    \item {\bfseries Expert intervention budget}: In this case, similar to the case before, we first derive $\overline{\Psi}(x, h(x), r(x))$. To that end, we first note that the expert intervention constraint in Section \ref{sec: problem_setting} is equivalent with
    \ean{
    \Pr(r(X)=1)=\Ebb_{x\sim \mu_X, (r, h)\sim \Acal}\big[\Ibb_{r(x)=1}\big]\leq \delta,
    }
    which in turn suggests that
    \ean{
    \overline{\Psi}(x, h(x), r(x)) = \Ibb_{r(x)=1}.
    }
    Next, we find $\psi(x)$ using \eqref{eqn: psix}, as
    \ean{
    \psi(x)=[0, \ldots, 0, 1].
    }
    \item {\bfseries OOD Detection}: To obtain the corresponding embedding function to the OOD detection constraint in  Section \ref{sec: problem_setting}, we can rewrite $\Pr_{\mathrm{out}}\big(r(X)=1\big)$ as
    \ean{
    \Pr_{~~~~~~\mathrm{out}}\big(r(X)=1\big) &=\Ebb_{X\sim f_X^\mathrm{out}, (r, h)\sim \Acal}\big[\Ibb_{r(X)=1}\big]
    &= \Ebb_{X\sim \mu_{X^{\mathrm{in}}}, (r, h)\sim \Acal}\big[\tfrac{\Ibb_{r(X)=1}f_X^{\mathrm{out}}(X)}{f_X^{\mathrm{in}}(X)}\big],
    }
    where the last equation holds when $X$ and $X_{\mathrm{out}}$ are absolutely continuous distributions, and therefore have probability density functions. A similar assumption is made by \cite{narasimhan2023plugin}. 
    This results in $\overline{\Psi}(x, h(x), r(x))$ being obtained as
    \ean{
    \overline{\Psi}(x, h(x), r(x)) = \frac{\Ibb_{r(x)=1}f_X^{\mathrm{out}}(X)}{f_X^{\mathrm{in}}(X)}. 
    }
    Therefore, we conclude that the embedding function can be calculated using \eqref{eqn: psix} as
    \ean{
    \psi(x)=\big[0, \ldots, 0, \tfrac{f_X^{\mathrm{out}}(X)}{f_X^{\mathrm{in}}(X)}].
    }
    In the simple case that $f_{X}^{\mathrm{out}}(x)=\frac{f_{X}^{\mathrm{in}}(x)\Ibb_{f_{X}^{\mathrm{in}}(x)\leq \ep}}{\int f_{X}^{\mathrm{in}}(x)\Ibb_{f_{X}^{\mathrm{in}}(x)\leq \ep}\diff x}$, the embedding function is equal to
    \ean{
    \psi(x)=\big[0, \ldots, 0, \frac{\Ibb_{f_X^{\mathrm{in}}(x)\leq \ep}}{\Pr_{\mathrm{in}}(f_X^{\mathrm{in}}(X)\leq \ep)}].
    }

    \item {\bfseries Long-Tail Classification}: This methodology aims to minimize the balanced loss
    \ean{
    \frac{1}{K}\sum_{i=1}^{K}\Pr(Y\neq h(X)|r(X)=0, Y\in G_i).
    }
    However, as mentioned in \cite{narasimhanlearning}, this optimization problem can be rewritten as
    \ean{
    \sum_{i=1}^{K}\frac{\Pr(Y\neq h(X), r(X)=0|Y\in G_i)}{\alpha_i}, ~~~~~\text{s.t. }~~\Pr(r(X)=0|Y\in G_i) = \frac{\alpha_i}{K}.
    }
    Therefore, the objective can rewritten as
    \ean{
    \sum_{i=1}^K \frac{\Ebb_{(r, h)\sim \Acal, X'\sim \mu_{X}}\big[\Pr(Y\neq h(X), r(X)=0, Y\in G_i|X=X')\big]}{\alpha_i\Pr(Y\in G_i)},
    }
    which together with \eqref{eqn: psix} shows that
    \ean{
    \psi_{0}(x)=-\Big[\sum_{i=1}^K\frac{\Pr(Y\neq 1, Y\in G_i|X=x)}{\alpha_i\Pr(Y\in G_i)}, \ldots, \sum_{i=1}^K\frac{\Pr(Y\neq L, Y\in G_i|X=x)}{\alpha_i\Pr(Y\in G_i)}, 0\Big].
    }
    The reason that we use negative sign is because in the definition of \eqref{eqn: optim_gen} we aim to maximize the objective.

    Similarly, we can rewrite the objectives as
    \ean{
    \frac{\Ebb_{(r, h)\sim \Acal, X'\sim \mu_{X}}\big[\Pr( r(X)=0, Y\in G_i|X=X')-\frac{\alpha_i}{K}\Pr(Y\in G_i)\big]}{\Pr(Y\in G_i)}.
    }
    Therefore, using \eqref{eqn: psix} we can obtain $\psi_i(x)$ as
    \ea{
    \psi_i(x)=\frac{\Pr(Y\in G_i|X=x)}{\Pr(Y\in G_i)}\Big[1, \ldots, 1, 0\Big] - \frac{\alpha_i}{K}.
    }
    \item {\bfseries Type-$k$ Error Bound}: We first rewrite Type-$k$ constraint in \ref{sec: problem_setting} as
    \ea{
    \Pr(\hat{Y}\neq k | Y= k) &= \frac{\Pr(\hat{Y}\neq k, Y = k)}{\Pr(Y=k)}\nonumber\\
    &\overset{(a)}{=}\frac{\Ebb_{X\sim \mu_X}\big[ \Pr(\hat{Y} \neq k, Y = k | X=x)\big]}{\Pr(Y = k)}\nonumber\\
    &=\frac{\Ebb_{X\sim \mu_X}\big[ \Pr(\hat{Y} \neq k| Y=k, X=x)\Pr(Y=k | X=x)\big]}{\Pr(Y = k)},\label{eqn: first_typek}
    }
    where $(a)$ is followed by chain rule.
    
    Next, we condition $\Pr(\hat{Y} \neq k| Y=k, X=x)$ on $r(X)$ being $1$ and $0$, which concludes that
    \ean{
    \Pr(\hat{Y} \neq k| Y=k, X=x) &= \Pr(\hat{Y} \neq k, r(x)=1| Y=k, X=x)\\&\phantom{=}+ \Pr(\hat{Y} \neq k, r(x)=0| Y=k, X=x)\\
    &=\Pr(M \neq k, r(x)=1| Y=k, X=x)\\&\phantom{=}+ \Pr(h(x) \neq k, r(x)=0| Y=k, X=x)\nonumber\\
    &=\Ebb_{(r, h)\sim \Acal, M | X=x, Y=k}\big[\Ibb_{M\neq k}\Ibb_{r(x)=1}+\Ibb_{h(x)\neq k}\Ibb_{r(x)=0}\big]\\
    &=\Ebb_{(r, h)\sim \Acal| X=x, Y=k}\big[\Pr({M\neq k}|X=x, Y=k)\Ibb_{r(x)=1}\\
    &\phantom{=\Ebb_{(r, h)\sim \Acal| X=x, Y=k}\big[}~~+\Ibb_{h(x)\neq k}\Ibb_{r(x)=0}\big].
    }
    Therefore, using \eqref{eqn: first_typek} we conclude that
    \ean{
    \Pr(\hat{Y}\neq k | Y = k) 
    =& \frac{\Ebb_{X'\sim \mu_X, (r, h)\sim \Acal}\big[\Ibb_{r(X)=1}\Pr(M\neq k, Y=k | X=X')\big]}{\Pr ( Y = k)}\nonumber\\&+\frac{\Ebb_{X'\sim \mu_X, (r, h)\sim \Acal}\big[ \Ibb_{h(X')\neq k}\Ibb_{r(X')=0}\Pr(Y=k|X=X')\big]}{\Pr ( Y = k)},
    }
    which together with \eqref{eqn: psix} shows that the embedding function is obtained as
    \ean{
    \psi(x)= \frac{\Pr(Y=k|X=x)}{\Pr(Y=k)}\Big[1, \ldots, 1, \underbrace{0}_{k}, 1, \ldots, 1, \Pr(M\neq k|X=x, Y=k)\Big].
    }

    Note that here we used the assumption that $(Y, M)$ and $\Acal$ are independent for each choice of $X$, i.e., the value noise that is introduced in $\Acal$ for each $X=x$ is generated independent of the value of $Y$ and $M$, which is the true assumption, since the algorithm only has access to $X$ and not true label or the human label.

    \item {\bfseries Demographic Parity}: We know that the demographic parity constraint in Section \ref{sec: problem_setting} can be written as
    \ea{
    -\delta \leq \Pr(\hat{Y}=1|A=0) - \Pr(\hat{Y} = 1 | A=1) \leq \delta. \label{eqn: rewrite_demo}
    }
    Here, we find the corresponding embedding function $\psi(x)$ for the upper-bound in the above inequality. For the lower-bound, we can use $-\psi(x)$ and follow the steps that are proposed in the main text of the manuscript.

    To find the embedding function that corresponds to the upper-bound of \eqref{eqn: rewrite_demo}, we first rewrite $\Pr(\hat{Y}=1|A=0) - \Pr(\hat{Y} = 1 | A=1)$ as
    \ea{
    \Pr(\hat{Y}=1|A=0) - \Pr(\hat{Y} = 1 | A=1) = \frac{\Pr(\hat{Y}=1, A=0)}{\Pr(A=0)} - \frac{\Pr(\hat{Y} = 1 , A=1)}{\Pr(A=1)}. \label{eqn: total_demo}
    }
    Now, similar to what we did in previous section, we condition $\Pr(\hat{Y}=1, A=a)$ for $a\in\{0, 1\}$ on the value of $h(x)$ and $r(x)$, and we conclude
    \ea{
    \Pr(\hat{Y}=1, A=a) &= \Pr(\hat{Y}=1, A=a, r(X)=1) + \Pr(\hat{Y}=1, A=a, r(X)=0)\nonumber\\
    &=\Pr(M=1, A=a, r(X)=1) + \Pr(h(X)=1, A=a, r(X)=0)\nonumber\\
    &=\Ebb_{X, A, M, \Acal}\big[\Ibb_{M=1}\Ibb_{A=a}\Ibb_{r(X)=1} + \Ibb_{h(X)=1}\Ibb_{A=a}\Ibb_{r(X)=0}\big]\nonumber\\
    &=\Ebb_{X, \Acal}\big[\Pr(M=1, A=a|X=x)\Ibb_{r(X)=1} \nonumber\\
    &\phantom{= \Ebb_{X, \Acal}\big[}+ \Pr(A=a|X=x)\Ibb_{h(X)=1}\Ibb_{r(X)=0}\big].  \label{eqn: each_ind}
    }
    Here, we used the assumption of independence of $X$ and $(M, Y)$ given a choice of $X$. 

    As a result of \eqref{eqn: total_demo}, \eqref{eqn: each_ind}, and \eqref{eqn: psix} we can find the embedding function as
    \ean{
    \psi(x)=\Big[&0, \frac{\Pr(A=1|X=x)}{\Pr(A=1)}-\frac{\Pr(A=0|X=x)}{\Pr(A=0)}, \nonumber\\
    &\frac{\Pr(M=1, A=1|X=x)}{\Pr(A=1)}-\frac{\Pr(M=1, A=0|X=x)}{\Pr(A=0)}\Big].
    }

    \item {\bfseries (In-)Equality of Opportunity}: Similar to the previous items, we rewrite equality of opportunity constraint in Section \ref{sec: problem_setting} as
    \ean{
    -\delta \leq \Pr(\hat{Y}=1|Y=1, A=1)-\Pr(\hat{Y}=1|Y=1, A=0)\leq \delta.
    }
    Again, we only consider the upper-bound and rewrite $\Pr(\hat{Y}=1|Y=1, A=1)-\Pr(\hat{Y}=1|Y=1, A=0)$ as
    \ea{
    \Pr(\hat{Y}=1&|Y=1, A=1)-\Pr(\hat{Y}=1|Y=1, A=0) \nonumber\\&= \frac{\Pr(\hat{Y}=1, Y=1, A=1)}{\Pr(Y=1, A=1)}-\frac{\Pr(\hat{Y}=1, Y=1, A=0)}{\Pr(Y=1, A=0)}. \label{eqn: equal_total}
    }
    Next, by conditioning on $r(X)=1$ and $r(X)=0$, we rewrite $\Pr(\hat{Y}=1, Y=1, A=a)$ for $a\in \{0, 1\}$ as
    \ea{
    \Pr(\hat{Y}=1, Y=1, A=a) &= \Pr(\hat{Y}=1, Y=1, A=a, r(X)=1) \nonumber\\
    &\phantom{=}+ \Pr(\hat{Y}=1, Y=1, A=a, r(X)=0)\nonumber\\
    &=\Pr(M=1, Y=1, A=a, r(X)=1) \nonumber\\
    &\phantom{=}+ \Pr(h(X)=1, Y=1, A=a, r(X)=0)\nonumber\\
    &=\Ebb_{X, Y, M, A, \Acal}\big[\Ibb_{M=1}\Ibb_{Y=1}\Ibb_{A=a}\Ibb_{r(X)=1}\nonumber\\
    &\phantom{=\Ebb_{X, Y, M, A, \Acal}\big[}+\Ibb_{h(X)=1}\Ibb_{Y=1}\Ibb_{A=a}\Ibb_{r(X)=0}\big]\nonumber\\
    &=\Ebb_{X, \Acal}\big[\Ibb_{r(X)=1}\Pr({M=1}, {Y=1}, {A=a}|X=x)\nonumber\\
    &\phantom{=\Ebb_{X, \Acal}\big[}+\Ibb_{h(X)=1}\Ibb_{r(X)=0}\Pr({Y=1}, {A=a}|X=x)\big], \label{eqn: equal_each}
    }
    where the last identity is followed by the assumption of independence of $\Acal$ and $(Y, M, A)$ given an instance $X=x$.

    As a result of \eqref{eqn: equal_total},  \eqref{eqn: equal_each}, and \eqref{eqn: psix} we can obtain the embedding function as
    \ean{
    \psi(x)=\Big[&0, \frac{\Pr(Y=1, A=1| X=x)}{\Pr(Y=1, A=1)}-\frac{\Pr(Y=1, A=0| X=x)}{\Pr(Y=1, A=0)}\nonumber\\
    &\frac{\Pr(M=1, Y=1, A=1| X=x)}{\Pr(Y=1, A=1)}-\frac{\Pr(M=1, Y=1, A=0| X=x)}{\Pr(Y=1, A=0)}\Big].
    }

    \item {\bfseries (In-)Equality of Odds}: This induces the same constraint as that of equality of opportunity, and further induces an extra constraint that is in nature similar to equality of opportunity with the difference that it uses $Y=0$ instead of $Y=1$. Therefore, we have two embedding functions, one is similar to that of equality of opportunity as
    \ean{
    \psi_1(x)=\Big[&0, \frac{\Pr(Y=1, A=1| X=x)}{\Pr(Y=1, A=1)}-\frac{\Pr(Y=1, A=0| X=x)}{\Pr(Y=1, A=0)}\nonumber\\
    &\frac{\Pr(M=1, Y=1, A=1| X=x)}{\Pr(Y=1, A=1)}-\frac{\Pr(M=1, Y=1, A=0| X=x)}{\Pr(Y=1, A=0)}\Big],
    }
    and another similar to that with changing $Y=1$ into $Y=0$, and therefore as
    \ean{
    \psi_2(x)=\Big[&\frac{\Pr(Y=0, A=1| X=x)}{\Pr(Y=0, A=1)}-\frac{\Pr(Y=0, A=0| X=x)}{\Pr(Y=0, A=0)}, 0\nonumber\\
    &\frac{\Pr(M=1, Y=0, A=1| X=x)}{\Pr(Y=0, A=1)}-\frac{\Pr(M=1, Y=0, A=0| X=x)}{\Pr(Y=0, A=0)}\Big].
    }
    
\end{itemize}

\section{Limitations of Cost-Sentitive Methods} \label{app: cost_sensitive}


A variety of works have tackled constrained classification problems using cost-sensitive modeling \cite{lehmann1986testing, chzhen2019leveraging, Okati21}. In other words, they use the expected loss that is penalized with the constraints and solve that for certain coefficients for those constraints (a.k.a., they form Lagrangian from that problem). In the next step, they optimize the coefficients and obtain the optimal predictor. The issue that we discuss further in the following we concern is that during this process, the optimal predictor is achieved only when the corresponding cost-sensitive Lagrangian has a single saddle point in terms of coefficients and predictors. Such assumption, unless by analyzing the Lagrangian closely, is hard to be validated. However, our results in this paper have no such assumption, and instead use statistical hypothesis testing methods to show their optimality.

To further clarify the issue with such methodology, we bring an example of L2D problem when human intervention budget is controlled.
Suppose that the features in $\Xcal$ are distributed with an atomless probability  measure $\mu_{X}$ (e.g., 
normal or uniform distribution).\footnote{If we have a probability measure that contains atoms, one can follow the same steps for the first counterexample.
} 
Further, assume that the human has perfect information of the label, i.e. $Y=M$, while the input features have no information of the label, i.e., $\Pr(Y=1|X=x)=1/2$ for all $x\in \Xcal$.  Moreover, let the classifier and the human induce the $0-1$ loss function. 
 In this case, we can see that the optimal classifier is the maximizer of the scores (see the early discussion of Section \ref{app: nphard}), which in this case, since there is no clear maximizer, without loss of generality can be set to $h(x)\equiv 1$.

For such assumptions, if we write the Lagrangian in form of
\ean{
L(\lambda, r) =  L^{\mu}_{\mathrm{def}}(h, r)+\lambda (\Ebb[r(X)]-b) = \frac{1}{2}-\frac{1}{2}\Ebb\big[r(X)\big]+\lambda (\Ebb[r(X)]-b),
}
then strong duality shows that
\ea{
\min_{r\in [0, 1]^{\Xcal}}\max_{\lambda\geq 0} L(\lambda, r)=\max_{\lambda\geq 0} \min_{r\in [0, 1]^{\Xcal}} L(\lambda, r), \label{eqn: duality}
}
or to put it informally, the objective is invariant under the interchange of minimum and maximum over Lagrange multipliers and the variable of interest. However,  this does not prove the interchangeability of the saddle points in these settings, i.e., we cannot guarantee $
\argminH_{r\in [0, 1]} L(\lambda^*, r) = f^*,$ {where} $
\lambda^*\in \argmaxH_{\lambda}\min_{r\in[0, 1]} L(\lambda, r),$ and $f^* \in \argminH_{r\in [0, 1]}\max_{\lambda} L(\lambda, r)$.
In fact, this guarantee holds only in particular examples, e.g., when $L(\lambda^*_{r}, r)$ is strictly convex  \cite[][page 8]{boyd2011distributed}.

In fact, if we optimize $r$ for all $\lambda$ as in RHS of \eqref{eqn: duality}, we can show that $r_{\lambda}(x)=\left\{\begin{array}{cc}
    1 & \lambda < \frac{1}{2} \\
    0 & \text{otherwise}
\end{array}\right.$. Therefore, $\lambda^*$ can be obtained as $\lambda^*=\argmax_{\lambda\geq 0} (\lambda-1/2)^{-}-\lambda b$ where $(x)^{-}:=\min\{x, 0\}$. This can be rewritten as
%
%
%
\ean{
\lambda^* 
&= \argmax_{\lambda\geq 0} \left\{\begin{array}{c c}
-\frac{1}{2}-\lambda (b-1) & 0\leq \lambda \leq \frac{1}{2}\\
-\lambda b & \lambda >\frac{1}{2}
\end{array}\right. = \frac{1}{2}.
}
Hence, the condition $\lambda<1/2$ is never satisfied and we have $r_{\lambda^*}(x)=0$,
i.e.,
we should never defer. For the deferral rule $r_{\lambda^*}$, 
the deferral loss (\ref{eqn: deferralLoss}) is
\ean{
L_{\mathrm{def}}^{\mu} (h, \hat{f}) =\Ebb_{X, Y, M} \big[\ell_{AI}(Y, h(X), X)\big] =\frac{1}{2}.
}

To show that $r_{\lambda^*}$ is not optimal, we provide random and deterministic deferral rules $f^*$ and $r^{**}$ that satisfy the constraint in (\ref{eqn: optem}), while having a smaller deferral loss:
\begin{itemize}
\item[$\diamond$]
Let $f^*(x)=b$, that is a random deferral rule that defers with probability $b$ everywhere on $\Xcal$. 
Therefore, on average $b$ proportion of inquiries are deferred and thus it satisfies the constraint in (\ref{eqn: optem}). The deferral loss for $f^*(x)$ is equal to
\ean{
L^{\mu}_{\mathrm{def}}(h, f^*)&=\underbrace{\Ebb[r(X)]}_{b}\cdot\underbrace{\Ebb[\ell_{H}(Y, M)]}_{ 0}\nonumber\\&~~~~+\underbrace{\Ebb[1-r(X)]}_{1-b}\cdot\underbrace{\Ebb[\ell_{AI}(Y, h(X))]}_{ \frac{1}{2}}\\&=\frac{1-b}{2}<\frac{1}{2}.
}
\item[$\diamond$] The second example is a deterministic deferral rule. Since the probability measure on $\Xcal$ is atomless, for all $b\in [0, 1]$ 
there exists a set $\Acal$ such that $\Pr(X\in \Acal)=b$ 
\cite[][Proposition 215D]{fremlin2000measure}. 
Hence, defining
$r^{**}(x)=\mathds{1}_{x\in \Acal}$
the constraint in (\ref{eqn: optem}) is met. 
Similar~to the last example
$
L^{\mu}_{\mathrm{def}} (h, r^{**}) = \frac{1-b}{2}<\frac{1}{2}$.
\end{itemize}


The above two examples show that the deferral rule $r_{\lambda^*}$ is sub-optimal.
%
%
%
The reason is that, for optimality of $r_{\lambda^*}$ we should make sure that $L(\lambda^*_{r}, r)$ has a single minimizer of $r$. However, in our 
example we had $L(\frac{1}{2}, r) = -\lambda b$ has infinite number of minimizers in terms of $r(x)$. Therefore, the equality of the solutions to minimax problem and maximin problem is not guaranteed.

\section{On Failure of In-Processing Methods}\label{sec: learnable}
One might think that the need of using post-processing methods  does not necessarily appear in some examples of multi-objective L2D problem. As an instance, for the expert intervention budget we can rank samples based on 
the difference between machine and human loss 
and defer the top $b$-proportion of samples for which the machine loss is higher than the human one. This method is illustrated in Algorithm~\ref{alg: det}.  Indeed, in the following we show that the sub-optimality of such deterministic deferral rule,   
compared to the optimal  
deferral rule 
diminishes 
as the size of training set increases. 

\begin{algorithm}
\caption{Deterministic Algorithm for Deferring Tasks to Human or AI for the Empirical Distribution and Expert Intervention Budget}\label{alg: det}
\hspace*{\algorithmicindent} \textbf{Input}:  The dataset  $\Dcal$, he human and classifier loss $\ell_H$ and $\ell_{AI}$ and available proportion $b$ of instances to defer\\
\hspace*{\algorithmicindent} \textbf{Output}: Labels of ''defer" or ''no defer" for each instance in $\Dcal$
\begin{algorithmic}[1]
\Procedure{DeferTasks}{$\Dcal, \ell_H, \ell_{AI}, b$}
\State Make the set $A=\{(x, y, m)\in \Dcal: \ell_H(y, m)-\ell_{AI}(y, h(x))\leq 0\}$ 
\If{$|A| \geq b|\Dcal|$}
\State Defer all tasks in $A$ to human
\Else
\State Defer the $b|\Dcal|$ tasks with the lowest \,$\ell_H(x, y, m)-\ell_{AI}(x, y)$ to human
\EndIf
\EndProcedure
\end{algorithmic}
\end{algorithm}

\begin{theorem}[Optimal Deferral for Empirical Distribution]\label{theorem:myname}
    For a classifier $h(x)$ and dataset $\Dcal=\{(x_i, y_i, m_i)\}_{i=1}^n$, where we assume $x_i\neq x_j$, $i\neq j$, the deterministic deferral rule as in Algorithm~\ref{alg: det} is (i)
         the optimal deterministic deferral rule for the empirical distribution on $\Dcal$ and bounded expert intervention budget, and
        (ii) at most $\frac{1}{n}$-suboptimal (in terms of deferral loss)  compared to the optimal random deferral rule for the empirical distribution~on~$\Dcal$.
\end{theorem}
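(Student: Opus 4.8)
The plan is to reduce the empirical optimization to a one-dimensional selection problem and then to compare its integer (deterministic) and fractional (randomized) optima. Writing $r_i := r(x_i)$ and $\Delta_i := \ell_H(y_i,m_i) - \ell_{AI}(y_i,h(x_i))$, and using that the assumption $x_i\neq x_j$ makes the $n$ atoms of the empirical measure distinct so that a rule may set each $r_i$ independently, the empirical version of \eqref{eqn: deferralLoss} reads
\begin{align*}
L_{\mathrm{def}}^{\hat\mu}(h,r) = \frac{1}{n}\sum_{i=1}^n \ell_{AI}(y_i,h(x_i)) + \frac{1}{n}\sum_{i=1}^n r_i\Delta_i ,
\end{align*}
whose first sum does not depend on $r$. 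The expert intervention budget $\Pr(r(X)=1)\leq b$ becomes $\sum_i r_i \leq bn$. Hence the task is to minimize $\sum_i r_i\Delta_i$ subject to $\sum_i r_i \leq bn$, over $r\in\{0,1\}^n$ in part (i) and over $r\in[0,1]^n$ in part (ii).

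For part (i), I would sort the indices so that $\Delta_{(1)}\leq\cdots\leq\Delta_{(n)}$ and argue by exchange. Since the objective is separable with the coefficient $\Delta_i$ multiplying the $\{0,1\}$-valued $r_i$, deferring an index with $\Delta_i>0$ strictly raises the loss while consuming budget, so no optimal rule does so; and whenever an optimal rule leaves some index $j$ with $\Delta_j<0$ undeferred although budget remains, or defers an index $i$ while a cheaper index $j$ (i.e.\ $\Delta_j<\Delta_i$) is available, the corresponding swap weakly decreases the loss. Iterating these swaps forces any optimum to defer exactly the $\min\{|A|,\lfloor bn\rfloor\}$ indices of smallest $\Delta$, where $A=\{i:\Delta_i\leq 0\}$; this is precisely the rule returned by Algorithm~\ref{alg: det}, proving (i).

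For part (ii), I would view the randomized problem as the box-constrained linear program of minimizing $\sum_i r_i\Delta_i$ over $\{r:\sum_i r_i\leq bn,\ 0\leq r_i\leq 1\}$. As the feasible set carries only one inequality beyond the box, every vertex has at most one fractional coordinate, so an optimal randomized rule defers the $\lfloor bn\rfloor$ most beneficial indices fully and spends the residual budget $bn-\lfloor bn\rfloor\in[0,1)$ on the next index $\Delta_{(\lfloor bn\rfloor+1)}$. It therefore coincides with the deterministic optimum of part (i) except on this single boundary index, whence
\begin{align*}
L_{\mathrm{def}}^{\hat\mu}(h,r_{\det}) - L_{\mathrm{def}}^{\hat\mu}(h,r_{\mathrm{rand}}) = \frac{1}{n}\,(bn-\lfloor bn\rfloor)\,\big|\Delta_{(\lfloor bn\rfloor+1)}\big| \leq \frac{1}{n},
\end{align*}
the last step using $bn-\lfloor bn\rfloor<1$ together with $|\Delta_i|\leq 1$, valid because $\ell_H,\ell_{AI}$ take values in $[0,1]$. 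When $|A|\leq\lfloor bn\rfloor$ or $bn$ is an integer the two optima coincide and the gap is $0$.

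The step I expect to be the main obstacle is the accounting in part (ii): certifying that an optimal randomized rule can be taken with a single fractional coordinate, and carefully tracking $\lfloor bn\rfloor$ so that the leftover fractional budget is isolated as the only source of the gap. The rest is a routine exchange argument, and the sole extra ingredient it needs is the unit bound on the per-sample loss difference, without which the bound would scale with the loss range instead of being exactly $1/n$.
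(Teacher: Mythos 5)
Your proof is correct and reaches both conclusions, but the route differs from the paper's in a way worth noting. For part (i) the paper does not argue by exchange on the integer problem: it passes immediately to the continuous relaxation with budget $\lfloor bn\rfloor$ (legitimate because an integer-valued $r$ satisfies $\sum_i r_i\leq bn$ iff $\sum_i r_i\leq\lfloor bn\rfloor$), shows the budget constraint is tight at $\min\{\lfloor bn\rfloor,|A|\}$, and then proves via a rearrangement/majorization argument (rewriting the difference of objectives as a difference of expectations under two probability mass functions supported on the smaller and larger order statistics of the loss gaps) that the indicator of the $\min\{\lfloor bn\rfloor,|A|\}$ smallest gaps solves even the \emph{continuous} problem at that budget. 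This simultaneously yields part (i) and the statement that the deterministic rule is optimal among all randomized rules with the reduced budget. For part (ii) the paper then invokes a separate auxiliary lemma --- a budget-perturbation bound stating that relaxing the budget of the LP by $\ep$ improves its value by at most $\ep\cdot\max_i|d_i|$ --- applied with $\ep=bn-\lfloor bn\rfloor<1$ and $\max_i|d_i|\leq 1$. Your part (ii) instead pins down the structure of the randomized optimum directly (a vertex of the box-plus-one-constraint polytope has at most one fractional coordinate) and computes the gap in closed form as $\tfrac{1}{n}(bn-\lfloor bn\rfloor)\,|\Delta_{(\lfloor bn\rfloor+1)}|$. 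Your version is sharper --- it identifies exactly where the integrality gap lives and shows it vanishes when $bn$ is integral or the budget is not binding --- while the paper's perturbation lemma is coarser but reusable for any budget relaxation. Both arguments rest on the same two ingredients you correctly isolate: the fractional budget residue is below $1$, and the per-sample loss difference is bounded by $1$.
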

Next in the following, we show that such policy does not provide sufficient information to determine the optimal deferral rule for the true distribution. To that end, we first recall that in classification tasks, the optimal classifier typically thresholds an estimation of conditional probability of the label $Y$ given $X$ that is obtained using the available dataset. As a result, if we observe enough pairs of $(x_i, y_i)$, then we improve upon such estimation of conditional probability and increase the accuracy of the obtained classifier. However, we argue that this paradigm is inapplicable in the case of deferral~rule~as~follows.

Although the output $\hat{r}$ of Algorithm \ref{alg: det} for each feature $x$ is a deterministic $0$ or $1$ label, it varies with the choice of the dataset $\Dcal$. Hence, if we draw datasets from a distribution~$\mu$, the outcome of $\hat{r}$ becomes probabilistic.  In the following, we introduce  two probability distributions $\mu_1$ and $\mu_2$ over $(X, Y, M)$ such that for random draws of the dataset from $\mu_i$, the conditional probability of such optimal deferral label~$\hat{r}$  given $X$ is equal, yet the optimal deferral rule for the true distribution is different. 

Although the following discussion bears some resemblance with the No-Free-Lunch theorem \cite[e.g.][]{shalev2014understanding}, there exists the following difference between the two. The No-Free-Lunch theorem states that for each learning algorithm, there exists a data distribution on which the algorithm does not generalize well. However, in the following discussion, we assume that we can observe infinite number of datasets  and indeed, we can find the underlying probability of the deferral labels. In fact, we show that the limiting factor for finding the optimal deferral for the true distribution is that we only use deferral labels and if we use values of both losses, we can accordingly find the optimal deferral rule as suggested in Theorem~\ref{thm: gen_NP}.

Assume that we have a dataset $\Dcal=\{(x_i, y_i, m_i)\}_{i=1}^n$ that contains i.i.d. samples 
from 
the distribution $\mu_{XYM}$. Further, assume that we have no budget constraint, that is $b=1$ in Algorithm~\ref{alg: det}. Therefore, the optimal randomized deferral rule over the empirical distribution is the solution of the problem
\ean{
\min_{\hat{r}_i\in [0, 1]} \sum_{i=1}^n \mathds{1}_{m_i\neq y_i} \hat{r}_i +\mathds{1}_{h(x_i)\neq y_i}\big(1-\hat{r}_i\big).
}
It is easy to see that
the solution to this problem is given by 
$\hat{r}_i=0$ 
if 
$\mathds{1}_{h(x_i)\neq y_i}<\mathds{1}_{m_i\neq y_i}$ and  $\hat{r}_i=1$ 
if 
$\mathds{1}_{h(x_i)\neq y_i}>\mathds{1}_{m_i\neq y_i}$. As a result, the optimal deferral is obtained as
\begin{align}\label{eqn: def_rule}
\hat{r}_i = \left\{\begin{array}{cc}
    1 & m_i=y_i \, ,\, h(x_i)\neq y_i \\
    0 & m_i \neq y_i ,\, h(x_i)=y_i \\
    \text{any value in }[0, 1] & o.w.
\end{array}\right. . 
\end{align}
Among all 
the 
possible policies, we can choose
\ean{
\hat{r}_i = \left\{\begin{array}{cc}
    1 & m_i=y_i \, \&\, h(x_i)\neq y_i \\
    0 & o.w.
\end{array}\right. .
}

Next, we assume that we have a classifier $h$ and two probability distributions $\mu_1$ and $\mu_2$ over $(X, Y, M)$. For both distributions $X$ is uniformly distributed over $[0, 1]$, and we have
$
{\mu_1(Y=M, h(X)=Y)=\frac{2}{3}}, \mu_1(Y=M, h(X)\neq Y){=\frac{1}{3}}
$
and
$
{\mu_2(Y\neq M, h(X)=Y) = \frac{2}{3}}, \mu_2(Y=M, h(X)\neq Y){=\frac{1}{3}}
$.
We can see that although the observed $\hat{r}$s are fixed for a given choice of $\Dcal$, since $\Dcal$ is randomly drawn, $\hat{r}$ values are randomly distributed. Furthermore, the distribution of $\Pr(\hat{r}|X)$ is according to $Bern(\frac{1}{3})$, since in both cases we have  $\mu_i(Y=M, h(X)\neq Y)=\frac{1}{3}$. However, the optimal deferral rule for the first distribution is
$
r^*_1(x)=1$ for all $ x\in \Xcal
$,
since we have
$
L_{\mathrm{def}}^{\mu_1}(h, r^*_1)=0$, 
while for the second case the optimal deferral rule is
$
r^*_2(x)=0$ for all $ x\in \Xcal$
because we have
$L_{\mathrm{def}}^{\mu_2}(h, r^*_2) = \frac{1}{3}$.
Furthermore, such deferral rules are not interchangeable, because we have
$
{L_{\mathrm{def}}^{\mu_1}(h, r^*_2) = L_{\mathrm{def}}^{\mu_2}(h, r^*_1) = \frac{2}{3}}$.
As a result, $\Pr(\hat{r}|X)$ does not provide sufficient information for obtaining optimal deferral rule on true distribution.

For an arbitrary choice of deterministic deferral rule for empirical distribution, we state the following proposition as a proof of insufficiency of deferral labels for obtaining optimal deferral rule over the true distribution.

\begin{proposition}[Impossibility of generalization of deferral labels]\label{prop: impossibility}
For every deterministic deferral rule $\hat{r}$ for empirical distributions and based on the two losses $\mathds{1}_{m\neq y}$ and $\mathds{1}_{h(x)\neq y}$, there exist two probability measures $\mu_1$ and $\mu_2$ on $\Xcal\times\Ycal\times\Mcal$ such that the corresponding $(\hat{r}, X)$ for both measures is distributed equally. However, the optimal deferral $r^*_{\mu_1}$ and $r^*_{\mu_2}$ for these measures are not interchangeable, that is  
 $L_{\mathrm{def}}^{\mu_i}(h, r^*_{\mu_i})\leq \frac{1}{3}$ while $L_{\mathrm{def}}^{\mu_i}(h, r^*_{\mu_j})= \frac{2}{3}$ for $i=1, 2$ and $j\neq i$.
\end{proposition}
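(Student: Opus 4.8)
The plan is to collapse the statement to a finite, linear-algebraic question about how $\hat r$ acts on the four possible loss patterns, and then exhibit the two measures by a short case analysis mirroring the explicit example that precedes the proposition. First I would fix the marginal: let $X\sim\mathrm{Unif}[0,1]$ under both $\mu_1,\mu_2$, take no budget ($b=1$), and make the conditional law of the pair $(\ell_H,\ell_{AI}):=(\mathds{1}_{M\neq Y},\mathds{1}_{h(X)\neq Y})\in\{0,1\}^2$ independent of $x$. Under these choices $\hat r$, being based only on the two losses and applied per instance when $b=1$, is a fixed function $\hat r\colon\{0,1\}^2\to\{0,1\}$, and the whole problem collapses to choosing two probability vectors $P_1,P_2$ over the four patterns $(0,0),(0,1),(1,0),(1,1)$. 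Writing $a:=\hat r$ and $b:=\ell_H-\ell_{AI}=(0,-1,+1,0)$ as vectors indexed by these patterns, I record two facts coming from \eqref{eqn: deferralLoss}: the single-sample law of $(\hat r,X)$ agrees under $\mu_1,\mu_2$ iff the defer rates agree, $\langle a,P_1\rangle=\langle a,P_2\rangle$; and, since $\mathbb{E}[\ell_H\mid x]-\mathbb{E}[\ell_{AI}\mid x]=\langle b,P\rangle$ is independent of $x$, the optimal rule defers everywhere iff $\langle b,P\rangle<0$ and never defers iff $\langle b,P\rangle>0$, with $\mathbb{E}[\ell_H]$ (resp.\ $\mathbb{E}[\ell_{AI}]$) equal to the loss of always-deferring (resp.\ never-deferring).

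The target then reduces to producing $P_1,P_2$ with $\langle a,P_1\rangle=\langle a,P_2\rangle$, where $P_1$ satisfies $\langle b,P_1\rangle<0$, $\mathbb{E}_{P_1}[\ell_H]=\tfrac13$, $\mathbb{E}_{P_1}[\ell_{AI}]=\tfrac23$, and symmetrically $P_2$ satisfies $\langle b,P_2\rangle>0$, $\mathbb{E}_{P_2}[\ell_{AI}]=\tfrac13$, $\mathbb{E}_{P_2}[\ell_H]=\tfrac23$; these four values are precisely the claimed $\leq\tfrac13$ and $=\tfrac23$ bounds once one checks that $r^*_{\mu_1}$ defers and $r^*_{\mu_2}$ does not. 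The structural engine I would invoke is the elementary fact that $b\notin\mathrm{span}(\mathbf{1},a)$ for every $a\in\{0,1\}^4$ (a two-line check on components), so the defer-rate constraint never forces $\langle b,\cdot\rangle$ to be constant: there is always slack to flip the sign of $\langle b,\cdot\rangle$ while holding the defer rate fixed.

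Concretely I would split on the patterns $(0,1),(1,0)$. If $\hat r(0,1)=\hat r(1,0)$, I put all mass on these two patterns with proportions $(\tfrac23,\tfrac13)$ for $P_1$ and $(\tfrac13,\tfrac23)$ for $P_2$; then $\hat r$ is constant on the support, so the defer rates agree automatically, and the four losses come out to exactly $\tfrac13$ and $\tfrac23$. If $\hat r(0,1)\neq\hat r(1,0)$, I would start from the explicit pair of the worked example, $P_1=(\tfrac13,\tfrac13,0,\tfrac13)$ and $P_2=(0,\tfrac13,\tfrac23,0)$ in the ordering $(0,0),(0,1),(1,0),(1,1)$, and, whenever $\hat r$ assigns the neutral patterns $(0,0),(1,1)$ values other than those of the natural rule, rebalance the neutral mass between $(0,0)$ and $(1,1)$ — which share $b=0$ — to restore $\langle a,P_1\rangle=\langle a,P_2\rangle$ without disturbing the sign of $\langle b,\cdot\rangle$.

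The main obstacle is this second case: simultaneously equalizing the defer rate and pinning all four loss values to their exact targets for every assignment of $\hat r$ to the two neutral patterns. I would resolve it by a dimension count — five scalar equations (two loss equations for each of $P_1,P_2$ together with one defer-rate equation) against the six free coordinates of $(P_1,P_2)$, leaving a one-parameter family that I spend on keeping all probabilities nonnegative — combined with the observation that shifting mass within $\{(0,0),(1,1)\}$ changes the defer rate by $\hat r(1,1)-\hat r(0,0)$ while moving $\mathbb{E}[\ell_H]$ and $\mathbb{E}[\ell_{AI}]$ by the same amount, which is exactly the lever that decouples the defer-rate constraint from the sign of $\langle b,\cdot\rangle$. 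Verifying nonnegativity in each sub-case is then routine rather than a genuine difficulty, and the non-interchangeability inequalities follow immediately by substituting $r^*_{\mu_1},r^*_{\mu_2}$ into $L_{\mathrm{def}}^{\mu_i}$.
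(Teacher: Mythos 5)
Your strategy coincides with the paper's: restrict attention to the four loss patterns $(\ell_H,\ell_{AI})\in\{0,1\}^2$, make the pattern law independent of $x$ with $X$ uniform, and run a finite case analysis over the possible rules $\hat r\colon\{0,1\}^2\to\{0,1\}$, exhibiting for each a pair of pattern distributions with equal defer rate but opposite optimal deferral. Your Case 1, and the sub-case of Case 2 with $\hat r(0,0)=\hat r(1,1)$, are correct and actually hit the stated constants for both cross-losses, which is cleaner than the paper's own computations.

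The gap is the remaining sub-case of Case 2: $\hat r(0,1)\neq\hat r(1,0)$ together with $\hat r(0,0)\neq\hat r(1,1)$, which is exactly the four rules depending on only one of the two losses ($\mathds{1}[m=y]$, $\mathds{1}[m\neq y]$, $\mathds{1}[h(x)\neq y]$, $\mathds{1}[h(x)=y]$). There your five-equations-in-six-unknowns count fails because the system is inconsistent rather than underdetermined: these are precisely the $a$ lying in $\mathrm{span}\bigl(\mathbf 1,(0,0,1,1),(0,1,0,1)\bigr)$, so the defer rate is an affine function of $\bigl(\mathbb{E}[\ell_H],\mathbb{E}[\ell_{AI}]\bigr)$ and your loss targets already pin both defer rates down --- to different values. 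For instance, for $\hat r=\mathds{1}[m=y]$ the defer rate is $1-\mathbb{E}[\ell_H]$, so matching the law of $(\hat r,X)$ forces $\mathbb{E}_{\mu_1}[\ell_H]=\mathbb{E}_{\mu_2}[\ell_H]$, which contradicts having deferral optimal for $\mu_1$ with $L_{\mathrm{def}}^{\mu_1}(h,r^*_{\mu_1})\leq\tfrac13$ while $L_{\mathrm{def}}^{\mu_2}(h,r^*_{\mu_1})=\mathbb{E}_{\mu_2}[\ell_H]=\tfrac23$; the proposed rebalancing between $(0,0)$ and $(1,1)$ cannot repair this, since it moves $\mathbb{E}[\ell_H]$ and $\mathbb{E}[\ell_{AI}]$ in lockstep and therefore never decouples the defer rate from the loss targets. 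You are in good company --- the paper's own constructions for its cases $a=1,b=0$ and $a=0,b=1$ (two of these four rules) also deliver only one of the two cross-losses equal to $\tfrac23$, the other coming out to $\tfrac13$ --- but to close your proof you must either settle in these cases for the weaker (and still sufficient for the moral of the proposition) conclusion of a constant optimality gap, or abandon $x$-independent pattern probabilities.
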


\begin{proof}
    As mentioned in \eqref{eqn: def_rule}, there are four possibilities of a deterministic deferral rule for empirical distribution based on the events $h(X)\neq Y$ and $M\neq Y$. The reason is that
\ean{
\hat{r} = \left\{\begin{array}{cc}
    1 & h(x)\neq y\, , \, m = y \\
    0 & h(x)=y\, , \, m\neq y \\
    a & h(x)\neq y \, ,\, m\neq y\\
    b & h(x) = y \, , \, m = y 
\end{array}\right. ,
}
the parameters $a$ and $b$ can take binary values. One of the cases in which $a=b=0$ is analyzed previously in this section. We study the other cases as follows:
\begin{enumerate}
    \item {$\mathbf{ a= 1, b=0}$}: In this case we have
    \ean{
    \hat{r} = \left\{\begin{array}{c c}
    1 & h(x)\neq y\\
    0 & o.w.
    \end{array}
    \right.  . 
    }
    If we define a measure $\mu_1$ such that
    \ean{
        \mu_1\big(h(X)\neq Y, M = Y\big) = \frac{1}{3}, ~~ \mu_1\big(h(X) = Y, M\neq Y)=\frac{2}{3},
    }
    and a measure $\mu_2$ such that 
    \ean{
    \mu_2\big(h(X)\neq Y, M=Y \big) = \frac{1}{3}, ~~ \mu_2\big(h(X)=Y, M=Y\big) = \frac{2}{3},
    }
    then on one hand one can see that $\hat{r}$ is according to $Bern(\frac{1}{3})$ in both cases. On the other hand, because the probability of classifier accuracy is larger than human accuracy in $\mu_1$ and is smaller than human accuracy in $\mu_2$, we have $r^*_{\mu_1}(x)=0$ while $r^*_{\mu_2}(x)=1$. Therefore, we conclude that
    \ean{
    L_{\mathrm{def}}^{\mu_1} (r^*_{\mu_1}, h)= \frac{1}{3},
    }
    and 
    \ean{
    L_{\mathrm{def}}^{\mu_2} (r^*_{\mu_2}, h)=0,
    }
    while the losses with interchanging deferral policies are equal to
    \ean{
    L_{\mathrm{def}}^{\mu_1} (r^*_{\mu_2}, h)=  L_{\mathrm{def}}^{\mu_2} (r^*_{\mu_1}, h)= \frac{2}{3}.
    }

    \item {$\mathbf{a=0, b=1}$}: In this case, the deferral rule is as
    \ean{
    \hat{r} = \left\{\begin{array}{cc}
        0 & m\neq y \\
        1 & o.w.
    \end{array}\right. .
    }
    Next, if we set two probability measures $\mu_1$ and $\mu_2$ such that
    \ean{
    \mu_1\big(M\neq Y, h(X)=Y\big) = \frac{1}{3}, ~~ \mu_1\big(M = Y, h(X)\neq Y)=\frac{2}{3},
    }
    and
    \ean{
    \mu_2\big(M\neq Y, h(X)=Y \big) = \frac{1}{3}, ~~ \mu_2\big(M=Y, h(X)=Y\big) = \frac{2}{3},
    }
    then $\hat{r}$ is according to $Bern(\frac{2}{3})$ in both cases. However, $r^*_{\mu_1} = 1$ while $r^*_{\mu_2}=0$. Furthermore, the expected deferral losses are equal to
    \ean{
    L_{\mathrm{def}}^{\mu_1}(r^*_{\mu_1}, h) = \frac{1}{3}, ~~ L_{\mathrm{def}}^{\mu_2}(r^*_{\mu_2}, h) = 0,
    }
    while after interchanging the deferral policies we have
    \ean{
    L_{\mathrm{def}}^{\mu_1} (r^*_{\mu_2}, h)=  L_{\mathrm{def}}^{\mu_2} (r^*_{\mu_1}, h)= \frac{2}{3}.
    }
        \item {$\mathbf{a=1, b=1}$}: In this case, the deferral rule is as
    \ean{
    \hat{r} = \left\{\begin{array}{cc}
        0 & h(x)=y, m\neq y \\
        1 & o.w.
    \end{array}\right. .
    }
        Next, if we set two probability measures $\mu_1$ and $\mu_2$ such that
    \ean{
    \mu_1\big(M\neq Y, h(X)=Y\big) = \frac{1}{3}, ~~ \mu_1\big(M = Y, h(X)\neq Y)=\frac{2}{3},
    }
    and
    \ean{
    \mu_2\big(M\neq Y, h(X)=Y \big) = \mu_2\big(M\neq Y, h(X)\neq Y\big) = \mu_2(M=Y, h(X)=Y)=\frac{1}{3},
    }
    then we can see that $\hat{r}$ has the distribution $Bern(\frac{2}{3})$. However, one can find the optimal deferral policies for the true distributions are $r^*_{\mu_1}=1$ and $r^*_{\mu_2}=0$. Furthermore, we have
    \ean{
    L^{\mu_1}_{\mathrm{def}}(r^*_{\mu_1}, h)=\frac{1}{3},
    }
    and
    \ean{
    L^{\mu_2}_{\mathrm{def}}(r^*_{\mu_2}, h)=\frac{2}{3},
    }
    while
    \ean{
    L_{\mathrm{def}}^{\mu_1}(r^*_{\mu_1}, h) = \frac{1}{3}, ~~ L_{\mathrm{def}}^{\mu_2}(r^*_{\mu_2}, h) = \frac{1}{3}.
    }

\end{enumerate}

\end{proof}

\section{Proof of Theorem \ref{thm: np}} \label{app: nphard}
    
    Let $\mathcal{X}=\{x_1, \ldots, x_n\}$ and $\mathcal{Y}=\{1, \ldots, n\}$. We first show that obtaining the optimal classifier is of $O(n)$ complexity, since in this case is equivalent to obtaining the Bayes optimal classifier in isolation. The reason is that, the unconstrained Bayes optimal classifier is a deterministic classifier that minimizes
    \ean{
    h^*(x)\in\argmin_{\hat{y}}\Ebb_{\mu_{Y|X}}\big[\ell_{AI}(Y, \hat{y}, X)|X=x\big],
    }
    for all $x\in \Xcal$. This is regardless of whether the deferral occurs or not. Therefore, this solution is further the solution to
    \ean{
    h^*(x)&\in\argmin_{\hat{y}}\Ebb_{\mu_{Y|X}}\big[(1-r(X))\ell_{AI}(Y, \hat{y}, X)|X=x\big]\nonumber\\
    &=\argmin_{\hat{y}}\Ebb_{\mu_{Y, M|X}}\big[(1-r(X))\ell_{AI}(Y, \hat{y}, X)+r(X)\ell_{H}(Y, M, X)|X=x\big],
    }
    for every rejection function $r$, including the optimal rejection function of the constrained optimization problem. In the particular case of expert intervention budget, the constraint is further independent of $h$ and is only a function of $r$. Therefore, the unconstrained Bayes classifier is an optimal classifier for the constrained L2D problem with human intervention budget.

    Next, we consider a specific case in which $\Ebb_{\mu_{Y|X}}\big[\ell_{AI}(Y, 1, X)|X=x\big]>\Ebb_{\mu_{Y|X}}\big[\ell_{AI}(Y, 0, X)|X=x\big]$ for all $x\in\mathcal{X}$, and therefore $h(x)=1$ over all input space. Further, we assume the data distribution has the property $\mu_{XYM}=\mu_{XY}\delta(M=Y)$, i.e. $M=Y$ on all the data. 
    %
    In this case, we know that
    \ean{
    \Ebb\big[\ell_{H}(Y, M, X)|X=x_i\big]=\Ebb\big[\mathds{1}_{M\neq Y}|X=x_i\big] =0,
    }
    and we define
    \ean{
    \Ebb\big[\ell_{AI}(Y, h(X), X)|X=x_i\big]=\Ebb\big[\mathds{1}_{Y\neq 1}|X=x_i\big]=\ell_i.
    }
    Now, if we set $\Pr(X=x_i)=p_i$, and $r(x_i)=r_i$, then the optimization problem 
    \ean{
        f^* = \argmin_{r(\cdot)\in \{0, 1\}^{\Xcal}} L^{\mu}_{\mathrm{def}} (h, r),
    }
     is equivalent to 
    \ean{
    \argmin_{r_i\in \{0, 1\}} \sum_{i=1}^n p_i\times 0\times r_i+p_i\times (1-r_i)\times \ell_i, ~~~~s.t.~~\sum_{i=1}^n p_i r_i \leq b,
    }
    that is equivalent to 
    \begin{align}\label{problem_instance}
    \argmax_{r_i\in \{0, 1\}} \sum_{i=1}^n p_i r_i \ell_i, ~~~ s.t.~\sum_{i=1}^n p_i r_i \leq b.
   \end{align}
Next, we show that the above problem spans all instances of the $0-1$ knapsack problem, which  is known to be NP-hard (Theorem 15.8 of \cite{papadimitriou1998combinatorial}). Let 
\begin{align}\label{knapsack_instance}
    \argmax_{r_i\in \{0, 1\}} \sum_{i=1}^n r_i c_i,~~~ s.t. ~~\sum_{i=1}^n w_i r_i \leq K,
\end{align}
be an instance of the $0-1$ knapsack problem \footnote{Note that in case that $w_i=0$ the Knapsack problem has a degenerate solution of $r_i=1$. Hence, we could drop that point and without loss of generality assume that $w_i$ is non-zero.} with $w_i,c_i>0$, $i\in[n]$, and $K>0$. With $\ell_i =\frac{c_i/w_i}{\sum_{i=1}^n c_i/w_i}$, $p_i=\frac{w_i}{\sum_{i=1}^n w_i}$ and  $b=\frac{K}{\sum_{i=1}^n w_i}$, problem \eqref{knapsack_instance} can be written in the form of \eqref{problem_instance}. Because of $\sum_{i=1}^nl_i=\sum_{i=1}^np_i=1$ this yields indeed a valid problem.

\section{Proof of Theorem~\ref{thm: gen_NP}} \label{app: prf_gen_NP}
We start this proof by introducing a few useful lemmas:
\begin{lemma}\label{lem: compact}
    The set $\Fcal = \Delta_n^{\Xcal}$ of all functions that map $\Xcal$ to an $n-$dimensional probability is weakly compact, i.e., for each sequence $\{f_n\}_{n=1}^{\infty}$, there is a sub-sequence $\{f_{n_i}\}$ and a function $f^*\in \Fcal$ such that for all measurable embedding functions $\psi$, we have
    \ean{
    \lim_{k\to\infty} \Ebb\big[\langle f_{n_k}, \psi  \rangle\big] = \Ebb\big[\langle f^*, \psi  \rangle\big].
    }
\end{lemma}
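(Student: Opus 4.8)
The plan is to recognize $\Fcal = \Delta_n^{\Xcal}$ as a bounded subset of a dual Banach space and invoke the Banach--Alaoglu theorem, upgraded to \emph{sequential} compactness via separability of the predual. Concretely, write each $f\in\Fcal$ in components $f=(f^1,\dots,f^n)$ with $f^i:\Xcal\to[0,1]$ and $\sum_{i=1}^n f^i(x)=1$ for $\mu_X$-a.e.\ $x$. Each component $f^i$ lies in the closed unit ball of $L^\infty(\mu_X)$, which is the dual of $L^1(\mu_X)$. The convergence demanded by the lemma, namely $\Ebb[\langle f_{n_k},\psi\rangle]=\sum_i\Ebb[f^i_{n_k}\psi^i]\to\sum_i\Ebb[(f^*)^i\psi^i]$, is exactly weak-$*$ convergence tested against the (bounded, hence $L^1$) coordinate functions $\psi^i$.

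First I would invoke Banach--Alaoglu to get that the closed unit ball $B$ of $L^\infty(\mu_X)$ is weak-$*$ compact. Assuming the underlying space $(\Xcal,\mu_X)$ is standard (so that $L^1(\mu_X)$ is separable, which holds for $\Xcal$ a separable metric space equipped with a Borel probability measure), the weak-$*$ topology restricted to $B$ is metrizable, and weak-$*$ compactness becomes weak-$*$ sequential compactness. Applying this coordinate-by-coordinate (a finite diagonal extraction over $i=1,\dots,n$, or equivalently sequential compactness of the product $B^n$), I extract a subsequence $\{f_{n_k}\}$ and limits $(f^*)^i\in B$ with $f^i_{n_k}\rightharpoonup^{*}(f^*)^i$ for every $i$. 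This already yields the stated convergence of expectations for every $\psi\in L^1$, in particular for every bounded embedding function.

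The remaining step is to check that the candidate limit $f^*=((f^*)^1,\dots,(f^*)^n)$ genuinely lies in $\Fcal$; this is where the simplex structure must be shown stable under weak-$*$ limits. For the inequalities, testing against an arbitrary nonnegative $g\in L^1$ gives $\Ebb[f^i_{n_k}g]\ge 0$ and $\Ebb[(1-f^i_{n_k})g]\ge 0$ for all $k$, and passing to the limit yields $\Ebb[(f^*)^i g]\ge 0$ and $\Ebb[(1-(f^*)^i)g]\ge 0$; since $g\ge 0$ is arbitrary this forces $0\le(f^*)^i\le 1$ a.e. For the normalization, testing against an arbitrary $g\in L^1$ gives $\Ebb[(\sum_i f^i_{n_k}-1)g]=0$, whose limit is $\Ebb[(\sum_i(f^*)^i-1)g]=0$, so $\sum_i(f^*)^i=1$ a.e. Hence $f^*\in\Fcal$.

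I expect the main obstacle to be precisely the passage from weak-$*$ compactness to weak-$*$ sequential compactness, which is not automatic for a general measure space: it relies on separability of the predual $L^1(\mu_X)$ so that the weak-$*$ topology on the unit ball is metrizable. I would therefore either state the mild standing assumption that $(\Xcal,\mu_X)$ is a standard Polish probability space (guaranteeing separability), or, to avoid it, replace the blanket ``all measurable $\psi$'' by a countable separating family of test functions and run a direct diagonal argument against that family, extending to the full class by density. A secondary point worth a line is that the embedding functions must be integrable for the expectations to be defined, which is immediate on a probability space under the paper's boundedness assumption on the $\psi_i$.
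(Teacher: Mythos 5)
Your proposal is correct and follows essentially the same route as the paper: the paper invokes the weak compactness theorem for the unit ball of bounded measurable functions (citing Lehmann's Theorem A.5.1, which is exactly the Banach--Alaoglu-plus-separability content you spell out), extracts subsequences coordinate-by-coordinate, and then checks that the limit remains a probability vector almost everywhere. The only cosmetic differences are that you make the separability/metrizability hypothesis explicit where the paper leaves it implicit in the cited theorem, and that you verify $0\le (f^*)^i\le 1$ and $\sum_i (f^*)^i=1$ by testing against arbitrary nonnegative $g\in L^1$, whereas the paper argues the normalization by contradiction with an indicator test function.
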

\begin{proof}
    We know that all components of each element of the function sequence is bounded by $1$. We define $\{f_{m}^{i}\}_{m=1}^{\infty}$ as the sequence of the $i$th component of the function sequence. 
    Therefore, using \cite[Theorem A.5.1]{lehmann1986testing} we know that there is a sub-sequence $\{f_{m_k^1}^1\}_{k=1}^{\infty}$ and a non-negative $1$-bounded function $f^*_1$, such that for each $\mu$-integrable function $\psi_1(x)$ we have
    \ean{
    \lim_{k\to\infty}\Ebb_{\mu}\big[f_{m_k}^1(x)\psi_1(x)\big] = \Ebb_{\mu}\big[f^*_1(x)\psi_1(x)\big].
    }
    Next, we can repeat the same process for $\{f_{m^1_k}^i\}_{k=1}^{\infty}$ where $i\in[2:n]$, and we can find a sub-sequence $m_k^{i+1}$ of $m^{i}_k$ and a non-negative $1-$bounded function $f^*_{i+1}$ for which
    \ean{
\lim_{k\to\infty}\Ebb_{\mu}\big[f_{m^{i+1}_k}^{i+1}(x)\psi_{i+1}(x)\big] = \Ebb_{\mu}\big[f^*_{i+1}(x)\psi_{i+1}(x)\big].
    }
    Now, since all sub-sequences of a converging sequence converges to the same limit, we can use $m_k^n$ that is the intersection of all sequences and show that
    \ean{
    \lim_{k\to\infty}\Ebb_{\mu}\big[f_{m^{n}_k}^i(x)\psi_{i}(x)\big] = \Ebb_{\mu}\big[f^*_{i}(x)\psi_{i}(x)\big],
    }
    for all $i\in [1:n]$ and integrable functions $\psi_i$. As a result, due to interchangeability of limit and summation, when the sum is over a finite set of elements,  it is easy to show that
    \ean{
    \lim_{k\to\infty} \Ebb\big[\langle f_{m_k^n}, \psi\rangle\big]
    &=
    \lim_{k\to\infty} \Ebb\big[\sum_{i=1}^n f_{m_k^n}^n(x) \psi_i(x)\big] = \sum_{i=1}^n\lim_{k\to\infty} \Ebb\big[ f_{m_k^n}^n(x) \psi_i(x)\big] \\
    &=\sum_{i=1}^n \Ebb_{\mu}\big[f^*_{i}(x)\psi_{i}(x)\big] = \Ebb_{\mu}\big[\langle f^*(x), \psi(x)\rangle\big].
    }
    Next, we need to show that $f^*\in \Fcal$. We already know that all components of $f^*$ is $1$-bounded and non-negative. Therefore, we only need to prove that all elements of $f^*$ sum up to $1$ almost everywhere. If not, then assume that there is a non-zero set $A$ where $\mu(A)>0$ and there exists $l>0$ such that $|\sum_i f^*_i - 1|\geq l$ for all $x\in A$. We know that there is either a subset $B\subseteq A $ with $\mu(B)>0$ such that for all $x\in B$ we have $\sum_i f^*_i(x) \geq 1+l$, or similarly a subset for which $\sum_i f^*_i(x) \leq 1-l$. The reason is that otherwise a non-zero measure set $A$ is a union of two zero-measure set, which is a contradiction. Without loss of generality we assume the first, which means $\sum_i f^*_i(x) \geq 1+l$ for $x\in B$. Now, if we define $\hat{\psi}(x) = [1, \ldots, 1]$ for $x\in B$ and otherwise $\hat{\psi}(x) = [0, \ldots, 0]$, then we have
    \ean{
    \Ebb_{\mu}\big[\langle f^*(x), \hat{\psi}(x)\rangle\big]\geq (1+l)\mu(B),
    }
    while
    \ean{
    \Ebb_{\mu}\big[\langle f_{m_k^n}(x), \psi\rangle\big] = 1,
    }
    for all $k\in \Nbb$. This is a contradction, because the limit of a constant sequence is not different from that constant value. Hence, $f^*$ sums up to $1$ almost everywhere, and that completes the proof.

{\bfseries Proof of Theorem~\ref{thm: gen_NP}}:
We prove the theorem using the following steps: (i) for the class $\Ccal$ of prediction functions for which $\Ebb\big[\langle f(x), \psi_i(x)\rangle \big]=\delta_i$ for $i\in[1:m]$, we show that the supremum of the objective function $\Ebb\big[\langle f(x), \psi_{0}(x)\rangle\big]$ is a maximum, (ii) we show that it is sufficient for a predictor $f\in \Ccal$ to be in form of \eqref{eqn: optimal_pred} to achieve the maximum objective $\Ebb\big[\langle f(x), \psi_{0}(x)\rangle\big]$ in $\Ccal$ and also for all predictors with $\Ebb\big[\langle f(x), \psi_i(x)\rangle\big]\leq \delta_i$, (iii) we show that the space of all possible constraints for any prediction function in $\Delta_{d}^{\Xcal}$ is convex and compact, and (iv) we show that if the tuple of constraints is an interior point of all possible tuples of constraints, then a point in $\Ccal$ achieves its maximum if and only if it follows the thresholding rule \eqref{eqn: optimal_pred} almost everywhere.

\begin{itemize}
    \item {\bfseries Step (i)}: Due to the definition of supremum, we know that for each $\ep>0$, there exists a function $f_\ep$ in $\Ccal$ such that $\Ebb\big[\langle f_\ep, \psi_{0}(x)\rangle\big]\geq \sup_{f\in \Ccal} \Ebb\big[\langle f, \psi_{0}(x)\rangle\big] - \ep $. Equivalently, there is a sequence of functions $f_n$ for which $\lim_{n\to\infty} \Ebb\big[\langle f_n, \psi_{0}(x)\rangle\big]= \sup_{f\in \Ccal} \Ebb\big[\langle f, \psi_{0}(x)\rangle\big]$.  Using weakly-compactness of the function class $\Delta_{n+1}^{\Xcal}$ as in Lemma \ref{lem: compact}, we know that for the sequence $f_n$, there exists a subsequence $f_{n_k}$ and a function $f^*\in \Delta_{n+1}^{\Xcal}$ such that
    \ean{
    \lim_{k\to\infty} \Ebb\big[\langle f_{n_k}, \psi_{m+1}(x)\rangle\big]=\Ebb\big[\langle f^*(x), \psi_{m+1}(x)\rangle\big].
    }
    Furthermore, we know that each subsequence $a_{n_k}$ of a converging sequence $a_n$ has the same limit as the limit of the mother sequence $a_{n}$ \cite[Chapter 2, Theorem 1]{pugh2002real}. Therefore, we have
    \ean{
    \Ebb\big[\langle f^*(x), \psi_{m+1}(x)\rangle\big] = \sup_{f\in \Ccal} \Ebb\big[\langle f, \psi_{m+1}(x)\rangle\big],
    }
    which means that the supremum of the objective is achievable by $f^*$.

    Moreover, for $\psi_i(x)$ where $i\in [1:m]$, we have  $\Ebb\big[\langle f_{n}, \psi_{i}(x)\rangle\big]=\delta_i$ for all $n$, which concludes
    \ean{
    \delta_i = \lim_{k\to\infty} \Ebb\big[\langle f_{n_k}, \psi_{i}(x)\rangle\big]=\Ebb\big[\langle f^*(x), \psi_{i}(x)\rangle\big].
    }
    This means that the equality constraints holds for $f^*$, i.e., $f^*\in \Ccal$, if it holds for each predictor $f_n$.

    \item {\bfseries Step (ii)}: Assume that there is a predictor $\hat{f}$ such that $\Ebb\big[\langle \hat{f}, \psi_i\rangle\big]\leq \delta_i$. In this step, we show that if exists a predictor $f$ in form of \eqref{eqn: optimal_pred} and in $\Ccal$, then $\hat{f}$ always has smaller objective than $\hat{f}$. To that end, consider the following expression:
    \ean{
    A = \Ebb\big[\langle f(x) - \hat{f}(x), \psi_{0}(x)-\sum_{i=1}^{m} k_i \psi_i(x)\rangle\big].
    }
    Now, we know that
    \ean{
\Ebb\big[\langle f(x) - \hat{f}(x), \sum_{i=1}^{m} k_i \psi_i(x)\rangle\big]    &= \sum_{i=1}^{m}k_i\Big(\Ebb\big[\langle f(x),\psi_i(x)\rangle\big] - \Ebb\big[\langle \hat{f}(x),\psi_i(x)\rangle\big]\Big)\\
&\overset{(a)}{=}\sum_{i=1}^{m}k_i\Big(\delta_i - \Ebb\big[\langle \hat{f}(x),\psi_i(x)\rangle\big]\Big) \geq 0,
    }
    where $(a)$ holds because $f\in \Ccal$. As a result, if $A\geq 0$, then we could show that
    \ea{
    \Ebb \big[\langle f(x)-\hat{f}(x), \psi_{0}(x)\rangle\big] \geq 0, \label{eqn: objective_better}
    }
    and complete the proof.

    To that end, first note that both $f$ and $\hat{f}$ are in $\Delta_{d}^{\Xcal}$, and therefore 
    \ean{
    \langle f(x), [1, \ldots, 1]\rangle = \langle \hat{f}(x), [1, \ldots, 1]\rangle = 1.
    }
    As a result, for any fixed scalar $c$, we have
    \ea{
    \langle f(x) - \hat{f} (x), \psi_{0}(x) - \sum_{i=1}^{m}k_i \psi_i(x)\rangle = \langle f(x) - \hat{f} (x), \psi_{0}(x) - \sum_{i=1}^{m}k_i \psi_i(x) - c\rangle. \label{eqn: equiv}
    }
    Next, we fix $c$ to be the maximum component of the vector $\psi_{0}(x)-\sum_{i=1}^m k_i \psi_i(x)$, i.e.,
    \ean{
    c := \max_{i\in [1:d]}\{ \psi_{0}^{i}(x)-\sum_{j=1}^m k_j \psi_j^i(x)\}.
    }
    Now, we rewrite $A$ using \eqref{eqn: equiv} as
    \ean{
    A &= \Ebb\Big[\langle f(x) - \hat{f} (x), \psi_{0}(x) - \sum_{i=1}^{m}k_i \psi_i(x) - c\rangle\Big]\\
    &=\sum_{i=1}^{d}\Ebb\Big[( f_i(x) - \hat{f}_i (x))( \psi_{0}^i(x) - \sum_{j=1}^{m}k_j \psi_j^i(x) - c)\rangle\Big]
    }
    Now, we consider two cases for which $E_1^i(x): f_i(x)>\hat{f}_i(x)$, and $E_2^i(x): f_i(x)\leq\hat{f}_i(x)$.
        If $f_i(x)>\hat{f}_i(x)$, then we have $f_i(x)>0$, because $1\geq \hat{f}_i(x)\geq 0$ for all $i\in[1:d]$. Therefore, using the definition of $\Scal_{d}$ and because $f\in \Scal_d$ we have 
    \ea{
    \psi_{0}^i(x)-\sum_{j=1}^m k_j \psi_j^i(x) = \max_{i\in [1:d]}\big\{ \psi_{0}^{i}(x)-\sum_{j=1}^m k_j \psi_j^i(x)\big\} = c. \label{eqn:e1}
    }
    Consequently, we have
    \ean{
&A=\sum_{i=1}^{d}\Ebb\Big[( f_i(x) - \hat{f}_i (x))( \psi_{0}^i(x) - \sum_{j=1}^{m}k_j \psi_j^i(x) - c)\rangle\Big]\\
&=\sum_{i=1}^{d}\Ebb\Big[( f_i(x) - \hat{f}_i (x))( \psi_{0}^i(x) - \sum_{j=1}^{m}k_j \psi_j^i(x) - c)\rangle|E_1^i(x)\Big] \Pr\big(E_1^i(x)\big)\nonumber\\
&+\sum_{i=1}^{d}\Ebb\Big[( f_i(x) - \hat{f}_i (x))( \psi_{0}^i(x) - \sum_{j=1}^{m}k_j \psi_j^i(x) - c)\rangle|E_2^i(x)\Big] \Pr\big(E_2^i(x)\big)\\
&\overset{(a)}{=} \sum_{i=1}^{d}\Ebb\Big[( f_i(x) - \hat{f}_i (x))( \psi_{0}^i(x) - \sum_{j=1}^{m}k_j \psi_j^i(x) - c)\rangle|E_2^i(x)\Big] \Pr\big(E_2^i(x)\big)\\
&\overset{(b)}{\geq} 0,
    }
    where $(a)$ holds due to \eqref{eqn:e1} and $(b)$ holds because $f_i(x)\leq \hat{f}_i(x)$ and $\psi_{m+1}^i(x)- \sum_{j=1}^{m}k_j \psi_j^i(x) \leq c = \max_{i\in [1:n+1]}\{\psi_{m+1}^i(x)- \sum_{j=1}^{m}k_j \psi_j^i(x)\}$. As a result, we have $A\geq 0$ that concludes \eqref{eqn: objective_better} and completes the proof of this step.
    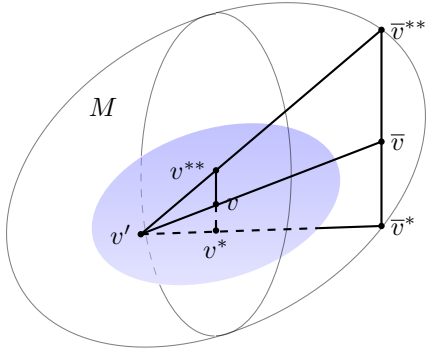
\begin{figure}
        \centering
\begin{tikzpicture}
\def\a{0.26}
\def\b{0.43}
    \coordinate (center) at (0,0);
    \coordinate (spherepoint) at (-1,-0.4);
    \coordinate (outer1) at (2.2,2.0+0.32);

    \coordinate (outermid) at (-1*\a+2.2-\a*2.2,-0.4*\a-0.29+\a*0.29);
    \coordinate (outer2) at (2.2,-0.29);
    \coordinate (innermid) at (2.2,2.0*\b+0.32*\b-0.29+\b*0.29);

    \coordinate (vstarstar) at (0.0, 0.45);
    \coordinate (vstar) at (0.0, -0.35);
    \coordinate (centerm) at (0.0, 0.39);
    \coordinate (centerm2) at (0.0, -1.75);
    \coordinate (centerm3) at (0.0, 0.39*2+1.75);
    \coordinate (m4) at (-0.99, 0.62);
    \coordinate (m5) at (-0.75, -1.07);
    \coordinate (M) at (-1.5, 1.3);
        \draw[black, opacity=0.5, rotate around={90:(centerm3)}] (centerm3) arc [start angle=0, end angle=83, x radius=2.15cm, y radius=1cm];
        \draw[dashed, black, opacity=0.5, rotate around={90:(m4)}] (m4) arc [start angle=83, end angle=130, x radius=2.15cm, y radius=1cm];
        \draw[ black, opacity=0.5, rotate around={90:(m5)}] (m5) arc [start angle=133, end angle=180, x radius=2.15cm, y radius=1cm];
    \shade[top color=blue!50, bottom color=blue!20, opacity=0.5, rotate around={17:(center)}] (center) ellipse (1.7cm and 1cm);
    \draw[black, opacity=0.5, rotate around={30:(centerm)}] (centerm) ellipse (3cm and 2cm);
    \draw[black, opacity=0.5, rotate around={90:(centerm2)}] (centerm2) arc [start angle=180, end angle=360, x radius=2.15cm, y radius=1cm];
    
    \filldraw[black] (center) circle (1pt) node[anchor=west] {$v$};
    \node at (M) {$M$};
    \filldraw[black] (vstarstar) circle (1pt) node[anchor=east] {$v^{**}$};
    \filldraw[black] (vstar) circle (1pt) node[anchor=north] {$v^{*}$};
    \filldraw[black] (spherepoint) circle (1pt) node[anchor=east] {$v'$};
    \filldraw[black] (outer1) circle (1pt) node[anchor=west] {$\overline{v}^{**}$};
    \filldraw[black] (outer2) circle (1pt) node[anchor=west] {$\overline{v}^{*}$};
    \filldraw[black] (innermid) circle (1pt) node[anchor=west] {$\overline{v}$};
    \draw[-,thick,black] (spherepoint) -- (outer1);
    \draw[dashed,thick,black] (spherepoint) -- (outermid);
    \draw[-,thick,black] (outermid) -- (outer2);
    \draw[-,thick,black] (outer1) -- (outer2);    
    \draw[-,thick,black] (spherepoint) -- (innermid);   
    \draw[dashed,thick,black] (center) -- (vstar);   
    \draw[-,thick,black] (center) -- (vstarstar);   
\end{tikzpicture}
        \caption{If an interior point of $\Ncal$ has one corresponding point at $M$, then so are all interior points of $N$}
        \label{fig: single-inner}
    \end{figure}
    \item {\bfseries Step (iii)}: In this step, we show that the space of joint set of expected inner-products
    \ean{
    \Gcal = \Big\{\big(\Ebb[\langle f(x), \psi_1(x)\rangle], \ldots, \Ebb[f(x), \psi_m(x)]\big):\, f\in \Delta_{d}^{\Xcal}\},
    }
    is compact under Euclidean metric, and convex. 

    To show the compactness of that space, assume that there is a sequence $\{g_n\}_{n=1}^{\infty}$ such that $\lim_{n\to \infty} g_n = g$, or accordingly there is a sequence of $f_n\in \Delta_{d}^{\Xcal}$ for which $\lim_{n\to\infty}\big(\Ebb[\langle f_n(x), \psi_1(x)\rangle], \ldots, \Ebb[f_n(x), \psi_m(x)]\big) = (g_1, \ldots, g_m)$. Since the metric is Euclidean, this is equivalent to $\lim_{n\to\infty} \Ebb[\langle f_n(x), \psi_i(x)\rangle] = g_i$ for all $i\in[1:m]$. The weak compactness of $\Delta_{d}^{\Xcal}$, as proved in Lemma \ref{lem: compact}, shows that there exists $f^*$ and a sub-sequence $f_{n_k}$ such that $\lim_{k\to\infty} \Ebb[\langle f_{n_k}(x), \psi_i(x)\rangle] = \Ebb[\langle f^*, \psi_i(x)\rangle]$ for all $i\in[1:d]$. Therefore, because of the choice of Euclidean metric, we have
    \ean{
    \lim_{k\to\infty} \Big(\Ebb[\langle f_{n_k}(x), \psi_1(x)\rangle], &\ldots, \Ebb[\langle f_{n_k}(x), \psi_m(x)\rangle]\Big) \nonumber\\&= \Big(\Ebb[\langle f^*, \psi_1(x)\rangle], \ldots, \Ebb[\langle f^*, \psi_m(x)\rangle]\Big),
    }
    which is equivalent to compactness of $\Gcal$.

    To show the convexity of $\Gcal$, it is enough to prove the convexity of  $\Delta_{d}^{\Xcal}$. The reason is that  $g(f)=\big(\Ebb[\langle f(x), \psi_1(x)\rangle], \ldots, \Ebb[f(x), \psi_m(x)]\big)$ is a linear functional of $f$, and a linear functional images a convex set to another convex set.

    To prove the convexity of $\Delta_{d}^{\Xcal}$, let $f, f'\in\Delta_{d}^{\Xcal}$. This means that $f_i(x), f_i'(x)\in [0, 1]$ for all $i\in [1:d]$ and $\sum_{i=1}^{d} f_i(x) = \sum_{i=1}^{d} f_i'(x) = 1$. Consequently, $af_i(x)+(1-a)f_i'(x)\geq 0 $, since $a, 1-a\geq 0$. Moreover, $\sum_{i=1}^{d} af_i(x)+(1-a)f'_i(x) = a\sum_{i=1}^{d} f_i(x)+(1-a) \sum_{i=1}^{d} f_i'(x) = a + 1-a = 1$. As a result of these two facts, $af+(1-a)f'\in \Delta_{d}^{\Xcal}$, and the proof of this step is completed.

    \item {\bfseries Step (iv)}: In this step we show that if the tuple of constraints is an interior points of all possible achievable tuples of constraints using different prediction functions, then a point in $\Ccal$ achieves its supremum in terms of objective $\Ebb\big[\langle f(x), \psi_{0}(x)\big]$ if and only if it is in form of \eqref{eqn: optimal_pred} almost everywhere. This is an extension to \cite[Theorem 3.1]{dantzig1951fundamental} and its proof resembles to the proof that is provided there. The sufficiency is already shown in Step (ii). Therefore, we only need to show that if a prediction function in $\Ccal$ maximizes the objective, then it is in form of \eqref{eqn: optimal_pred}. 

    Firstly, using Step (iii), we know that the space $\Ncal$ of all points $\big(\Ebb[\langle f(x), \psi_1(x)\rangle], \ldots, \Ebb[f(x), \psi_m(x)]\big)$ and the space $\Mcal$ of all points $\big(\Ebb[\langle f(x), \psi_1(x)\rangle], \ldots, \Ebb[f(x), \psi_{0}(x)]\big)$ are compact and convex. Now, assume that $v=(\delta_1, \ldots, \delta_m)$ is an interior point of $\Ncal$. Then, the corresponding set in $\Mcal$, i.e., $B_{v}=\{(\delta_0, \ldots, \delta_{m})\in \Mcal: \delta_{0}\in \Rbb\}$ has a supremum and an infimum of the first component that we name $\delta^{**}$ and $\delta^{*}$. Now, since $\Mcal$ is compact, then $v^{**}=(\delta^{**}, \delta_1,\ldots, \delta_{m})$ and $v^*=(\delta^*, \delta_1,\ldots, \delta_{m})$ are contained in $\Mcal$. Next, assume the following two cases:
    \begin{enumerate}
        \item { $\delta^{**}=\delta^*$}: In this case for all other points $\overline{v}=(\overline{\delta}_1, \ldots, \overline{\delta}_{m})$ of $\Ncal$, the corresponding set $B_{v'}$ in $\Mcal$ is a single point. The reason is that, if it is not so, then we have two points $\overline{v}^{**} = (\overline{\delta}^{**}, \overline{\delta}_1, \ldots, \overline{\delta}_m)$ and $\overline{v}^{*} = (\overline{\delta}^{*}, \overline{\delta}_1, \ldots, \overline{\delta}_m)$ where $\overline{\delta}^{**}>\overline{\delta}^{*}$. Now, since $v$ is an interior point of $\Ncal$, then on any direction in a small neighborhood around $v$ there exists a point $v'$ within $\Ncal$. Let that direction be opposite the connecting line of $v$ and $\overline{v}$, i.e., let $v$ be on a connecting line of $v'$ and $\overline{v}^*$. Now, make a convex hull using the three points $v'$, $\overline{v}^{**}$, and $\overline{v}^*$, which are all in $\Mcal$. Because of the convexity of $\Mcal$, the convex hull is also a subset of $\Mcal$. Since $v$ is an interior point of the convex hull, this means that a neighborhood of $v$ along any direction is inside $\Mcal$. Now, if we set $(m+1)$th axis to be that direction, we contradict with the fact that $\delta^*=\delta^{**}$. (See Figure \ref{fig: single-inner})

        Now, we know that in such case all points within $\Ncal$ have one corresponding point in $\Mcal$. Because of the convexity of $\Mcal$ this is equivalent to $\Mcal$ being a subset of a hyperplane with the generating formula $x_{0}=\sum_{i=1}^{m}k_i x_i+k_0$. Therefore, we have $\Ebb\big[\langle f, \psi_{0} \rangle\big] = \Ebb\big[\langle f, \sum_{i=1}^{m}k_i\psi_i\rangle \big]+k_0$ for all $f\in \Delta_{d}^{\Xcal}$. Therefore, for $d\geq 2$, if we set $f_1=(\tfrac{p(x)}{d-2}, \ldots, \tfrac{p(x)}{d-2}, \underbrace{1-p(x)}_{i}, \tfrac{p(x)}{d-2}, \ldots, \underbrace{0}_j, \tfrac{p(x)}{d-2}, \ldots, \tfrac{p(x)}{d-2})$ and $f_2=(\tfrac{p(x)}{d-2}, \ldots, \tfrac{p(x)}{d-2}, \underbrace{0}_{i}, \tfrac{p(x)}{d-2}, \ldots, \underbrace{1-p(x)}_j, \tfrac{p(x)}{d-2}, \ldots, \tfrac{p(x)}{d-2})$ for $p(x)\in [0, 1]^{\Xcal}$ , then we have 
        \ean{
        \Ebb\big[\langle f_1, \psi_{0} \rangle\big] - \Ebb\big[\langle f_1, \sum_{i=1}^{m}k_i\psi_i\rangle \big] = \Ebb\big[\langle f_2, \psi_{0} \rangle\big] - \Ebb\big[\langle f_2, \sum_{i=1}^{m}k_i\psi_i\rangle \big],
        }
        or equivalently
        \ean{
        \Ebb\big[(1-p(x))(\psi_{0}^i(x) - \sum_{t=1}^m k_t \psi_t^i(x) - \psi_{m+1}^j(x) +\sum_{t=1}^m k_t \psi_t^j(x))\big] = 0,
        }
        for all function $p(x)\in \Delta_{d}^{\Xcal}$. A similar result can be achieved for $d=2$ and by setting $f_1=(p(x), 1-p(x))$ and $f_2=(1-p(x), p(x))$. As a result, we have
        \ean{
        \psi_{0}^i(x) - \sum_{t=1}^m k_t \psi_t^i(x) = \psi_{0}^j(x) -\sum_{t=1}^m k_t \psi_t^j(x),
        }
        for all $i\neq j \in [1:d]$, and consequently
        \ean{
        \psi_{0}^i(x) - \sum_{t=1}^m k_t \psi_t^i(x) = \max_{j\in[1:d]}\{\psi_{0}^j(x) -\sum_{t=1}^m k_t \psi_t^j(x)\},
        }
        for all $i\in[1:n+1]$. As a result, there is a set of $k_1, \ldots, k_{m}$ such that $\psi_{0}(x) - \sum_{i=1}^m k_i\psi_i(x)$ has equal components almost everywhere. As a result, due to the freedom of choice for $\tau(\psi_{0}(x) - \sum_{i=1}^m k_i\psi_i(x), x)$ where $\tau\in \Scal_{d}$ and when we have more than one maximizer component, then, without loss of generality we can assume that every prediction function $f$ almost everywhere is in form of $\tau(\psi_{m+1}(x) - \sum_{i=1}^m k_i\psi_i(x), x)$.

        \item {$\delta^{**}>\delta^*$}: In such case, for all $\delta_{0}\in [\delta^*, \delta^{**}]$, we can show that $v=(\delta_0, \ldots, \delta_{m})$ is an interior point of $\Mcal$. To show that, we first find $m$ points $v'_1, \ldots, v'_m\in\Ncal$ that are linearly independent and such that their convex hull include $(\delta_1, \ldots, \delta_m)$. Using the definition of $\Mcal$, for each of these points $v'_i=({\delta'_1}^i, \ldots, {\delta'_m}^i)$, there exists $h'_i\in \Rbb$ such that $v''_i=(h'_i, {\delta'_1}^i, \ldots, {\delta'_m}^i)$ is within $\Mcal$. Now, we add the two points $v^{**}$ and $v^*$ to these sets of points. It is easy to see that $v''_i$s are linearly independent. Furthermore, we know that $(\delta_1, \ldots, \delta_m)$ is a convex combination of $v'_i$s, i.e., $\sum_i a_iv'_i=(\delta_1, \ldots, \delta_m)$. As a result, if $\sum_i b_iv''_i - v^{**}=(0, \ldots, 0)$, then we have $b_i = a_i$ for $i\in[1:m]$. Furthermore, we have $\sum a_ih'_i = \sum b_ih'_i = \delta^{**}$. Similarly, if $\sum_i c_iv''_i - v^{*}=(0, \ldots, 0)$ we have $c_i=a_i$ and $\sum a_ih'_i = \sum c_ih'_i = \delta^{*}$. As a result, since $\delta^*\neq\delta^{**}$ at least one of these cases would not occur, or equivalently, the dimension of the convex hull of $v''_1, \ldots v''_m, v^{**}, v^*$ is of dimension $m+1$. As a result, $v$ is an interior point of this convex hull, and because the convex hull is $(m+1)$-dimensional, it is an interior point of $\Mcal$.

        Now, since $v^{**}$ is a border point in $\Mcal$ and due to the convexity of $\Mcal$ there is a hyperplane $\Pcal$ such that it passes $v^{**}$ and it lays above all points of $\Mcal$. Since $v$ is an interior point of $\Mcal$, a neighborhood of $v$ is laid under the hyperplane $\Pcal$, hence $v$ cannot be laid on the hyperplane. Therefore, if the generating formula of such hyperplane is $\sum_{i=0}^{m}k_i x_i = \sum_{i=1}^{m}k_i \delta_i+k_{0}\delta^{**}$, since $v$ is not laid on the hyperplane we assure that $\sum_{i=1}^{m}k_i \delta_i+k_{0}\delta_{0} \neq  \sum_{i=1}^{m}k_i \delta_i+k_{0}\delta^{**}$, or equivalently $k_{0}\neq 0$. Hence, without loss of generality assume that $k_{0}=-1$. This shows that for all points in $(u_0, \ldots, u_{m})\in\Mcal$ we have
        \ean{
        u_{0} - \sum_{i=1}^m k_i u_i \leq \delta^{**} - \sum_{i=1}^m k_i \delta_i,
        }
        or equivalently, by the definition of $\Mcal$, for all prediction function $f$, we have
        \ean{
        \Ebb\big[\langle f(x), \psi_{0}(x)-\sum_{i=1}^mk_i\psi_i(x)\rangle\big] \leq \delta^{**} - \sum_{i=1}^m k_i \delta_i.
        }
        Assuming that $\hat{f}\in \Ccal$ maximizes the objective, we have
        \ea{
        \Ebb\big[\langle f(x), \psi_{0}(x)-\sum_{i=1}^mk_i\psi_i(x)\rangle\big] \leq \Ebb\big[\langle \hat{f}(x), \psi_{0}(x)-\sum_{i=1}^mk_i\psi_i(x)\rangle\big]. \label{eqn: bounded_lagrangian}
        }
        This shows that almost everywhere when there is a unique maximizing component $j$ in $\psi_{0}(x)-\sum_{i=1}^mk_i\psi_i(x)$, then $\hat{f}_j(x) = 1$. The reason is that otherwise and if there is a set $A$ such that $\mu(A)>0$ and for $x\in A$ and a choice of $l\in [0, 1)$, $\ep\in \Rbb$, and all $t\neq j$ we have $\psi_{m+1}^j(x)-\sum_{i=1}^mk_i\psi_i^j(x)\geq \ep + \psi_{m+1}^t(x)-\sum_{i=1}^mk_i\psi_i^t(x)$ while $f_j\leq 1-l$, then we can make a function $f(x)$ that is $f(x)=\hat{f}(x)$ for $x\in \Xcal\setminus A$ and $f(x)=[0, \ldots, \underbrace{1}_j, \ldots, 0]$ for $x\in A$. Such function leads to
        \ean{
        \Ebb\big[\langle f(x), \psi_{0}(x)-\sum_{i=1}^mk_i\psi_i(x)\rangle\big]\geq \ep l\mu(A)+\Ebb\big[\langle \hat{f}(x), \psi_{0}(x)-\sum_{i=1}^mk_i\psi_i(x)\rangle\big],
        }
        that is in contradiction with \eqref{eqn: bounded_lagrangian}. This completes the proof of this step.
        
    \end{enumerate}

\end{itemize}

\section{Proof of Theorem~\ref{thm: single_const}} \label{sec: prf_single_const}
In the following, we introduce a few lemmas that are useful in our proofs.
\begin{lemma} \label{lem: cont}
	For every random variable $X$ on $\Rbb$ we have 
	\ean{
	\lim_{\tau\to t^{-}}\Pr(\tau\leq X<t) = \lim_{\tau\to t^{+}} \Pr(t<X<\tau) = 0
	}
	\end{lemma}
	\begin{proof}
	For each increasing sequence $\{\tau_i\}_{i=1}^{\infty}$ we show that the first limit is zero, which proves the claim that the function of $\tau$ has a zero limit.
	
	We define 
	\ean{
	\Scal_i = [\tau_i, t),
	}
	and notice that 
	\ean{
	\Scal_1\supseteq \Scal_2\supseteq \ldots .
	}
	Further, we note that 
	\ean{
	\bigcap_{i=1}^{\infty} \Scal_i= \emptyset .
	}
	As a result 
	\ean{
	\Scal_1^{c}\subseteq \Scal_2^{c}\subseteq \ldots ,
	}
	and 
	\ean{
	\bigcup_{i=1}^{\infty} \Scal_i^{c}=\Rbb.
	}
	Next, because probability measure is $\sigma$-additive, we conclude its lower-semicontinuity \citep[][Theorem 13.6]{klenke2013probability}, and therefore we have
	\ean{
	\lim_{i\to\infty} \Pr(X\in \Scal_i^{c}) = \Pr(X\in \cup_{i=1}^{\infty} \Scal_i^{c}) = 1,
	}
	which proves $\lim_{i\to \infty} \Pr(X\in \Scal_i) =0$.
	
	We could take similar steps to show that since $\bigcap_{i=1}^{\infty}(t, \tau_i')=\emptyset$ for decreasing $\tau_i'$ we have 
	\ean{
	\lim_{i\to\infty}\Pr(X\in (t, \tau_i'))=0.
	}
	\end{proof}

\begin{lemma} \label{lem: semicont}
Let $\psi_1:\Xcal\to\Rbb^{d}$ be a bounded function. Further, we define two functions $C(k)=\Ebb\big[\langle f^*_{k, 0}(x), \psi_1(x)\rangle\big]$, $D(k)=\Ebb\big[\langle f^*_{k, 1}(x), \psi_1(x)\rangle\big]$, and $F(k)=\Ebb\big[\langle f^*_{k, 1}(x), \psi_0(x)\rangle\big]$, where $f^*_{k, p}$ is defined in Theorem \ref{thm: single_const}. Then,
    \begin{enumerate}
        \item $C(k)$ is monotonically non-increasing,
        \item $C(k)$ is upper semi-continuous,  
        \item $F(k)$ is monotonically non-decreasing,
        \item $D(k)$ is lower semi-continuous, and we have
        \item $\lim_{k'\uparrow k} C(k)=\lim_{k'\uparrow k} D(k)$
    \end{enumerate}
\end{lemma}
\begin{proof}
    \begin{enumerate}
    \item Firstly, let us define $\ell_k(x)=\psi_0(x)-k\psi_1(x)$. For the setting where $p=0$, the prediction function $f^*_{k, p}(x)$ is defined as
    \ea{
    f^*_{k, 0}(x, p) = \left\{\begin{array}{cc}
        1 & i=\min\{\argmin_{j\in \argmax \ell_k(x)} \big(\psi_1(x)\big)(j) \} \\
        0 & \text{otherwise}
    \end{array}
    \right. . \label{eqn: optimal_up}
    }
    Further, for $k_1, k_2$ such that $k_1\leq k_2$, let us define $j_1$ and $j_2$ as the only non-zero index of $f^*_{k_1, 0}(x, p)$ and $f^*_{k_2, 0}(x, p)$, respectively.
    To show that $C(k)$ is monotonically non-increasing we only need to show that $\big(\psi_1(x)\big)(j_1) = \langle f_{k_1, 0}^*(x), \psi_1(x)\rangle \geq \langle f_{k_2, 0}^*(x), \psi_1(x)\rangle = \big(\psi_1(x)\big)(j_2)$.  Assume that this does not occur, or equivalently $\big(\psi_1(x)\big)(j_1)<\big(\psi_1(x)\big)(j_2)$. In such case we have
    \ea{
    \max \ell_{k_2}(x)&\overset{(a)}{=}\big(\ell_{k_2}(x)\big)(j_2) \nonumber\\&= \big(\ell_{k_1}(x) - (k_2-k_1)\psi_1(x)\big)(j_2) \nonumber\\&\leq (k_1-k_2)\big(\psi_1(x)\big)(j_2) + \max_j \big(\ell_{k_1}(x) \big)(j) \nonumber\\&\overset{(b)}{=} (k_1-k_2)\big(\psi_1(x)\big)(j_2) + \big(\ell_{k_1}(x) \big)(j_1) \nonumber\\&\overset{(c)}{<}(k_1-k_2)\big(\psi_1(x)\big)(j_1) + \big(\ell_{k_1}(x) \big)(j_1) \nonumber\\&= \big(\ell_{k_2}(x)\big)(j_2), \label{eqn: monotone}
    }
    where $(a)$ and $(b)$ holds due to the definition of $j_1$ and $j_2$, and $(c)$ holds due to the assumption $\big(\psi_1(x)\big)(j_1)<\big(\psi_1(x)\big)(j_2)$. The last inequality is clearly a contradiction, and shows that $\langle f_{k_1, 0}^*(x), \psi_1(x)\rangle \geq \langle f_{k_2, 0}^*(x), \psi_1(x)\rangle$, and therefore $C(k_1)\geq C(k_2)$.
        \item Let us divide the space $\Xcal$ into two subsets
        \ean{
        A_k=\Big\{x\in \Xcal:\, \big|\argmax_{i}(\ell_k(x))(i)\big|=d\Big\},
        }
        \ean{
        B_k=\Big\{x\in \Xcal:\,\big|\argmax_{i}(\ell_k(x))(i)\big|\in[1:d-1]\Big\}.
        }
        For each $x\in A_k$ we know
        \ean{
        \big(f_{k, 0}^*(x)\big)(i)=\left\{\begin{array}{cc}
            1 & i=\min\{\argmin_j\big(\psi_1(x)\big)(j)\} \\
            0 & \text{otherwise}
        \end{array}\right.
        }
        Using previous part, we know that by increasing $k$ we have no increase in $\langle f^*_{k, 0}(x), \psi_1(x)\rangle$, and in this case since $\langle f^*_{k, 0}(x), \psi_1(x)\rangle = \min_j\big( \psi_1(x)\big)(j)$, then this value cannot reduce with the change of $k$. Therefore, $\langle f^*_{k, 0}(x), \psi_1(x)\rangle$ is a constant function for all $k'\geq k$, and consequently $\Ebb\big[\langle f^*_{k', 0}(x), \psi_1(x)\rangle| x\in A_k\big]\Pr(x\in A_k)$ is a constant function for $k'\geq k$.

        If $x\in B_k$, then for $j\notin \argmax_i \big(\ell_k(x)\big)(i)$ and $l\in \argmax_i \big(\ell_k(x)\big)(i)$, we have $\big(\ell_k(x)\big)(j) < \big(\ell_k(x)\big)(l)$. Define the set $C_{\delta}$ for $\delta\geq 0$ as
        \ean{
        C_{\delta} = \{x\in B_k:\,\big(\ell_k(x)\big)(j) \leq \big(\ell_k(x)\big)(l) - \delta\}.
        }
        Using Lemma \ref{lem: cont} we know that 
        \ean{
        \lim_{\delta \to 0} \Pr(B_k\setminus C_{\delta}) = 0,
        }
        or equivalently for all $\ep\geq 0$, there exists $\delta$ such that
        \ean{
        \Pr(B_k\setminus C_{\delta})\leq \ep'.
        }
        Therefore, if without loss of generality, we assume that $\psi_1(x)$ is bounded by $1$, then there exists $\delta\geq 0$ such that we have
        \ean{
        \Ebb\big[\langle f^*_{k', 0}(x), \psi_1(x)\rangle| x\in B_k\setminus C_{\delta}\big]\Pr(x\in &B_k\setminus C_{\delta})\nonumber\\&\overset{(a)}{\leq} \|\psi_1(x)\|_{\infty}\Pr(x\in B_k\setminus C_{\delta})\leq \ep/2,
        }
        where $(a)$ holds due to H\"older's inequality.

        If $x\in C_{\delta}$, and because we assumed $\|\psi_1(x)\|_{\infty}\leq 1$, then we know that by increasing $k$ to $k'\in[k-\delta/2, k+\delta/2)$, we have
        \ea{
        \Ical = \argmax \ell_{k'}(x) \subseteq \argmax \ell_k(x) = \Jcal.
        }
        This means that
        \ean{
        \langle f_{k, 0}^*(x), \psi_1(x)\rangle = \min_{j\in \Jcal} \big(\psi_1(x)\big)(j)\leq \min_{j\in \Ical} \big(\psi_1(x)\big)(j) = \langle f_{k', 0}^*(x), \psi_1(x)\rangle.
        }
        This, together with the previous part in which we showed $\langle f_{k, 0}^*(x), \psi_0(x)\rangle\geq \langle f_{k', 0}^*(x), \psi_0(x)\rangle$, concludes that $\langle f_{k, 0}^*(x), \psi_0(x)\rangle= \langle f_{k', 0}^*(x), \psi_0(x)\rangle$. This means that $\Ebb\big[\langle f^*_{k', 0}(x), \psi_0(x)\rangle| x\in  C_{\delta}\big]\Pr(x\in  C_{\delta})$ is a constant function for all $k'\geq k$ .

        Finally, since we have
        \ean{
        C(k')=\Ebb\big[\langle f^*_{k', 0}(x), \psi_1(x)\rangle\big] = 
        &\Ebb\big[\langle f^*_{k', 0}(x), \psi_1(x)\rangle| x\in A_k\big]\Pr(x\in A_k) \nonumber\\&+
        \Ebb\big[\langle f^*_{k', 0}(x), \psi_1(x)\rangle| x\in B_k\setminus C_{\delta}\big]\Pr(x\in B_k\setminus C_{\delta})
        \nonumber\\&+
        \Ebb\big[\langle f^*_{k', 0}(x), \psi_1(x)\rangle| x\in  C_{\delta}\big]\Pr(x\in  C_{\delta}),
        }
        and because the first term and the third term in RHS are constant in terms of $k'$ and for $k'\geq k$, and the second term is diminishing, then we have
        \ean{
        \big|C(k')-C(k)\big| = \big| &\Ebb\big[\langle f^*_{k', 0}(x), \psi_1(x)\rangle| x\in B_k\setminus C_{\delta}\big]\Pr(x\in B_k\setminus C_{\delta}) 
        \nonumber\\
        &- \Ebb\big[\langle f^*_{k, 0}(x), \psi_1(x)\rangle| x\in B_k\setminus C_{\delta}\big]\Pr(x\in B_k\setminus C_{\delta}) \big|\leq \ep/2 +\ep/2,
        }
        which is equivalent to say that $\lim_{k'\uparrow k} C(k')=C(k)$.

        \item For $p=1$, we know that the prediction function $f^*_{k, p}(x)$ is obtained as
        \ean{
        f^*_{k, 1}(x) = \left\{
        \begin{array}{cc}
            1 & i = \min \{\argmax_{j\in \argmax \ell_k(x)} \big(\psi_0(x)\big)(j)\} \\
            0 & \text{otherwise}
        \end{array}
        \right. .
        }
        If we define $\psi_1'(x):= - \psi_0(x)$, then we have
        \ean{
        f^*_{k, 1}(x) = \left\{
        \begin{array}{cc}
            1 & i = \min \{\argmin_{j\in \argmax \ell_k(x)} \big(\psi_1'(x)\big)(j)\} \\
            0 & \text{otherwise}
        \end{array}
        \right. .
        }
        Since the above is equal to \eqref{eqn: optimal_up}, then using the first part of this lemma, we know that
        $
        \Ebb\big[\langle  f^*_{k, 1}(x), \psi'_1(x) \rangle \big] = - \Ebb\big[\langle  f^*_{k, 1}(x), \psi_0(x) \rangle \big]
        $
        is monotonically non-increasing, which is equivalent to $F(k)=\Ebb\big[\langle  f^*_{k, 1}(x), \psi_0(x) \rangle \big]$ being monotonically non-decreasing.

        \item This part is similar to the second part of the proof. In fact, if $x\in A_k$, then we have
        \ea{
        \big(f_{k, 1}^*(x)\big)(i)=\left\{\begin{array}{cc}
            1 & i=\min\{\argmax_j\big(\psi_0(x)\big)(j)\} \\
            0 & \text{otherwise}
        \end{array}\right. . \label{eqn: optim_down}
        }
        For $k'\leq k$ and because of the third part of this lemma, we know that $\langle f^*_{k', 1}(x), \psi_0(x)\rangle \geq \langle f^*_{k, 1}(x), \psi_0(x)\rangle$. Furthermore, because of \eqref{eqn: optim_down} we know that  $\langle f^*_{k, 1}(x), \psi_0(x)\rangle = \max \psi_0(x)$, and therefore by reducing $k'$, the prediction function $ f^*_{k', 1}(x)$ stays constant. As a result, $\Ebb\big[\langle f^*_{k', 1}(x), \psi_1(x)\rangle| x\in A_k\big]\Pr(x\in A_k)$ is a constant function for $k'\leq k$.

        Furthermore, similar to the second part of this lemma, we can show that for each $\ep>0$, there exists $\delta'\geq 0$ such that for all $0\leq \delta\leq \delta'$ we have
        \ea{
        \Ebb\big[\langle f^*_{k', 1}(x), \psi_1(x)\rangle| x\in B_k\setminus C_{\delta}\big]\Pr(x\in &B_k\setminus C_{\delta})\nonumber\\&\overset{(a)}{\leq} \|\psi_1(x)\|_{\infty}\Pr(x\in B_k\setminus C_{\delta})\leq \ep/4, \label{eqn: setminus_bounded}
        }

        Moreover, for the case of $x\in C_{\delta}$, since in this case $\Jcal\subseteq \Ical$, then we know that 
        \ea{
        \langle f_{k, 1}^*(x), \psi_0(x)\rangle = \max_{j\in \Jcal} \big(\psi_0(x)\big)(j)\leq \max_{j\in \Ical} \big(\psi_0(x)\big)(j) = \langle f_{k', 1}^*(x), \psi_0(x)\rangle. \label{eqn: one_ineq}
        }
    Next, using the third part of this lemma, we know that for $k'\leq k$ we have $\langle f_{k', 1}^*(x), \psi_0(x)\rangle \leq \langle f_{k, 1}^*(x), \psi_0(x)\rangle$, which together with \eqref{eqn: one_ineq} concludes that $\langle f_{k, 1}^*(x), \psi_0(x)\rangle = \langle f_{k', 1}^*(x), \psi_0(x)\rangle$. Next, because $\big(\psi_0(x) - k\psi_1(x)\big)(i)=\big(\psi_0(x) - k\psi_1(x)\big)(j)$ for $i, j\in \Jcal$, then we know that $\Big|\big( \ell_{k'}(x)\big)(i)-\big( \ell_{k'}(x)\big)(j)\Big|=\big|(k-k')\Big(\big(\psi_1(x))(i)-\big(\psi_1(x))(j)\Big)\leq 2|k-k'|$. Therefore, if for $i, j\in \Jcal$ we know that $\big(\psi_0(x)\big)(i)=\big(\psi_0(x)\big)(j)$, then the difference between $\psi_1$ for those indices is bounded as
    \ea{
    \Big|\big(\psi_1(x)\big)(i)-\big(\psi_1(x)\big)(j)\Big|&\leq \frac{1}{k}\Big|\big(\psi_0(x)\big)(i)-\big(\psi_0(x)\big)(j)\Big| \nonumber\\&\quad+ \Big|\big(\ell_k(x)\big)(i)-\big(\ell_k(x)\big)(j)\Big|\nonumber\\&\leq 2|k-k'|. \label{eqn: bound_diff_psi0}
    }
    Now, we know that because $x\in C_{\delta}$, then $\langle f^*_{k, 1}(x), \psi_1(x) \rangle = \big(\psi_1(x)\big)(i)$ for $i\in \argmax_{j\in \Jcal} \big(\psi_0(x)\big)(j)$, and $\langle f^*_{k', 1}(x), \psi_1(x) \rangle = \big(\psi_1(x)\big)(j)$ for $j\in \argmax_{k\in \Ical} \big(\psi_0(x)\big)(j)$. Hence, we can see that $i\in \Jcal\subseteq \Ical$ and $j\in \Ical$, and because $\big(\psi_0(x)\big)(i)  = \langle f^*_{k, 1}(x), \psi_0(x) \rangle =  \langle f^*_{k', 1}(x), \psi_0(x) \rangle = \big(\psi_0(x)\big)(j)$, and due to \eqref{eqn: bound_diff_psi0} we have
    \ean{
    \Big|\langle f^*_{k, 1}(x), \psi_1(x) \rangle - \langle f^*_{k', 1}(x), \psi_1(x)\rangle\Big|\leq 2|k-k'|,
    }
    as long as $k'\in [k-\delta/2, k)$. Therefore, if we set $\delta = \max\{\delta', \ep/2\}$ we have
    \ean{
    \big|\langle f^*_{k, 1}(x), \psi_1(x) \rangle - \langle f^*_{k', 1}(x), \psi_1(x)\rangle\big|\leq \ep/2,
    }
    and therefore
    \ea{
    \Big|\Ebb\big[\langle f^*_{k', 1}(x), \psi_1(x)\rangle| x\in  C_{\delta}\big] - &\Ebb\big[\langle f^*_{k, 1}(x), \psi_1(x)\rangle| x\in  C_{\delta}\big]\Big|\nonumber\\&\leq \Ebb\Big[\big\|\langle f^*_{k, 1}(x), \psi_0(x) \rangle - \langle f^*_{k', 1}(x), \psi_0(x)\rangle\big\|\Big]\leq \ep/2 \label{eqn: bound_diff_inners3}
    }
    Finally, we can rewrite $D(k')$ as
    \ean{
        D(k')=\Ebb\big[\langle f^*_{k', 1}(x), \psi_0(x)\rangle\big] = 
        &\Ebb\big[\langle f^*_{k', 1}(x), \psi_0(x)\rangle| x\in A_k\big]\Pr(x\in A_k) \nonumber\\&+
        \Ebb\big[\langle f^*_{k', 1}(x), \psi_0(x)\rangle| x\in B_k\setminus C_{\delta}\big]\Pr(x\in B_k\setminus C_{\delta})
        \nonumber\\&+
        \Ebb\big[\langle f^*_{k', 1}(x), \psi_0(x)\rangle| x\in  C_{\delta}\big]\Pr(x\in  C_{\delta}),
        }
        and because of \eqref{eqn: setminus_bounded} and \eqref{eqn: bound_diff_inners3}, and since the first term is a constant function in terms of $k'$ and for all $k'\in[k-\delta/2, k]$, then we have
        \ea{
        |D(k')-D(k)|\leq \ep/4 +\ep/4 + \ep/2 = \ep.
        }
        This shows that $D(k')$ is lower semi-continuous around $k'=k$.

        \item To prove this part, we first divide $\Xcal$ into two subsets
        \ea{
        G_{k'}=\Big\{x\in \Xcal:\, \big|\argmax_{i}(\ell_{k'}(x))(i)\big|=1\Big\}, \label{eqn: Gk}
        }
        and $H_{k'}=\Xcal\setminus G_{k'}$.
        We know that for $x\in G_{k'}$ we have
        \ea{
        f^*_{k', 0}(x)= f^*_{k', 1}(x)=\left\{
        \begin{array}{cc}
            1 & i=\min\{j\in\argmax \ell_{k'}(x)\} \\
            0 & \text{otherwise}
        \end{array}
        \right.
        }
        This concludes that
        \ea{
        \Ebb\big[\langle f^*_{k', 0}(x), \psi_1(x) \rangle |\, x\in G_k \big] = \Ebb\big[\langle f^*_{k', 1}(x), \psi_1(x) \rangle |\, x\in G_k\big]. \label{eqn: equa_parts_diff}
        }
        Moreover, let us define the set $\Psi_1^{k'}=\{x\in \Xcal:\, \exists c\in \Rbb, \forall j\in \argmax\ell_{k'}(x), \big(\psi_1(x)\big)(j)=c\}$ . We show that sum of the probabilities of $H_{k'}\setminus \Psi_1^{k'}$ is always bounded by $2^d$ for a set of choices for $k'$, or equivalently
        \ea{
        \sum_{k'\in \Kcal} \Pr(x\in H_{k'}\setminus \Psi_1^{k'}) \leq 2^{d}, \label{eqn: 2n+1_bound}
        }
        for all finite or countably infinite choice of $\Kcal \subseteq \Rbb^{+}$.
        In fact, we know that for each instance $x$, $\argmax_{j\in[1:d]} \big(\ell_{k}(x)\big)(j)$ can take up to $2^{d}$ cases of all subsets of $\{1, \ldots, d\}$. Therefore, we need to show that there cannot exist two values of $k, k'$ such that for $x\in \big(H_{k}\setminus\Psi_1^{k}\big)\cap \big(H_{k'}\setminus\Psi_1^{k'}\big)$ we have 
        \ea{
        \argmax_{j} \big(\ell_{k}(x)\big)(j)=\argmax_{j} \big(\ell_{k'}(x)\big)(j). \label{eqn: args}}
        If we prove such identity, then due to pigeonhole principle, we have
        \ea{
        \sum_{k'\in \Kcal} \mathds{1}_{x\in H_{k'}\setminus \Psi_1^{k'}}\leq 2^{d}, 
        }
        which by integration over all values of $x$ concludes in \eqref{eqn: 2n+1_bound}. 
        We prove this claim by contradiction. If we assume $k, k'\in \Kcal$ such that for $x\in \big(H_{k}\setminus \Psi_1^{k}\big)\cap \big(H_{k'}\setminus \Psi_1^{k'}\big) $  the identity \eqref{eqn: args} holds, then because $x\in H_k\cap H_{k'}$, then the size of $\argmax_{j} \big(\ell_{k}(x)\big)(j)$ and $\argmax_{j} \big(\ell_{k'}(x)\big)(j)$ is at least $2$.
This concludes that 
        \ean{
        \big(\psi_0(x)-k\psi_1(x)\big)(i) = \big(\psi_0(x)-k\psi_1(x)\big)(j)}
        as well as
        \ean{
        \big(\psi_0(x)-k'\psi_1(x)\big)(i) = \big(\psi_0(x)-k'\psi_1(x)\big)(j)}
        for all choices of $i, j\in \argmax\ell_k(x)$. As a result, we have
        \ean{
        (k-k')\Big(\big(\psi_1(x)\big)(i)-\psi_1(x)\big)(j)\Big) = 0,
        }
        and because $k'\neq k$, we have
        \ean{
        \big(\psi_1(x)\big)(i)=\psi_1(x)\big)(j),
        }
        for all $i, j\in \argmax \ell_k(x)$. Therefore, $x\in \Psi_1^{k'}$ and that is a contradiction.

        Now that we know that the sum of the probabilities of $\Pr(x\in H_{k'}\setminus \Psi_1^{k'})$ is bounded, we can renormalize that and make a probability measure as
        \ea{
        g(A)=\frac{\sum_{k\in A, \Pr(x\in H_{k}\setminus \Psi_1^k)>0}\Pr(x\in H_{k}\setminus \Psi_1^k)}{\sum_{k: \Pr(x\in H_{k}\setminus \Psi_1^k)> 0} \Pr(x\in H_{k}\setminus \Psi_1^k)}.
        }
        Due to Lemma \ref{lem: cont}, for all $\ep\geq 0$ we can find a small enough $\delta\geq 0$ such that $g([k-\delta, k))\leq \ep/2^{d+1}$, and therefore for all $k'\in[k-
    \delta, k)$ we have
    \ean{
    \Pr(x\in H_{k'}\setminus \Psi_1^{k'})&\leq \sum_{t\in [k-\delta, k), \Pr(x\in H_t\setminus \Psi_1^t)>0]} \Pr(x\in H_t\setminus \Psi_1^t) \nonumber\\&
    =g\big([k-\delta, k)\big) \sum_{k: \Pr(x\in H_{k}\setminus \Psi_1^k)> 0} \Pr(x\in H_{k}\setminus \Psi_1^k) \nonumber\\
    &\leq \frac{\ep}{2^{d+1}}2^{d} = \ep/2,
    }
    where the last inequality holds because of \eqref{eqn: 2n+1_bound}.

        Now, using this and due to \eqref{eqn: equa_parts_diff}, and by defining $g_i(x)= \langle f^*_{k, i}(x), \psi_0(x)\rangle$ for $i=1, 2$, we can bound the difference of $D(k)$ and $C(k)$ as
        \ean{
        \big|D(k)-C(k)\big|&=\Big|\Ebb\big[g_1(x) - g_0(x) |\, x\in H_{k'}\big]\Pr(x\in H_{k'})\Big|\nonumber\\
        &\leq\Pr(x\in H_{k'}\setminus\Psi_1^{k'} | x\in H_{k'}) \Big|\Ebb\big[g_1(x) - g_0(x) |\, x\in H_{k'}\setminus \Psi_1^{k'}\big]\Big|\nonumber\\&
        \phantom{=}+\Pr(x\in H_{k'}\cap\Psi_1^{k'} | x\in H_{k'}) \Big|\Ebb\big[g_1(x) - g_0(x) |\, x\in H_{k'}\cap \Psi_1^{k'}\big]\Big|\nonumber\\
        &\overset{(a)}{\leq} 2(\ep/2) + \Big|\Ebb\big[g_1(x) - g_0(x) |\, x\in H_{k'}\cap \Psi_1^{k'}\big]\Big|\nonumber\\
        &\overset{(b)}{=} \ep,
        }
        where $(a)$ holds because $\|f^*_{k, 0}- f^*_{k, 1}\|_{1}\leq \|f^*_{k, 0}\|_1+\|f^*_{k, 1}\|_1 = 2$ and because of H\"older inequality we have $\big|\langle f^*_{k, 0}(x)-f^*_{k, 1}, \psi_1(x)\rangle\big|\leq \|f^*_{k, 0}- f^*_{k, 1}\|_{1}\|\psi_1(x)\|_{\infty}\leq 2$. Moreover, to show that $(b)$ holds we know that for $x\in \Psi_1^{k'}$ we have $\big(\psi_1(x)\big)(i)=\big(\psi_1(x)\big)(j)$ for all $i, j\in \argmax \ell_{k'}(x)$.  Therefore, because we know $g_0(x)=\big(\psi_1(x)\big)(i)$ for $i\in \argmin_{j\in \argmax_l \big(\ell_{k'}(x)\big)(l)} \big(\psi_1(x)\big)(j)\subseteq \argmax_l\big( \ell_{k'}(x)\big)(l)$ and $g_1(x)=\big(\psi_1(x)\big)(j)$ for $j\in \argmax_{j\in \argmax_l \big(\ell_{k'}(x)\big)(l)} \big(\psi_0(x)\big)(j)\subseteq \argmax_l\big( \ell_{k'}(x)\big)(l)$, we have $g_0(x)=g_1(x)$. The above inequality proves that the limit of $C(k')$ and $D(k')$ for $k'\uparrow k$ are equal and that completes the proof.
    \end{enumerate}
\end{proof}

To prove this theorem, we take the following steps: (i) We show that the set $\Kcal$ has a non-negative member, (ii) we show that the prediction function $f^*_{k, p}(x)$ achieves the inequality constraint tightly, and by Theorem \ref{thm: gen_NP} we can conclude that  $f^*_{k, p}(x)$ is the optimal solution.

\begin{itemize}
    \item {\bfseries step (i)}: It is easy to see that the Bayes optimal solution of the prediction function in \eqref{eqn: optim_gen} without any constraint is
    \ean{
    \big(f^*(x)\big)(i)=\left\{\begin{array}{cc}
        1 & \big(\psi_0(x)\big)(i)> \big(\psi_0(x)\big)(j) \text{  for all } j\neq i \\
        0 & \big(\psi_0(x)\big)(i)<\max_j \big(\psi_0(x)\big)(j)\\
        p_i(x)& \text{otherwise}
    \end{array} \right. ,
    }
    where $p_i(x)\in\Delta_{d}$ is an arbitrary vector. We can see that by setting
    \ean{
    \big(p_i(x)\big)(j)=\left\{\begin{array}{cc}
        1 & j=\min\{\argmin_{t\in \argmax \ell_0(x)} \big(\psi_1(x)\big)(t) \} \\
        0 & \text{otherwise}
    \end{array} \right. ,
    }
    then the two prediction functions $f^*(x)$ and $f^*_{0, 0}(x)$ are equal (See statement of Theorem \ref{thm: single_const}). 

    Now,  in the first and second part of Lemma \ref{lem: semicont} we have shown that $\Ebb\big[\langle f^*_{k, 0}(x), \psi_1(x)\rangle\big]$ is upper semi-continuous and monotonically non-increasing. Therefore, for all $k\in \Rbb^{+}$ we have
    \ean{
    \Ebb\big[\langle f^*_{k, 0}(x), \psi_1(x)\rangle\big]\leq \Ebb\big[\langle f^*_{0, 0}(x), \psi_1(x)\rangle\big] = \Ebb\big[\langle f^*(x), \psi_1(x)\rangle\big].
    }
    Similarly, we can show that for $k\to\infty$, the solution is equivalent to the Bayes minimizer of 
    \ean{
    f^{**}(x)=\argmin_{f\in \Delta_{d}^{\Xcal}}\Ebb\big[\langle f(x), \psi_1(x)\rangle\big].}
    Therefore, since $\delta$ is an interior point of all possible values, it lays on the interval $\big(\Ebb\big[\langle f^{**}(x), \psi_1(x)\rangle\big], \Ebb\big[\langle f^*(x), \psi_1(x)\rangle\big]\big)$,  due to the montonicity and upper semi-continuity of $\Ebb\big[\langle f^*_{k, 0}, \psi_1(x)\rangle\big]$, we can find $t$ such that
    \ea{
    \Ebb\big[\langle f^*_{t, 0}(x), \psi_1(x)\rangle\big]\leq \delta \leq\lim_{\tau\uparrow t}\Ebb\big[\langle f^*_{k, 0}(x), \psi_1(x)\rangle\big] . \label{eqn: cont_exp}
    }
    Moreover, this $t$ should be a positive scalar, since otherwise we have
    \ean{
    \Ebb\big[\langle f^*_{t, 0}(x), \psi_1(x)\rangle\big]\geq \Ebb\big[\langle f^*_{0, 0}(x), \psi_1(x)\rangle\big] = \Ebb\big[\langle f^*(x), \psi_1(x)\rangle\big]>\delta,
    }
    which is a contradiction to \eqref{eqn: cont_exp}.
    \item {\bfseries step (ii)}:
    In this step, we consider the following two cases:
    \begin{itemize}
        \item {\bfseries\boldmath $C(t)$ is continuous at $t$}: In this case, \eqref{eqn: cont_exp} is equivalent to $\delta = C(t)=\Ebb\big[\langle f^*_{t, 0}(x), \psi_0(x)\rangle\big]$, which means that the prediction function $f^*_{k, 0}(x)$ achieves the constraint tightly, and therefore using Theorem \ref{thm: gen_NP} $f^*_{k, 0}(x)$ is the optimal solution.

        \item {\bfseries\boldmath $C(t)$ is discontinuous at $t$}: To show that we can achieve the highest constraint in this case, we first condition the constraint into two events $x\in G_k$ and $x\in \Xcal\setminus G_k$, where $G_k$ is defined in \eqref{eqn: Gk}. We know that in the latter case $x\in \Xcal\setminus G_k$, the prediction function $f^*_{k, p}$ can be decomposed into two components 
        \ea{
        f^*_{k, p}(x)=pf^*_{k, 1}(x)+(1-p)f^*_{k, 0}(x), \label{eqn: decomp}}
        while for $x\in G_k$ the prediction function $f^*_{k, p}(x)=f^*_{k, 0}(x)=f^*_{k, 1}(x)$ for all $p\in [0, 1]$.
        Therefore, in both cases \eqref{eqn: decomp} holds, and we have
        \ea{
        \Ebb\big[\langle f^*_{k, p}(x), \psi_1(x)\rangle\big] &= \Ebb\big[\langle pf^*_{k, 1}(x)+(1-p)f^*_{k, 0}(x), \psi_1(x)\rangle\big]\nonumber\\
        &=p\Ebb\big[\langle f^*_{k, 1}(x), \psi_1(x)\rangle\big]+(1-p)\Ebb\big[\langle f^*_{k, 0}(x), \psi_1(x)\rangle\big]\nonumber\\
        &=pD(k)+(1-p)C(k), \label{eqn: const_decompose}
        }
        where $C(\cdot)$ and $D(\cdot)$ are defined in Lemma \ref{lem: semicont}. Using this lemma, we know that $D(\cdot)$ is lower semi-continuous, and $\lim_{k'\uparrow k} C(k) = \lim_{k'\uparrow k} D(k)$. Therefore, together with \eqref{eqn: const_decompose} and the definition of $p$ in the statement of theorem, we have
        \ea{
        \Ebb\big[\langle f^*_{k, p}(x), \psi_0(x)\rangle\big] =& p \lim_{k'\uparrow k} C(k') +(1-p) C(k) \nonumber\\
        =& \frac{C(k)-c}{C(k)-\lim_{k'\uparrow k} C(k')}\lim_{k'\uparrow k}C(k')\nonumber\\&+\frac{c-\lim_{k'\uparrow k} C(k')}{C(k)-\lim_{k'\uparrow k} C(k')}C(k)  = c. \label{eqn: achieve_c}
        }
        Equivalently, the prediction function achieves the constraint inequality tightly, and therefore by Theorem \ref{thm: gen_NP} this is sufficient to be the optimal solution to the constrained optimization problem.  
    \end{itemize}
\end{itemize}

\section{Proof of Theorem \ref{thm: constraint_gen}} \label{app: prf_constraint_gen}
Through the proof of this theorem, we use \cite[Lemma 3.2.3]{blumer1989learnability} that implies that the class of multiplications of $k$  binary functions $f_i(x)$ for $i\in[1:k]$ within a hypothesis class with VC dimension $VC(f_i)=d$ itself has a VC dimension that is bounded as
\ea{
VC(\underbrace{\big\{\prod_{i=1}^k f_i: f_i\in \Hcal_i, VC(\Hcal_i)=d\big\}}_{\Hcal'})\leq 2dk \log 3k. \label{eqn: bounded_vc}
}
In fact, we use a simple extension to this lemma for which the VC dimension of the functions is not $d$ itself but is bounded above by $d$. In such case we claim that \eqref{eqn: bounded_vc} still holds. The starting point for the proof to this lemma is bounding the size of the restriction $\Pi_{\Hcal}(S) = |\{h\cap S: h\in \Hcal\}|$ for the hypothesis class $\Hcal$ by
\ea{
\Pi_{\Hcal}(S) \leq \big(\frac{em}{d}\big)^d, \label{eqn: bound_growth}
}
where $VC(\Hcal)=d$ and $m=|S|$.
However, this inequality holds for the hypothesis classes that have VC dimensions that are bounded by $d$. The reason is  increasingly monotonicity of RHS of \eqref{eqn: bound_growth}. In fact, by obtaining the gradient of $\big(\frac{em}{d}\big)^d$ in terms of $d$ we have
\ean{
\frac{\partial \big(\frac{em}{d}\big)^d}{\partial d} = \frac{\partial \big(e^{d\log em/d}\big)}{\partial d} = (\log em/d- 1)(\frac{em}{d}\big)^d,
}
which is nonnegative as long as $m\geq d$. 
If we particularly set $m^*=2dk \log 3k$, then $m^*\geq d$ and therefore \eqref{eqn: bound_growth} holds. Next, similar to the proof of \cite[Lemma 3.2.3]{blumer1989learnability}, we can show that for the set $S$ with size $m^*$ we have
\ean{
\Pi_{\Hcal'}(S)\leq \Pi_{\Hcal_1}^k(S)\leq \big(\frac{em^*}{d}\big)^{dk}\leq 2^{m^*},
}
which means that $S$ cannot be shattered by $\Hcal'$, and therefore the VC dimension of this hypothesis class must be bounded by $m^*$.


We further use the following lemma:

    
\end{proof}

\begin{lemma} \label{lem: signs}
    For arbitrary sets of functions $\{\phi_1^i(x)\}_{i=1}^n$ and $\{\phi_2^i(x)\}_{i=1}^n$ on $\Rbb$ and for a given $d\in \Rbb$ the hypothesis class 
    \ean{
    \Hcal = \big\{\prod_{i=1}^n{\rm sgn}\big(\phi_1^i(x)-k\phi_2^i(x)-d\big)\,:\,k\in \Rbb\big\},
    }
    has the VC dimension of at most $4$. 
\end{lemma}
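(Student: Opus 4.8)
The plan is to bound the VC dimension by counting dichotomies and invoking Sauer--Shelah: since $2^5 = 32$, it suffices to show that no set of five points $x_1,\dots,x_5$ can be shattered, i.e.\ that the number of distinct labelings $(h_k(x_1),\dots,h_k(x_5))$ produced as $k$ ranges over $\Rbb$ is strictly less than $32$. Everything will hinge on the fact that $\Hcal$ is a \emph{one-parameter} family.

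First I would rewrite each factor to expose its dependence on $k$. For a fixed $x$ with $\phi_2^i(x)\neq 0$ we have $\phi_1^i(x)-k\phi_2^i(x)-d = \phi_2^i(x)\,(r_i(x)-k)$ with $r_i(x):=(\phi_1^i(x)-d)/\phi_2^i(x)$, so $\mathrm{sgn}\big(\phi_1^i(x)-k\phi_2^i(x)-d\big)=\mathrm{sgn}(\phi_2^i(x))\,\mathrm{sgn}(r_i(x)-k)$; any factor with $\phi_2^i(x)=0$ is constant in $k$ and only contributes an overall sign. Collecting the $k$-independent signs into $\epsilon(x):=\prod_i\mathrm{sgn}(\phi_2^i(x))$, this gives the clean representation $h_k(x)=\epsilon(x)\prod_i\mathrm{sgn}(r_i(x)-k)$. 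Hence, for each fixed $x$, the map $k\mapsto h_k(x)$ is piecewise constant and can change value only as $k$ crosses one of the breakpoints $r_i(x)$.

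The core combinatorial step is the standard one-parameter counting bound: for any points $x_1,\dots,x_m$ the breakpoints $\{r_i(x_j)\}$ partition $\Rbb$ into open intervals on each of which the full labeling vector is constant, so the number of realizable dichotomies is at most $1+\sum_{j=1}^m s(x_j)$, where $s(x_j)$ denotes the number of sign changes of $k\mapsto h_k(x_j)$. I would then bound $s(x_j)$ using the factored form above, take $m=5$, and check that the resulting bound on $\sum_j s(x_j)$ keeps the dichotomy count strictly below $32$; Sauer--Shelah then yields $\mathrm{VC}(\Hcal)\le 4$.

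The \emph{main obstacle}, and the step that actually pins the constant to $4$ rather than to something that grows, is controlling the per-point sign-change count $s(x)$. Each of the factors contributes a candidate breakpoint $r_i(x)$, so the naive estimate is only $s(x)\le n$, which through the counting inequality would degrade the VC bound with the number of factors. The delicate point is therefore to argue, from the single-parameter sweep together with the parity structure of $\prod_i\mathrm{sgn}(r_i(x)-k)$, that $s(x)$ is controlled by a fixed constant; I would concentrate my effort exactly here. I would also verify carefully that this control is genuinely independent of $n$, since that independence is precisely what makes the constant $4$ — and the downstream $\sqrt{\log n/n}$ generalization rate — go through.
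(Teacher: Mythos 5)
There is a genuine gap, and it sits exactly where you flagged it: you never establish that the per-point sign-change count $s(x)$ is bounded by a constant independent of $n$, and the factorization you propose actually points away from the fact that makes this true. Writing $h_k(x)=\epsilon(x)\prod_i \mathrm{sgn}\bigl(r_i(x)-k\bigr)$ treats the product as a \emph{parity} of $n$ threshold functions of $k$. A parity of $n$ thresholds flips its value each time $k$ crosses a distinct breakpoint $r_i(x)$, so under that reading $s(x)$ can be as large as $n$, the dichotomy count on $5$ points is only bounded by $1+5n$, and your argument gives a VC bound growing like $\log n$ rather than the constant $4$ (which is precisely what would wreck the downstream $\sqrt{\log n/n}$ rate). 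Moreover, if $\mathrm{sgn}$ is taken $\{0,1\}$-valued, as the lemma's use in the paper requires, the identity $\mathrm{sgn}\bigl(\phi_2^i(x)(r_i(x)-k)\bigr)=\mathrm{sgn}(\phi_2^i(x))\,\mathrm{sgn}(r_i(x)-k)$ is simply false when $\phi_2^i(x)<0$, so the clean representation you start from does not hold.

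The missing idea is that the product of $\{0,1\}$-valued factors is a conjunction, not a parity: $h_k(x)=1$ iff \emph{every} factor is positive, i.e.\ iff $k<r_i(x)$ for all $i$ with $\phi_2^i(x)>0$, $k>r_i(x)$ for all $i$ with $\phi_2^i(x)<0$, and the $k$-free conditions from $\phi_2^i(x)=0$ hold. Hence $\{k: h_k(x)=1\}$ is an intersection of rays with a constant condition, i.e.\ a single interval $\bigl(\max_{i\in\Bcal_x} r_i(x),\ \min_{i\in\Acal_x} r_i(x)\bigr)$ (possibly empty), so $s(x)\le 2$ for every $x$ regardless of $n$. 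This is exactly the reduction the paper performs; it then finishes by embedding each $x$ as the pair of interval endpoints and bounding the VC dimension of the two-threshold class $\mathrm{sgn}(u-k)\,\mathrm{sgn}(k-v)$ by $4$, noting that multiplying by a fixed binary function cannot increase the VC dimension. With the interval structure in hand your counting route also closes (at most $O(m)$ realizable dichotomies on $m$ points, which is below $2^5=32$), so the overall plan is salvageable — but as written the decisive step is not proved, and the parity-style decomposition must be replaced by the conjunction/interval observation for the constant $4$ to come out.
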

\begin{proof}
    To prove this lemma, we show that the form of the product in the definition of $\Hcal$ reduces to the form of an interval on $\Rbb$, which is known to have VC dimension of $2$. In fact, each term ${\rm sgn}(\phi_1^i(x)-k\phi_2^i(x)-d)$ can be rewritten as
    \ean{
    {\rm sgn}(\phi_1^i(x)-k\phi_2^i(x)-d)= &{\rm sgn}(\tfrac{\phi_1^i(x)-d}{\phi_2^i(x)}-k){\rm sgn}(\phi_2^i(x)) + {\rm sgn}(k-\tfrac{\phi_1^i(x)-d}{\phi_2^i(x)}){\rm sgn}(-\phi_2^i(x))\nonumber\\&+{\rm sgn}(\phi_1^i(x)-d)\Ibb_{\phi_2^i(x)=0}.
    }
    As a result, by multiplying all terms we have
    \ea{
    \prod_{i=1}^n{\rm sgn}\big(\phi_1^i(x)-k\phi_2^i(x)-d\big) = {\rm sgn}(\min_{i\in \Acal_x} \tfrac{\phi_1^i(x)-d}{\phi_2(x)}-k){\rm sgn}(k-\max_{i\in \Bcal_x} \tfrac{\phi_1^i(x)-d}{\phi_2(x)})\prod_{i\in \Ccal_x} {\rm sgn}(\phi_1^i(x)-d), \label{eqn: signs}
    }
    where $\Acal_x$, $\Bcal_x$, and $\Ccal_x$ are defined as $\Acal_x=\{i\in[1:n]\,:\, \phi_2^i(x)>0\}$, $\Bcal_x=\{i\in[1:n]\,:\, \phi_2^i(x)<0\}$, and $\Ccal_x=\{i\in[1:n]\,:\, \phi_2^i(x)=0\}$. Now, we see that the first two terms define an interval for $k\in\big(f_1(x), f_2(x)\big)$ where $f_1(x)= \max_{i\in \Bcal_x}\frac{\phi_1^i(x)-d}{\phi_2^i(x)}$ and $f_2(x)= \min_{i\in \Acal_x}\frac{\phi_1^i(x)-d}{\phi_2^i(x)}$. Next, we prove that the VC dimension of the hypothesis class of all such functions is less than the VC dimension of $\Gcal =\big\{f:\Rbb\times \Rbb\to \{0, 1\}\, :\, f(x, y)={\rm sgn}(x- k_1){\rm sgn}(k_2-y),\,k_1, k_2\in \Rbb\big\}$. The reason is that if the aforementioned interval can shatter a set $\Scal$, then we can find the corresponding values of $f_1(x)$ and $f_2(x)$ for each $x\in \Scal$, and then form the pair $(x_i, y_i)$ where $x_i=f_1(x)$ and $y_i=f_2(x)$, and by setting $k_1=k_2=k$, we can shatter the set $\{(x_i, y_i)\}_{i=1}^{|\Scal|}$ with $\Gcal$. Note that here all pairs are identical. The reason is that if not, i.e., if $f_1(x)=f_1(x')$ and $f_2(x)=f_2(x')$  for $x, x'\in \Scal$ and $x\neq x'$, then, for all possible $k$, we have ${\rm sgn}(k-f_1(x)){\rm sgn}(f_2(x)-k)={\rm sgn}(k-f_1(x')){\rm sgn}(f_2(x')-k)$, and therefore we cannot shatter $\Scal$ by    ${\rm sgn}(k-f_1(x)){\rm sgn}(f_2(x)-k)$. Therefore, the set $\{(x_i, y_i)\}_{i=1}^{|\Scal|}$ has the same cardinality of $\Scal$, which in consequence proves that the VC dimension of all ${\rm sgn}(k-f_1(x)){\rm sgn}(f_2(x)-k)$ is bounded by $VC(\Gcal)$. Moreover, $VC(\Gcal)\leq 4$, since for each $5$ points in two-dimensional space, one is in the convex hull of the others, and in case that all others are labeled as $1$, the one in the convex hull also must be labeled as $1$. As a result, $\Gcal$ cannot shatter $5$ points, and therefore $VC(\Gcal)\leq 4$.

    Up to now, we have shown that the class of functions equal to the first two terms of \eqref{eqn: signs} has a VC dimension that is bounded by $4$. Next, we show that multiplying a hypothesis class $\Hcal$ with a binary function $\phi(x)$ does not increase the VC dimension of that class. More formally, if we define
    \ean{
    \Hcal = \{\phi(x)f(x)\,:\, f\in \Hcal'\},
    }
    then $VC(\Hcal)\leq VC(\Hcal')$. The reason is that if we can shatter a set $\Scal$ using $\Hcal$, then for each member $x\in \Scal$ there exists two members $f_1, f_2$ of $\Hcal'$ such that $f_1(x)=1$ and $f_2(x)=0$. This means that $\phi(x)\neq 0$, because otherwise $f_1(x)=1$ would not be achievable. Therefore, $\phi(x)=1$ for all $x\in \Scal$, and as a result similarly $\Hcal'$ can shatter $\Scal$, which proves that $VC(\Hcal)\leq VC(\Hcal')$. 

    Finally, since we know that the class of all functions in $\Hcal$ is in form of ${\rm sgn}(k-f_1(x)){\rm sgn}(f_2(x)-k)$ multiplied with a binary function, then we conclude that $VC(\Hcal)\leq 4$.
\end{proof}
To prove the rest of the theorem, we need to show that for all choices of $\hat{k}$ and $\hat{p}$ the difference of the empirical and the true loss is bounded. In fact, we should find a bound in form of
\ean{
\Pr\Big(\sup_{k, p}\big| \Ebb_{S^n}\big[\langle f^*_{k, p}(x), \psi_0(x)\rangle\big]-\Ebb_{\mu}\big[\langle f^*_{k, p}(x), \psi_0(x)\rangle\big]\big|\leq d_n\Big)\geq 1-\ep.
}
 Here, we divide the class $\Xcal$ into two subsets $G_{k}$ and $H_{k}=\Xcal\setminus G_k$, where $G_k$ is defined in \eqref{eqn: Gk}.

Now, using the definition of $f^*_{k, p}(x)$, we know that within $G_k$, the inner-product $\langle f^*_{k, p}(x), \psi_1(x)\rangle$ can be rewritten as
\ean{
\langle &f^*_{k, p}(x), \psi_1(x)\rangle = \Big(\psi_1(x)\Big)(\argmax_i \big(\ell_k(x)\big)(i))\nonumber\\
&=\sum_{j=1}^{d} \big(\psi_1(x)\big)(j)\prod_{i\neq j}{\rm sgn}\Big(\big(\ell_k(x)\big)(j) - \big(\ell_k(x)\big)(i)\Big)\nonumber\\
&=\sum_{j=1}^{d} \big(\psi_1(x)\big)(j)\underbrace{\prod_{i\neq j}{\rm sgn}\Big(\big(\psi_0(x)\big)(j) - \big(\psi_0(x)\big)(0)-k\big[\big(\psi_1(x)\big)(j)-\big(\psi_1(x)\big)(i)\big]\Big)}_{\Phi_{j}^k(x)}.
}
Now, we can condition $x$ on being a member of $G_k$, and therefore the maximum difference between the two empirical and true expectation is as
\ea{
&\sup_{k, p}\Big|\Ebb_{S^n}\big[\langle f^*_{k, p}(x), \psi_1(x)\rangle\,|\,x\in G_k \big]-\Ebb_{\mu}\big[\langle f^*_{k, p}(x), \psi_1(x)\rangle\,|\,x\in G_k \big]\Big|\nonumber\\
&\leq \sum_{j=1}^{d}\sup_{k, p}\Big|\Ebb_{S^n}\Big[\big(\psi_1(x)\big)(j)\cdot\Phi_{j}^k(x)\,|x\in G_k\Big] - \Ebb_{\mu}\Big[\big(\psi_1(x)\big)(j)\cdot\Phi_{j}^k(x)\,|x\in G_k\Big]\Big|. \label{eqn: sum_vc}
}
Now, we bound the inner term of \eqref{eqn: sum_vc} in a high probability setting. To that end, we use Rademacher's inequality in \cite[Theorem 26.5]{shalev2014understanding}, which shows that maximum difference between the expected value of a function $h\in \Hcal$ over empirical distribution and the true distribution is $2R(\Hcal)+4c\sqrt{\frac{\ln 4/\ep}{n}}$ where $R(\Hcal)$ is the Rademacher's complexity of the class of function $\Hcal$ and $c$ is maximum value that $h$ can take. By defining
\ean{
h(x):=\big(\psi_1(x)\big)(j)\cdot \Phi_j^k(x),
}
we have $c = \|\big(\psi_1(x)\big)(j)\|_{\infty}\leq 1$. Therefore, we have for all $h$,
\ea{
\sup_{h\in \Hcal}\Ebb_{S^n}\big[h(x)]-\Ebb_{\mu}\big[h(x)]\leq 2R(\Hcal)+4\sqrt{\frac{\ln 4d/\ep}{n}}, \label{eqn: ge_pt1}
}
with probability at least $1-\frac{\ep}{d}$.
Now, we can use contraction Lemma \cite[Lemma 26.9]{shalev2014understanding} to show that since $\|\big(\psi_1(x)\big)(j)\|_{\infty}\leq 1$, then $R(\Hcal)\leq R(\Fcal)$, where $\Fcal=\{\Phi_j^k(x), k\in \Rbb\}$. Moreover, $\Fcal$ contains functions that are all multiplication of $d-1$ binary functions all in form of 
\ean{
{\rm sgn}\Big(\big(\psi_1(x)\big)(j) - \big(\psi_1(x)\big)(0)-k\big[\big(\psi_0(x)\big)(j)-\big(\psi_0(x)\big)(i)\big]\Big).
}
Lemma \ref{lem: signs} shows that the hypothesis class that contains products of all such function has a VC-dimension that is bounded by $4$. As a result, the Rademacher's complexity of $\Fcal$ is bounded using \cite[Corollary 3.8, Corollary3.18]{mohri2018foundations} as
\ean{
R(\Fcal)\leq \sqrt{\frac{4\log en/4}{n}},
}
and therefore together with \eqref{eqn: ge_pt1} for all $h\in \Hcal$ we have
\ean{
\Ebb_{S^n}\big[h(x)\big] - \Ebb_{\mu}\big[h(x)\big]\leq 2\sqrt{\frac{4\log en/4}{n}}+4\sqrt{\frac{\ln 4d/\ep}{n}},
}
with probability at least $1-\frac{\ep}{d}$. Hence, using \eqref{eqn: sum_vc} we have
\ea{
\sup_{k, p}\Big|\Ebb_{S^n}\big[\langle f^*_{k, p}(x), \psi_1(x)\rangle\,|\,x\in G_k \big]-\Ebb_{\mu}\big[\langle f^*_{k, p}(x), \psi_1(x)\rangle\,|\,x\in G_k \big]\Big|\nonumber\\\leq 2d\sqrt{\frac{4\log el/4}{l}}+4d\sqrt{\frac{\ln 4d/\ep}{l}}, \label{eqn: first_gen_bound}
}
with probability at least $1-\ep$. In the last inequality, we used Bonferroni's inequality on $\ep/d$ bad events that each summand of \eqref{eqn: sum_vc} is not within the concentration bound.

Next, we consider the region $H_k$ in which there are at least two maximizer components of $\ell_k(x)$. In this case, by definition of $\hat{f}_{k, p}(x)$, among these maximizers, we choose the first maximizer of $\psi_0(x)$ with probability $p$ and the first minimizer of $\psi_1(x)$ with probability $1-p$. Therefore, by condition on these cases, and if we define
\ea{
E(k, p):=\Big|\Ebb_{S^n}\big[\langle \hat{f}_{k, p}(x), \psi_1(x)\rangle\,|\,x\in H_k \big]-\Ebb_{\mu}\big[\langle \hat{f}_{k, p}(x), \psi_1(x)\rangle\,|\,x\in H_k \big]\Big|, \label{eqn: def_E}
}
then we have
\ea{
\sup_{k, p} E(k, p)\leq  \sup_{k, p} pE(k, 1)+(1-p)E(k, 0)\leq \sup_{k, p} E(k, 1) + \sup_{k, p} E(k, 0). \label{eqn: E_bound}
}
Now, to bound $E(k, 1)$, we first rewrite the closed-form solution of $\hat{f}_{k, 1}(x)$ as 
\ea{
\big(\hat{f}_{k, 1}(x))(i) = {\rm sgn}\Big(\big(\ell_k(x)\big)(i)\geq \max_j \big(\ell_k(x)\big)(j) - d \Big) \prod_{j< i} l_{ij}(x) \prod_{j> i} u_{ij}(x), \label{eqn: highest_psi1}
}
where $l_{ij}(x)$ and $u_{ij}(x)$ are defined as
\ean{
l_{ij}(x):=1- \Ibb_{\big(\psi_0(x)\big)(i)\leq\big(\psi_0(x)\big)(j)}\Ibb_{
\big(\ell_k(x)\big)(j)\geq \max_{t} \big(\ell_k(x)\big)(t)  } ,
}
and
\ean{
u_{ij}(x):=1-\Ibb_{\big(\psi_0(x)\big)(i)<\big(\psi_0(x)\big)(j)}\Ibb_{
\big(\ell_k(x)\big)(j)\geq \max_{t} \big(\ell_k(x)\big)(t)  } ,
}
respectively. Note that the only difference between the definition of $u_{ij}(x)$ and $l_{ij}(x)$ is that $u_{ij}(x)$ permits the equality of $\big(\psi_{0}(x)\big)(i)$ with other components, while that is not the case for $l_{ij}(x)$. This difference lets us find the \emph{first} component with the largest value of $\psi_0(x)$. 

Now, we can rewrite  ${\rm sgn}\Big(\big(\ell_k(x)\big)(j)\geq \max_{t} \big(\ell_k(x)\big)(t) \Big)$ as the product 
\ean{
{\rm sgn}\Big(\big(\ell_k(x)\big)(j)\geq \max_{t} \big(\ell_k(x)\big)(t) - d \Big) := \prod_{l\in[1:d]}{\rm sgn}\Big(\big(\ell_k(x)\big)(j)\geq \big(\ell_k(x)\big)(l) \Big).
}
 As shown in Lemma \ref{lem: signs}, the class of such function has VC dimension  of at most $4$. Furthermore, multiplying a hypothesis class with a function such as ${\rm sgn}\Big(\big(\psi_0(x)\big)(i)\geq\big(\psi_0(x)\big)(j)\Big)$ and ${\rm sgn}\Big(\big(\psi_0(x)\big)(i)>\big(\psi_0(x)\big)(j)\Big)$ does not increase the VC dimension (See proof of Lemma \ref{lem: signs}, and neither does negation. Therefore, in RHS of \eqref{eqn: highest_psi1} we can count $d$ number of functions, each with a hypothesis class with the VC dimension of at most $4$, and therefore using the early discussions in this proof \eqref{eqn: bounded_vc}, $\big(\hat{f}_{k, 1}(x)\big)(i)$ is within a function class with the VC dimension of at most $8d\log(3d)$. Therefore, similar to \eqref{eqn: first_gen_bound} in previous part, we can bound $\sup_{k, p}E(k, 1)$ as
\ea{
\sup_{k, p}E(k, 1) \leq&2d\sqrt{\frac{8d\log (3d)\log(en/\big(8d\log (3d)\big)}{n}} \nonumber\\
&+4d\sqrt{\frac{\ln 4d/\ep}{n}}, \label{eqn: E1_bound}
}
for $l\geq 8d\log (3d)$ with probability at least $1-\ep$.

We can similarly, show that $\sup_{k, p}E(k, 0)$ is bounded as
\ea{
\sup_{k, p}E(k, 0) \leq& 2d\sqrt{\frac{8d\log (3d)\log(en/\big((8n+8)\log (3d)\big)}{n}} \nonumber\\&+4d\sqrt{\frac{\ln 4d/\ep}{n}}, \label{eqn: E2_bound}
}

Therefore, using \eqref{eqn: first_gen_bound}, \eqref{eqn: def_E}, \eqref{eqn: E_bound}, \eqref{eqn: E1_bound}, \eqref{eqn: E2_bound}, and the application Bonferonni's inequality we have
\ea{
\sup_{k, p}\Big|\Ebb_{S^n}&\big[\langle f^*_{k, p}(x), \psi_0(x)\rangle \big]-\Ebb_{\mu}\big[\langle f^*_{k, p}(x), \psi_0(x)\rangle\big]\Big| \nonumber\\&\leq 6d\sqrt{\frac{8d\log (3d)\log\tfrac{el}{(8n+8)\log (3d)}}{l}} +12d\sqrt{\frac{\ln \frac{12d}{\ep}}{l}}\label{eqn: bound_sup}\\
&:=d_n(\ep), 
}
with probability at least $1-\ep$.
Therefore, by assuming $\Ebb_{S^n}\big[\langle f^*_{k, p}(x), \psi_1(x)\rangle\big]\leq \alpha-d_n(\ep)$, we assure that $\Ebb_{\mu}\big[\langle f^*_{k, p}(x), \psi_1(x)\rangle\big]\leq \alpha$, with probability at least $1-\ep$, and this completes the proof.


\section{Proof of Theorem \ref{thm: gen_obj}} \label{app: prf_gen_obj}
We first introduce three lemmas that are useful in proving this theorem.
\begin{lemma}\label{lem: Sl_bound}
    If $\delta$ is an $\ep$-interior point of the set $\Ccal = \big\{\Ebb_{\mu}\big[\langle f(x), \psi_1(x)\rangle\big]\,:\, f\in \Delta_{d}^{\Xcal}\big\}$, then $\delta$ is $(\ep/2)$-interior point of $\Dcal = \big\{\Ebb_{S^n}\big[\langle f(x), \psi_1(x)\rangle\big]\,:\, f\in \Delta_{d}^{\Xcal}\big\}$ with probability $1-2e^{-\frac{l\ep^2}{4}}$.
\end{lemma}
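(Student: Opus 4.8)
The plan is to exploit the fact that, for a single embedding function $\psi_1$, both $\Ccal$ and $\Dcal$ are \emph{intervals} in $\Rbb$ whose endpoints are, respectively, the true and the empirical expectation of the same pair of bounded random variables. First I would observe that for each fixed $x$ the scalar $\langle f(x), \psi_1(x)\rangle$ is a convex combination of the coordinates of $\psi_1(x)$, so as $f(x)$ ranges over $\Delta_d$ it sweeps out exactly the segment $[\min_i (\psi_1(x))(i),\, \max_i (\psi_1(x))(i)]$. Since $f \in \Delta_d^{\Xcal}$ may be chosen pointwise and independently across $x$, and by the convexity and compactness of the image established in Step (iii) of the proof of Theorem~\ref{thm: gen_NP}, the set $\Ccal$ is the closed interval with endpoints $a_\mu = \Ebb_\mu[\min_i(\psi_1(X))(i)]$ and $b_\mu = \Ebb_\mu[\max_i(\psi_1(X))(i)]$; similarly $\Dcal = [a_{S^n}, b_{S^n}]$ with $a_{S^n} = \Ebb_{S^n}[\min_i(\psi_1(X))(i)]$ and $b_{S^n} = \Ebb_{S^n}[\max_i(\psi_1(X))(i)]$. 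The maximizing $f$ can be taken to be the indicator of the first coordinate attaining $\max_i (\psi_1(x))(i)$, which is a measurable selection because $\psi_1$ is measurable, and analogously for the minimizing $f$.

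With $\Ccal = [a_\mu, b_\mu]$ and $\Dcal = [a_{S^n}, b_{S^n}]$ identified, the hypothesis that $\delta$ is an $\ep$-interior point of $\Ccal$ amounts to the two endpoint inequalities $a_\mu \le \delta - \ep$ and $b_\mu \ge \delta + \ep$, while the conclusion that $\delta$ is an $(\ep/2)$-interior point of $\Dcal$ amounts to $a_{S^n} \le \delta - \ep/2$ and $b_{S^n} \ge \delta + \ep/2$. I would then reduce the problem to two one-sided deviation bounds: if $a_{S^n} - a_\mu \le \ep/2$ then $a_{S^n} \le a_\mu + \ep/2 \le \delta - \ep + \ep/2 = \delta - \ep/2$, and symmetrically $b_\mu - b_{S^n} \le \ep/2$ forces $b_{S^n} \ge b_\mu - \ep/2 \ge \delta + \ep/2$. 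Hence the entire statement follows once these two deviations between an empirical mean and its expectation are controlled.

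The final step is a direct concentration argument. Because $\|\psi_1\|_\infty \le 1$, the random variables $\max_i(\psi_1(X))(i)$ and $\min_i(\psi_1(X))(i)$ are bounded, so Hoeffding's inequality bounds each of the two one-sided deviations at level $\ep/2$ above by $e^{-l\ep^2/4}$ for a sample of size $l$. A union bound over the two endpoint events then yields the claimed failure probability $2e^{-l\ep^2/4}$, completing the argument.

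The only point requiring care — more a matter of rigor than a genuine obstacle — is the exact identification of the endpoints of $\Ccal$ and $\Dcal$ via the pointwise maximization and the accompanying measurable-selection claim, together with the bookkeeping that translates the two-sided ``interior point'' conditions into the two one-sided endpoint deviations; once these are settled, the Hoeffding bound is entirely routine.
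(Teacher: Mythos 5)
Your proof is correct and follows essentially the same route as the paper's: apply Hoeffding's inequality to two fixed extremal predictors, take a union bound over the two endpoint events, and use convexity of the image set to conclude. The only cosmetic difference is that you explicitly identify $\Ccal$ and $\Dcal$ as intervals with endpoints $\Ebb[\min_i(\psi_1(X))(i)]$ and $\Ebb[\max_i(\psi_1(X))(i)]$, whereas the paper applies Hoeffding to arbitrary $f_1,f_2$ attaining the values $\delta\pm\ep$ and then invokes convexity of $\Dcal$; both versions share the same (loose) constant bookkeeping in the exponent.
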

\begin{proof}
    The proof of this lemma is a direct application of Hoeffding's inequality. In fact, for $\|\psi_1\|_{\infty}\leq C$ that inequality together with H\"older's inequality imply that
    \ean{
    \Pr\Big(\big|\Ebb_{\mu}\big[\langle f(x), \psi_1(x)\rangle\big]-\Ebb_{S^n}\big[\langle f(x), \psi_1(x)\rangle\big]\big|\geq \ep/2\Big)\leq e^{-\frac{n\ep^2}{4C^2}}.
    }
    Therefore, if there exists $f_1$ such that $\Ebb_{\mu}\big[\langle f_1(x), \psi_1(x)\rangle\big]=\ep$, then with probability at least $1-e^{-\frac{n\ep^2}{4C^2}}$ we have $\Ebb_{S^n}\big[\langle f_1(x), \psi_1(x)\rangle\big]\in [\ep/2, 3\ep/2]$. Similarly, if $f_2$ exists such that $\Ebb_{\mu}\big[\langle f_1(x), \psi_1(x)\rangle\big]=-\ep$, then with probability $1-e^{-\frac{n\ep^2}{4C^2}}$ we have $\Ebb_{S^n}\big[\langle f_2(x), \psi_1(x)\rangle\big]\in [-3\ep/2, -\ep/2]$. As a result of Bonferroni's inequality, with probability at least $1-2e^{-\frac{n\ep^2}{4C^2}}$ both these events happen, and because of the convexity of the set $\Dcal$ we can say that with such probability all values between $a_0\in [-3\ep/2, -\ep/2]$ and $a_1\in[\ep/2, 3\ep/2]$ are in $\Dcal$ too. This, of course at least contains the interval $[-\ep/2, \ep/2]$.
\end{proof}
\begin{lemma}\label{lem: appr_const}
    Assume that we have an approximation $\hat{\psi}_1(x)$ of $\psi_1(x)$ with the error bounded as $\|\hat{\psi}_1(x)-\psi_1(x)\|_{\infty}\leq \ep$. Further let $\ep'\in \Rbb^{+}$ such that $\ep'\geq \ep$. Now, if for $\sigma\in \{-\ep', \ep'\}$ there exists a rule $f\in \Delta_{d}^{\Xcal}$ such that $\Ebb_{\mu}\big[\langle f(x), \psi_1(x)\rangle\big]=\delta+\sigma$, then there exists $k\in \Rbb$ as well as $p\in [0, 1]$ such that $\Ebb_{\mu}\big[\langle \hat{f}_{k, p}(x), \hat{\psi}_1(x)\rangle\big]=\delta+\frac{\ep'-\ep}{2}$.
\end{lemma}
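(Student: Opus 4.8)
The plan is to reduce the claim to an attainability statement for the single-constraint family $\hat{f}_{k,p}$ and then invoke Theorem~\ref{thm: single_const}. First I would unpack the hypothesis: taking $\sigma=+\ep'$ and $\sigma=-\ep'$ produces two rules $f_+,f_-\in\Delta_{d}^{\Xcal}$ with $\Ebb_\mu[\langle f_+(x),\psi_1(x)\rangle]=\delta+\ep'$ and $\Ebb_\mu[\langle f_-(x),\psi_1(x)\rangle]=\delta-\ep'$. The whole task is then to show that the goal value $\delta+\tfrac{\ep'-\ep}{2}$ is an interior point of the achievable set $\hat{\Fcal}=\{\Ebb_\mu[\langle f(x),\hat\psi_1(x)\rangle]:f\in\Delta_{d}^{\Xcal}\}$ formed with the \emph{approximate} embedding function, after which Theorem~\ref{thm: single_const} does the rest.

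Next I would transfer $f_\pm$ from $\psi_1$ to $\hat\psi_1$ using the error bound. For every rule $f$, since $f(x)\in\Delta_d$ has $\|f(x)\|_1=1$, H\"older's inequality gives $|\langle f(x),\hat\psi_1(x)-\psi_1(x)\rangle|\le\|\hat\psi_1(x)-\psi_1(x)\|_\infty\le\ep$ pointwise, hence $|\Ebb_\mu[\langle f(x),\hat\psi_1(x)-\psi_1(x)\rangle]|\le\ep$. Applying this to $f_+$ and $f_-$ yields $\Ebb_\mu[\langle f_+,\hat\psi_1\rangle]\ge\delta+\ep'-\ep$ and $\Ebb_\mu[\langle f_-,\hat\psi_1\rangle]\le\delta-\ep'+\ep$. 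Because $\hat{\Fcal}$ is the image of the convex set $\Delta_{d}^{\Xcal}$ under a linear functional, it is an interval, so it contains $[\delta-(\ep'-\ep),\delta+(\ep'-\ep)]$.

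A short arithmetic check using $\ep'\ge\ep$ then confirms that $\delta+\tfrac{\ep'-\ep}{2}$ lies in this interval (writing $\eta=\ep'-\ep\ge0$, one needs $-\eta\le\tfrac{\eta}{2}\le\eta$), and for $\ep'>\ep$ it lies strictly inside, so it is an interior point of $\hat{\Fcal}$. I would then apply Theorem~\ref{thm: single_const} with the embedding pair $(\hat\psi_0,\hat\psi_1)$ and constraint target $\delta+\tfrac{\ep'-\ep}{2}$: since this target is an interior point of the achievable constraint set, the theorem supplies $k\in\Rbb$ and $p\in[0,1]$ for which $\hat{f}_{k,p}(x)=\tau\big(\hat\psi_0(x)-k\hat\psi_1(x),x\big)$ meets the constraint tightly, i.e. $\Ebb_\mu[\langle\hat{f}_{k,p}(x),\hat\psi_1(x)\rangle]=\delta+\tfrac{\ep'-\ep}{2}$.

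The main obstacle is this final step: one must guarantee that an arbitrary interior constraint value is hit \emph{exactly} by the restricted thresholding family $\hat{f}_{k,p}$, not merely by some $f\in\Delta_{d}^{\Xcal}$. This is precisely what Theorem~\ref{thm: single_const} delivers, resting on Lemma~\ref{lem: semicont}: the map $k\mapsto\Ebb_\mu[\langle\hat{f}_{k,0}(x),\hat\psi_1(x)\rangle]$ is monotone and upper semicontinuous with matching left limits for the two tie-breaking choices, so sweeping $k$ over $\Rbb$ and filling each jump via the randomization parameter $p$ realizes every interior value. A minor caveat is that Theorem~\ref{thm: single_const} is phrased for the one-sided problem \eqref{eqn: optim_gen} and produces a nonnegative $k$; here only attainability of the value is needed, so permitting $k$ of either sign (as the statement allows) is harmless, and the degenerate case $\ep'=\ep$ collapses the target to $\delta$ itself.
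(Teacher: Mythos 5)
Your argument matches the paper's proof essentially step for step: the same H\"older transfer of $f_{\pm}$ from $\psi_1$ to $\hat{\psi}_1$, the same convexity-of-the-constraint-set step to place $\delta+\tfrac{\ep'-\ep}{2}$ strictly inside the achievable interval, and the same monotone-sweep-plus-jump-filling mechanism (resting on Lemma~\ref{lem: semicont}) to hit that value exactly with some $\hat{f}_{k, p}$. The only cosmetic difference is that the paper re-derives the sweep explicitly, showing $C(k)$ passes above and below the target once $|k|\geq 8/(\ep'-\ep)$ before taking the infimum of the superlevel set, rather than citing Theorem~\ref{thm: single_const} as a black box; both arguments implicitly require $\ep'>\ep$.
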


\begin{proof}
    Firstly, because of H\"older's inequality we know that
    \ean{
    \Big|\Ebb_{\mu}\big[\langle f(x), \psi_1(x)\rangle\big] - \Ebb_{\mu}\big[\langle f(x), \hat{\psi}_1(x)\rangle\big] \Big|\leq \ep \|f^*_{k, p}(x)\|_1 = \ep,
    }
    for all $f\in \Delta_{d}^{\Xcal}$. Therefore, by setting $\sigma = \ep'$ and $\sigma = -\ep'$, we can show that for $f_1\in \Delta_{d}^{\Xcal}$ such that
    \ean{
    \Ebb_{\mu}\big[\langle f_1(x), \psi_1(x)\rangle\rangle\big] =\delta + \ep',
    }
    then
    \ean{
    \Ebb_{\mu}\big[\langle f_1(x), \hat{\psi}_1(x)\rangle\big]\geq \delta + \ep'-\ep,
    }
    and where for $f_2\in \Delta_{d}^{\Xcal}$
        \ean{
    \Ebb_{\mu}\big[\langle f_2(x), \psi_1(x)\rangle\rangle\big] =\delta - \ep',
    }
    then
    \ean{
    \Ebb_{\mu}\big[\langle f_2(x), \hat{\psi}_1(x)\rangle\big]\leq \delta - \ep'+\ep.
    }
    Now, because of step (iii) of the proof of Theorem \ref{thm: gen_NP}, we know that the set of constraints for all rules within $\Delta_{d}^{\Xcal}$ is convex. Therefore, since we can achieve two points $f_1, f_2$ such that the constraint $\Ebb_{\mu}\big[\langle f_i(x), \hat{\psi}_1(x)\rangle\big]$ can achieve two points above $\delta+\ep'-\ep$ and below $\delta-\ep'+\ep$, then for each $c\in [\delta - \ep'+\ep, \delta +\ep'-\ep]$ there exists $f\in \Delta_{d}^{\Xcal}$ such that $\Ebb_{\mu}\big[\langle f(x), \hat{\psi}_1(x)\rangle\big] = c$.  Now, let $c=\delta +\frac{\ep'-\ep}{2}$. In the following, we show that there exists $k\in \Rbb$ and $p\in[0, 1]$ such that
    further $\Ebb_{\mu}\big[\langle \hat{f}_{k, p}(x), \hat{\psi}_1(x)\rangle\big] = c$. 
    
    To that end, we first remind that Lemma \ref{lem: semicont} shows that $\Ebb_{\mu}\big[\langle \hat{f}_{k, 0}(x), \hat{\psi}_1(x)\rangle\big]$ is monotonically non-increasing in terms of $k$. 
    %
     %
     We show that for $k\in \Rbb^{-}$ we have $\max \hat{\psi}_1(x) - \langle \hat{f}_{k, 0}(x), 
     \hat{\psi}_1(x)\rangle\leq -\frac{2}{k}$. The reason is that if $j\in \argmax_{l}\big( \hat{\psi}_0(x)-k\hat{\psi}_1(x)\big)(l)$ and $j'\in \argmax_{l}\big(\hat{\psi}_1(x)\big)(l)$, then we have
     \ean{
     \big( \hat{\psi}_0(x)-k\hat{\psi}_1(x)\big)(j)\geq \big( \hat{\psi}_0(x)-k\hat{\psi}_1(x)\big)(j'),
     }
     which concludes that
     \ean{
     -k\big[\big(\hat{\psi}_1(x)\big)(j)- \big(\hat{\psi}_1(x)\big)(j')\big]\geq \big( \hat{\psi}_0(x)\big)(j')-\big( \hat{\psi}_0(x)\big)(j)\geq -2.
     }
     
     Therefore, since \ean{
    \Ebb_{\mu}\big[\langle \argmax \hat{\psi}_1(x), \hat{\psi}_1(x)\rangle\big] &= \max_{f\in \Delta_{d}^{\Xcal}} \Ebb_{\mu}\big[\langle f(x), \hat{\psi}_1(x)\rangle\big]
    \geq \delta + \ep'-\ep , }
     where the last inequality holds due to the existence of $f_1$, then for $k\leq -8/(\ep'-\ep)$ we have 
    \ean{
    \Ebb_{\mu}\big[\langle \hat{f}_{k, 0}(x), \hat{\psi}_1(x)\rangle\big]\geq \delta + \ep'-\ep -\frac{2}{-8/(\ep'-\ep)}\geq \delta + 3\frac{\ep'-\ep}{4}.
    }

     Similarly, if we let $k\geq 8/(\ep'-\ep)$ we can prove that
     \ean{
     \Ebb_{\mu}\big[\langle \hat{f}_{k, 0}(x), \hat{\psi}_1(x)\rangle\big]\leq \delta - \ep'+\ep +2l\leq \delta - 3\frac{\ep'-\ep}{4}.
     }

     As a result, the set $\Ccal = \{k: \Ebb_{\mu}\big[\langle \hat{f}_{k, 0}(x), \hat{\psi}_1(x)\rangle\big]\geq c\}$ is non-empty and bounded below by $-\frac{8}{\ep'-\ep}$. Therefore, its infimum exists and is also bounded below by $-\frac{8}{\ep'-\ep}$. Let us name that infimum $\hat{k}$. Now, if $\Ebb_{\mu}\big[\langle \hat{f}_{k, 0, 0}(x), \hat{\psi}_1(x)\rangle\big]$ is continuous at $k=\hat{k}$, then we can show that $\Ebb_{\mu}\big[\langle \hat{f}_{\hat{k}, 0}(x), \hat{\psi}_1(x)\rangle\big] = c$. If not, then as shown in step (ii) of the proof of Theorem \ref{thm: gen_NP}, and in particular in \eqref{eqn: achieve_c}, there exists $p$ such that $\Ebb_{\mu}\big[\langle \hat{f}_{\hat{k}, p}(x), \hat{\psi}_1(x)\rangle\big] = c$. This completes the proof.
\end{proof}

\begin{lemma} \label{lem: bound_k^*}
    If $\|\hat{\psi}_0 - \psi_0\|_{\infty}\leq \delta_0$ and $\|\hat{\psi}_1 - \psi_1\|_{\infty}\leq \delta_1$, and for $k\in [-K, K]$, and $k'\leq k-\frac{2(\delta_0+K\delta_1)}{T}$ for $T\in \Rbb^{+}$, then we have
    \ean{
    \Ebb\big[\langle \hat{f}_{k, 0, 0}(x)-f^*_{k', 0}(x), \psi_1(x)\rangle\big]\leq T.
    }
\end{lemma}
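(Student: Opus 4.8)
The plan is to prove the claimed inequality \emph{pointwise} in $x$ and then integrate, exactly in the spirit of the monotonicity argument for $C(\cdot)$ in Lemma~\ref{lem: semicont}. Since both $\hat f_{k,0}$ and $f^*_{k',0}$ are one-hot selectors (the $p=0$ rule of Theorem~\ref{thm: single_const} picks a single index), at each $x$ I can write $\langle \hat f_{k,0}(x), \psi_1(x)\rangle = \big(\psi_1(x)\big)(\hat\jmath)$ and $\langle f^*_{k',0}(x), \psi_1(x)\rangle = \big(\psi_1(x)\big)(j')$, where $\hat\jmath$ is a maximizer of $\hat\ell_k(x) := \hat\psi_0(x)-k\hat\psi_1(x)$ and $j'$ is a maximizer of $\ell_{k'}(x) := \psi_0(x)-k'\psi_1(x)$. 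The key bookkeeping point is that the \emph{true} $\psi_1$ appears in both inner products, even though $\hat\jmath$ is selected using the estimated scores. It therefore suffices to establish $\big(\psi_1(x)\big)(\hat\jmath) - \big(\psi_1(x)\big)(j') \le T$ for every $x$.

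First I would control the perturbation caused by using $\hat\psi$ in place of $\psi$. Writing $\eta := \delta_0 + K\delta_1$, the triangle inequality together with $|k|\le K$ gives $\|\hat\ell_k(x) - \ell_k(x)\|_\infty \le \|\hat\psi_0 - \psi_0\|_\infty + |k|\,\|\hat\psi_1-\psi_1\|_\infty \le \eta$ for all $x$; note that only $k$ (not $k'$) enters here, which is why only $k\in[-K,K]$ is required. Since $\hat\jmath$ maximizes $\hat\ell_k(x)$ we have $\big(\hat\ell_k(x)\big)(\hat\jmath) \ge \big(\hat\ell_k(x)\big)(j')$, and charging the $\eta$-perturbation once on each side yields the first optimality inequality
\[
\big(\ell_k(x)\big)(\hat\jmath) \;\ge\; \big(\ell_k(x)\big)(j') - 2\eta .
\]

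The second ingredient is the exact optimality of $j'$ for the true surrogate at parameter $k'$, namely $\big(\ell_{k'}(x)\big)(j') \ge \big(\ell_{k'}(x)\big)(\hat\jmath)$. Adding this to the previous display and cancelling the $\psi_0$ terms, which appear identically on both sides, collapses everything to a statement about $\psi_1$:
\[
(k-k')\big(\big(\psi_1(x)\big)(\hat\jmath) - \big(\psi_1(x)\big)(j')\big) \;\le\; 2\eta .
\]
If $\big(\psi_1(x)\big)(\hat\jmath) \le \big(\psi_1(x)\big)(j')$ the target bound holds trivially. Otherwise the bracketed factor is positive, and since the hypothesis $k' \le k - 2\eta/T$ gives $k-k' \ge 2\eta/T > 0$, dividing through yields $\big(\psi_1(x)\big)(\hat\jmath) - \big(\psi_1(x)\big)(j') \le 2\eta/(k-k') \le T$. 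Taking expectation over $x$ then delivers $\Ebb\big[\langle \hat f_{k,0}(x)-f^*_{k',0}(x), \psi_1(x)\rangle\big] \le T$.

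I do not expect a serious obstacle: the heart of the argument is a two-line exchange computation of the same type as \eqref{eqn: monotone}. The only real care needed is the estimated-versus-true bookkeeping described above, so that the $\eta$-slack is introduced exactly once on each side of the first optimality inequality while the parameter $k'$ stays perturbation-free. A minor secondary point is that the argument uses only that $\hat\jmath$ and $j'$ are \emph{some} maximizers of their respective surrogates, so it is insensitive to the particular tie-breaking rule $\tau$ attached to the $p=0$ predictors; this lets me sidestep any subtlety arising from non-unique maximizers.
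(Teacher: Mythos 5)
Your proof is correct and follows essentially the same route as the paper's: a pointwise comparison of the indices selected by $\hat f_{k,0}$ and $f^*_{k',0}$, using the bound $\|\hat\ell_k-\ell_k\|_\infty\le\delta_0+K\delta_1$ together with the optimality of each index for its respective surrogate, followed by integration. The only difference is presentational—you run the exchange computation directly and divide by $k-k'$, while the paper argues by contradiction from the assumption $\big(\psi_1(x)\big)(\hat\jmath)-\big(\psi_1(x)\big)(j')>T$—and your observations that only $|k|\le K$ (not $k'$) enters the perturbation and that the tie-breaking rule is irrelevant are both accurate.
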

\begin{proof}
    The proof of this lemma bears similarity to that of Lemma \ref{lem: semicont}. Here too, we define $\hat{\ell}_{k}(x)=\hat{\psi}_0(x)-k\hat{\psi}_1(x)$. Next, we have
    \ea{
    \hat{f}_{k, 0}(x)=\left\{\begin{array}{cc}
        1 & i = \min\{\argmin_{i\in \argmax_l \big(\hat{\ell}_k(x)\big)(l) } \hat{\psi}_1(x)\}\\
        0 & \text{otherwise}
    \end{array}\right. . \label{eqn: optimal_appr}
    }
    Next, we need to show that $\big(\psi_1(x)\big)(j_1) = \langle {r}_{k', 0}(x), \psi_1(x)\rangle \geq \langle \hat{f}_{k, 0, 0}(x), {\psi}_0(x) \rangle - T = \big({\psi}_0(x)\big)(j_2)-T $. Assume otherwise, meaning that $\big(\psi_1(x)\big)(j_1)<\big(\psi_1(x)\big)(j_2)-T$. In this case, we have
    \ean{
    \max \hat{\ell}_{k}(x) &\overset{(a)}{=} \big(\hat{\ell}_{k}(x)\big)(j_2)\nonumber\\
    &= \big(\ell_{k}(x)\big)(j_2) + \big(\hat{\psi}_0(x)-\psi_0(x)\big)(j_2) - k \big(\hat{\psi}_1(x)-\psi_1(x)\big) (j_2)\nonumber\\
    &=\big(\ell_{k'}(x)\big)(j_2) - (k-k')\big(\psi_1(x)\big)(j_2)+\big(\hat{\psi}_0(x)-\psi_0(x)\big)(j_2) - k \big(\hat{\psi}_1(x)-\psi_1(x)\big) (j_2)\nonumber\\
    &\overset{(b)}{\leq} \big(\ell_{k'}(x)\big)(j_2) - (k-k')\big(\psi_1(x)\big)(j_2) + (\delta_0+K\delta_1)\nonumber\\
    &\overset{(c)}{<} \big(\ell_{k'}(x)\big)(j_2) - (k-k')\big(\psi_1(x)\big)(j_1) - (k-k')T+(\delta_0+K\delta_1)\nonumber\\
    &\overset{(d)}{\leq} \big(\ell_{k'}(x)\big)(j_1)- (k-k')\big(\psi_1(x)\big)(j_1) - (k-k')T+(\delta_0+K\delta_1)\nonumber\\
    &\overset{(e)}{\leq} \big(\ell_{k'}(x)\big)(j_1)- (k-k')\big(\psi_1(x)\big)(j_1)-2\frac{\delta_0+K\delta_1}{T}T +(\delta_0+K\delta_1)\nonumber\\
    &=\big(\ell_{k'}(x)\big)(j_1)- (k-k')\big(\psi_1(x)\big)(j_1)-(\delta_0+K\delta_1)\nonumber\\
    &=\big(\ell_{k}(x)\big)(j_1)-(\delta_0+K\delta_1)\nonumber\\
    &=\big(\hat{\ell}_{k}(x)\big)(j_1) -(\delta_0+K\delta_1) - \big(\hat{\psi}_0(x)-\psi_0(x)\big)(j_1) + k \big(\hat{\psi}_1(x)-\psi_1(x)\big) (j_1)\nonumber\\
    &\overset{(f)}{\leq} \big(\hat{\ell}_{k}(x)\big)(j_1) -(\delta_0+K\delta_1) + (\delta_0+K\delta_1) = \big(\hat{\ell}_{k}(x)\big)(j_1),
    }
    which is a contradiction. Note that $(a)$ holds because of definition of $j_2$ and \eqref{eqn: optimal_appr}, $(b)$ holds due to approximation assumptions $\|\hat{\psi}_0 - \psi_0\|_{\infty}\leq \delta_0$ and $\|\hat{\psi}_1 - \psi_1\|_{\infty}\leq \delta_1$, $(c)$ holds because of the assumption $\big(\psi_1(x)\big)(j_1)<\big(\psi_1(x)\big)(j_2)-T$, $(d)$ is followed by the definition of $j_1$ on maximizing $\ell_{k'}(x)$, and $(e)$ holds because $k\geq k' +\frac{2(\delta_0+K\delta_1)}{T}$, and $(f)$ is followed by approximation assumptions.
\end{proof}

In order to prove this theorem, we define a measure of distance between two rules $f_1, f_2\in \Delta_{d}^{\Rbb}$ as
\ea{
D_k(f_1, f_2) := \Ebb\big[\langle f_1(x)-f_2(x), \psi_0(x)-k\psi_1(x)\rangle\big].
}
Using this measure of distance, the difference of objectives between two rules $f_1$ and $f_2$ can be written as
\ea{
\Ebb\big[\langle f_1(x), \psi_0(x)\rangle \big] - \Ebb\big[\langle f_2(x), \psi_0(x)\rangle \big] =& D_{k^*}(f_1, f_2)\nonumber\\&+k^*\Big(\Ebb\big[\langle f_1(x), \psi_1(x)\rangle \big] - \Ebb\big[\langle f_2(x), \psi_1(x)\rangle \big]\Big). \label{eqn: rewrite_diff}
}
Therefore, if two rules achieve similar constraints, and if $D_k(f_1, f_2)$ is small enough, we can prove that the two rules achieve similar objectives too, since $k$ is bounded above by $K$. 

In fact, if we let $f_1(x)=f^*_{k, p}(x)$ and $f_2(x):=\hat{f}_{\hat{k}, \hat{p}}$, where $k$ and $p$ are optimal solutions as in Theorem \ref{thm: single_const}, then due to this optimality, and because $\Ebb\big[\langle \hat{f}_{\hat{k}, \hat{p}}, \psi_1(x)\rangle\big]\leq \delta$ with probability at least $1-\ep$ as shown in Theorem \ref{thm: constraint_gen}, then LHS of \eqref{eqn: rewrite_diff} is positive with at least the same probability. In this proof, we show that how large is that term, and therefore, we show that how sub-optimal is $\hat{f}_{\hat{k}, \hat{p}}$ in terms of the objective.

To that end, we first bound the difference between constraints. This bound can be achieved similar to the proof of Theorem \ref{thm: constraint_gen}. In fact, there we showed that if the empirical constraint $\Ebb_{S^n}\big[\langle \hat{f}_{\hat{k}, \hat{p}}, \psi_1(x)\rangle\big]\leq \delta - d_n(\pi)$, then using \eqref{eqn: bound_sup} the true expectation is bounded as $\Ebb_{\mu}\big[\langle \hat{f}_{\hat{k}, \hat{p}}, \psi_1(x)\rangle\big]\leq \delta$ with probability at least $1-\pi$. However, \eqref{eqn: bound_sup} is symmetric in empirical and true constraint, i.e., if we show that $\Ebb_{S^n}\big[\langle \hat{f}_{\hat{k}, \hat{p}}, \psi_1(x)\rangle\big]\geq \delta - d_n(\pi)$, then we have $\Ebb_{\mu}\big[\langle \hat{f}_{\hat{k}, \hat{p}}, \psi_1(x)\rangle\big]\geq \delta - 2d_n(\pi)$ with probability at least $1-\pi$. 

To show $\Ebb_{S^n}\big[\langle \hat{f}_{\hat{k}, \hat{p}}, \psi_1(x)\rangle\big]\geq \delta - d_n(\pi)$, we follow three steps, (i) because $\delta$ is $(\ep_l, \ep_u)$-interior point of the set of constraints, i.e., $(\delta-\ep_l, \delta+\ep_u)$ is a subset of all plausible constraints, then $\delta-d_n(\pi)$ is $(\ep_l-d_n(\pi), \ep_u+d_n(\pi))$-interior point. Now, using Lemma \ref{lem: Sl_bound} and by setting $\ep'=\min\{\ep_l-d_n(\pi), \ep_u+d_n(\pi)\}$ we can show that $\delta-d_n(\pi)$ is $\ep'/2$-interior point of the empirical constraints with probability at least $1-2e^{-\frac{n\ep'^2}{4}}$, (ii) using the first step and assuming $d_n(\pi)\leq \ep_l/2$ we conclude that $\delta-d_n(\pi)$ is $d_n(\pi)/2$-interior point of the empirical constraints with the aforementioned probability, (iii) because of Lemma \ref{lem: appr_const}, we conclude that for $\ep= d_n(\pi)/2$, and with probability at least $1-2e^{-\frac{n\ep'^2}{4}}$ there exists $k\in \Rbb$ and $p\in[0, 1]$ such that $\Ebb_{S^n}\big[\langle \hat{f}_{k, p}(x), \hat{\psi}_1(x)\rangle\big]=\delta-d_n(\pi)+\frac{d_n(\pi)/2-\ep}{2} = \delta-d_n(\pi)$.
As a result of the above discussion we conclude that with probability at least $1-\pi-2e^{-\frac{n\ep'^2}{4}}$ there exists $k$ and $p$ such that $\delta\geq \Ebb\big[\langle\hat{f}_{k, p}(x), \psi_1(x)\rangle\big]\geq \delta-2d_n(\pi)$. Now, since we know that $\Ebb\big[\langle f_1(x), \psi_1(x)\rangle\big] = \Ebb\big[\langle f^*_{k, p}(x), \psi_1(x)\rangle\big] = \delta$, then we have
\ea{
0\leq \Ebb\big[\langle f_1(x), \psi_1(x)\rangle\big] - \Ebb\big[\langle f_2(x), \psi_1(x)\rangle\big] \leq 2d_{l}(\pi), \label{eqn: noD}
}
with probability at least $1-\pi - 2e^{-\frac{n\ep'^2}{4}}$.

The above discussion together with \eqref{eqn: rewrite_diff} and the assumption of boundedness of $k$ shows that the difference of objectives is bounded with a high probability, if we bound $D_k(f_1, f_2)$. However, before we proceed with bounding that term, we should derive a relationship between $\hat{k}$ and $k^*$ for the reasons that we see in proving boundedness of $D_k(f_1, f_2)$. 

We have already shown that there exists $\hat{p}\in[0, 1]$ such that $\delta \geq \Ebb\big[\langle \hat{f}_{\hat{k}, \hat{p}}, \psi_1(x)\rangle\big]\geq \delta - 2d_{l}(\pi)$. 
Here, Lemma \ref{lem: bound_k^*} shows that for $k'=k-\frac{2(\delta_0+K\delta_1)}{T}$ we have $\Ebb\big[\langle f^*_{k', 0}, \psi_1(x)\rangle\big]\geq \delta - 2d_{l}(\pi)-T$ with probability at least $1-\pi-2e^{-\frac{n\ep'^2}{4}}$.
Moreover, using symmetry in Lemma \ref{lem: bound_k^*} and for $k'' = k + \frac{2(\delta_0+K\delta_1)}{T}$  we have $\Ebb\big[\langle f^*_{k'', 0}(x) - \hat{f}_{k}(x), \hat{\psi}_{1}(x) \rangle\big]\leq T$. Now, since $\|\psi_1(x)-\hat{\psi}_1(x)\|_{\infty}\leq \delta_0$, using H\"older's inequality we conclude that $\Ebb\big[\langle f^*_{k'', 0}(x) - \hat{f}_{k}(x), \hat{\psi}_{1}(x) \rangle\big]\leq T+2\delta_1$, and consequently $\Ebb\big[\langle f^*_{k'', 0}(x), \hat{\psi}_{1}(x) \rangle\big]\leq \delta + T+2\delta_0$


Now that we have found a lower-bound on constraint of the rule $f^*_{k-q}(x)$ for $q=\frac{2(\delta_0+K\delta_1)}{T}$, then if we find an upper bound on the constraint of the rule $f^*_{k^*+e}(x)$ for an $e\in \Rbb^{+}$, then we can use monotonicity of the constraint of $f^*_k$ in terms of $k$ and prove a relationship between $k$ and $k^*$.  To that end, we use detection assumption with which we can show that
\ean{
\Ebb\big[\langle f^*_{k^*+\tfrac{1}{C}(2d_n(\pi)+T)^{1/\gamma}}, \psi_1(x)\rangle\big]\leq \delta - 2d_n(\pi)-T, 
}
where we assume that $d_n(\pi)\leq \frac{(C\Delta)^{\gamma}-T}{2}$.
Now, using previous discussions conclude that
\ean{
\Ebb\big[\langle f^*_{k^*+\tfrac{1}{C}(2d_n(\pi)+T)^{1/\gamma}}, \psi_1(x)\rangle\big]\leq \Ebb\big[\langle f^*_{k', 0}, \psi_1(x)\rangle\big],
}
with probability at least $1-\pi-2e^{-\frac{n\ep'^2}{4}}$. This together with the first part of Lemma \ref{lem: semicont} shows that $k'\leq k^*+\tfrac{1}{C}(2d_n(\pi)+T)^{1/\gamma}$, or equivalently $k\leq k^* +\frac{2(\delta_0+K\delta_1)}{T}+\tfrac{1}{C}(2d_n(\pi)+T)^{1/\gamma}$ with probability at least $1-\pi-2e^{-\frac{n\ep'^2}{4}}$. Since we know that $\gamma$ is clamped above by $1$, and using the inequality $(1+x)^{a}\leq 1+ax$ for $a\geq 1$ we can substitute the above inequality with $k\leq k^*+\frac{2(\delta_0+K\delta_1)}{T}+\frac{\big(2d_n(\pi)\big)^{1/\gamma}}{C}\big(1+\frac{T}{\gamma(2d_n(\pi))^{1/\gamma}}\big)$. Now optimizing over $T$ leads in $T=\sqrt{2\gamma C(\delta_0+K\delta_1)}$, which concludes that $k\leq k^*+\Delta_{u}k$ with the aforementioned probability, where $\Delta_{u}k=\frac{\big(2d_n(\pi)\big)^{1/\gamma}}{C}+2\sqrt{\frac{2(\delta_0+K\delta_1)}{\gamma C}}$, if we have $d_n(\pi)\leq \frac{(C\Delta)^{\gamma}-\sqrt{2\gamma C (\delta_0+K\delta_1)}}{2}$

Similarly, using sensitivity assumption, we have
\ean{
\Ebb\big[\langle f^*_{k^*+\tfrac{1}{C}(2\delta_1+T)^{1/\gamma}}(x), \psi_1(x)\rangle\big]\geq \delta + 2\delta_1+T,
}
where $\frac{(2\delta_1+T)^{1/\gamma}}{C}\leq \Delta$.
Next, using previous discussions conclude that
\ean{
\Ebb\big[\langle f^*_{k^*+\tfrac{1}{C}(2\delta_1+T)^{1/\gamma}}(x), \psi_1(x)\rangle\big]\geq \Ebb\big[\langle f^*_{k'', 0}(x), \psi_1(x) \rangle\big],
}
with the aforementioned probability. This, again, together with the first part of Lemma \ref{lem: semicont} shows that $k''\geq k^*-\tfrac{1}{C}(2\delta_1+T)^{1/\gamma}$, or equivalently $k\geq k^*-\tfrac{1}{C}(2\delta_1+T)^{1/\gamma}-\frac{2(\delta_0+K\delta_1)}{T}$. Therefore, by setting $T=\sqrt{2\gamma C(\delta_0+K\delta_1)}$ we conclude that $k\geq k^*-\De_lk$ where $\De_lk = \frac{(2\delta_1)^{1/\gamma}}{C} + 2 \sqrt{\frac{2(\delta_0+K\delta_1)}{\gamma C}}$, and assuming $\frac{\big(2\delta_1+\sqrt{2\gamma C(\delta_0+K\delta_1)}\big)^{1/\gamma}}{C}\leq \Delta$.

Next, we turn into bounding $D_{k^*}(f_1, f_2)$.
To that end, we first note that 
\ea{
t_x(k^*):=\langle f^*_{k^*, p}(x), \psi_0(x)-k^*\psi_1(x)\rangle  = \max_i \big(\psi_0(x)-k^*\psi_1(x)\big)(i),
}
for all $p \in [0, 1]$. This is followed by the definition of $f^*_{k^*, p}(\cdot)$. Similarly, we can show that
\ean{
\hat{t}_x(\hat{k}):=\langle \hat{f}_{\hat{k}, p}(x), \hat{\psi}_0(x)-k^*\hat{\psi}_1(x)\rangle  = \max_i \big(\hat{\psi}_0(x)-\hat{k}\hat{\psi}_1(x)\big)(i),
}
for all $p\in[0, 1]$. Now, we can rewrite $D_{k^*}(f_1, f_2)$ as
\ea{
D_{k^*}(f_1, f_2) &= \Ebb\big[\langle f^*_{k^*, p}(x)- \hat{f}_{\hat{k}, \hat{p}}(x), \psi_0 - k^*\psi_1(x)\rangle\big]\nonumber\\
&=\Ebb[t_x(k^*)]-\Ebb\big[\langle \hat{f}_{\hat{k}, \hat{p}}(x), \psi_0 - k^*\psi_1(x)\rangle\big]\nonumber\\
&= \Ebb[t_x(k^*)] - \Ebb\big[\langle \hat{f}_{\hat{k}, \hat{p}}(x), \hat{\psi}_0 - k^*\hat{\psi}_1(x)\rangle\big] \nonumber\\&\phantom{= t(k^*)}- \Ebb\big[\langle \hat{f}_{\hat{k}, \hat{p}}(x), (\psi_0(x)-\hat{\psi}_0(x)) - k^*(\psi_1(x)-\hat{\psi}_1(x)\rangle\big] \nonumber\\
&\overset{(a)}{\leq} \Ebb[t_x(k^*)]- \Ebb\big[\langle \hat{f}_{\hat{k}, \hat{p}}(x), \hat{\psi}_0 - k^*\hat{\psi}_1(x)\rangle\big] + \delta_0+K\delta_1\nonumber\\
&=\Ebb[t_x(k^*)] - \Ebb[\hat{t}_x(\hat{k})] + (k^*-k)\Ebb\big[\langle \hat{f}_{\hat{k}, \hat{p}}(x), \hat{\psi}_0(x)\rangle \big] + \delta_0 + K\delta_1\nonumber\\
&\overset{(b)}{\leq} \Ebb[t_x(k^*)]-\Ebb[\hat{t}_x(\hat{k})] + |k^*-\hat{k}| + \delta_0+K\delta_1, \label{eqn: firstbDkl}
}
where $(a)$ and $(b)$ hold due to H\"older's inequality. 

Next, we show Lipschitzness of $t(k)$ using its structure. In fact, due to its definition, $t(k)$ is the maximum of a set of lines with $\{t_i = \big(\psi_0(x)\big)(i) - k \big(\psi_1(x)\big)(i)\}_{i=1}^{n+1}$ in terms of $k$ with slope $m_i = - \big(\psi_1(x)\big)(i)$ and $y$-intercept of $b_i = \big(\psi_0(x)\big)(i)$. Therefore, such piecewise-linear function has a Lipschitz factor equal to the maximum slope of the lines, which in here is equal to $\max_i m_i = \max_i \Big|\big(\psi_1(x)\big)(i)\Big|\leq 1$. Therefore, $t(k)$ is a $1$-Lipschitz function. Therefore, using \eqref{eqn: firstbDkl} we can bound $D_{k^*}(f_1, f_2)$ as
\ean{
D_{k^*}(f_1, f_2)&\leq \Ebb[t_x(\hat{k}) - \hat{t}_x(\hat{k})] + 2|k^*-\hat{k}| + \delta_0 + K\delta_1\nonumber\\
&=\Ebb[\max_i \big(\psi_0(x)-\hat{k}\psi_1(x)\big)(i) - \max_i \big(\hat{\psi}_0(x)-\hat{k}\hat{\psi}_1(x)\big)(i)+ 2|k^*-\hat{k}| + \delta_0 + K\delta_1\nonumber\\
&\overset{(a)}{\leq} 2|k^*-\hat{k}| + 2(\delta_0 + K\delta_1),
}
where $(a)$ holds because each component of $\big(\psi_0(x)-\hat{k}\psi_1(x)\big)$ and $\big(\hat{\psi}_0(x)-\hat{k}\hat{\psi}_1(x)\big)$ is bounded by $\delta_0 + K\delta_1$, and because the maximum operator is a norm, and therefore satisfies sub-additivity. Finally, since we have bounded $\Delta\leq k^*-\hat{k}\leq \Delta_l$ with probability at least $1-\pi - 2e^{-n\ep'^2/4}$, then we have
\ean{
D_k(f_1, f_2)&\leq 2\max\{\Delta, \Delta_l\} + 2(\delta_0+K\delta_1)\nonumber\\
&=2\frac{\big(2\max\{d_n(\pi), \delta_1\}\big)^{1/\gamma}}{C}+4\sqrt{\frac{2(\delta_0+K\delta_1)}{\gamma C}} + 2(\delta_0+K\delta_1),
}
with such probability. This, together with \eqref{eqn: rewrite_diff} and \eqref{eqn: noD} shows that
\ean{
\Ebb\big[\langle f_1(x), \psi_0(x)\rangle \big] - \Ebb\big[\langle f_2(x), \psi_0(x)\rangle \big] \leq &2\frac{\big(2\max\{d_n(\pi), \delta_1\}\big)^{1/\gamma}}{C}+4\sqrt{\frac{2(\delta_0+K\delta_1)}{\gamma C}} \nonumber\\&+ 2(\delta_0+K\delta_1) + 2Kd_n(\pi),
}
which completes the proof.

\section{Proof of Theorem \ref{theorem:myname}} \label{app: det_alg}

    To prove this theorem, we first prove the following auxiliary lemma
    \begin{lemma}
        For $\alpha, \ep\geq 0$, the following holds
        \ean{
        \min_{r_i\geq 0, \sum_{i=1}^n r_i \leq \alpha}\sum_{i=1}^n r_i d_i - \min_{r_i\geq 0, \sum_{i=1}^n r_i \leq \alpha+\ep}\sum_{i=1}^n r_i d_i\leq \ep\cdot \max_{i\in [1:n]} |d_i|
        }
    \end{lemma}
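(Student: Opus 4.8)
The plan is to compute both minima in closed form, exploiting that each is the minimum of a \emph{linear} functional $r \mapsto \sum_{i=1}^{n} r_i d_i$ over the compact polytope $P_\beta = \{r \in \Rbb^{n} : r_i \geq 0,\ \sum_{i} r_i \leq \beta\}$. Writing $M(\beta)$ for this minimal value, I would first record the easy observation that $M$ is nonincreasing in $\beta$ (a larger budget only enlarges the feasible set), so the left-hand side $M(\alpha) - M(\alpha+\ep)$ is automatically nonnegative; this is a sanity check rather than part of the bound. The substantive first step is to use the fact that a linear functional over a bounded polytope attains its minimum at a vertex, together with the observation that the vertices of $P_\beta$ are exactly the origin and the points $\beta e_i$ (scaled standard basis vectors), to obtain
\[
M(\beta) = \min\{0,\ \beta d_1,\ \ldots,\ \beta d_n\} = \beta \cdot \min\{0,\, d_{\min}\}, \qquad d_{\min} := \min_{i\in[1:n]} d_i,
\]
where the second equality uses $\beta \geq 0$. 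Intuitively: if some $d_i < 0$ one pushes all mass onto the most negative coordinate, and otherwise the optimum is $r = 0$.

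With this formula the remainder is bookkeeping. Substituting $\beta = \alpha$ and $\beta = \alpha+\ep$ and subtracting gives
\[
M(\alpha) - M(\alpha+\ep) = -\ep \cdot \min\{0,\, d_{\min}\} = \ep \cdot \max\{0,\, -d_{\min}\},
\]
and I would finish by noting $\max\{0,-d_{\min}\} \leq \max_{i} |d_i|$: the quantity is $0$ when $d_{\min}\geq 0$, and equals $|d_{\min}| \leq \max_i |d_i|$ when $d_{\min}<0$. Combining the two displays yields the claimed inequality.

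There is no genuine obstacle here; the only points requiring care are the vertex characterization of $P_\beta$ and the sign conventions in the subtraction. If one prefers to avoid invoking vertex structure explicitly, an alternative robust route is a scaling argument: take an optimizer $s^\star$ of the $(\alpha+\ep)$-problem, note $M(\alpha+\ep) = \sum_i s^\star_i d_i \leq 0$ and set $S := \sum_i s^\star_i \leq \alpha+\ep$. If $S \leq \alpha$ then $s^\star$ is already feasible for the $\alpha$-problem and the difference is $0$; if $S > \alpha$, the rescaled vector $(\alpha/S)\,s^\star$ is feasible for the $\alpha$-problem, so $M(\alpha) \leq (\alpha/S)\,M(\alpha+\ep)$, whence $M(\alpha) - M(\alpha+\ep) \leq \tfrac{S-\alpha}{S}\,\lvert M(\alpha+\ep)\rvert \leq (S-\alpha)\,\max_i |d_i| \leq \ep\,\max_i |d_i|$, using $\lvert M(\alpha+\ep)\rvert = \sum_i s^\star_i(-d_i) \leq S\,\max_i|d_i|$ and $S-\alpha\leq\ep$.
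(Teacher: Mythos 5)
Your proof is correct, but it takes a genuinely different route from the paper. The paper argues by \emph{decomposition}: any feasible point of the $(\alpha+\ep)$-problem is written as $\rv'+\rv''$ with $\sum_i r'_i\leq\alpha$ and $\sum_i r''_i\leq\ep$, so the larger minimum is bounded below by the sum of the two smaller minima, and the $\ep$-budget piece contributes at least $-\ep\max_i|d_i|$ by H\"older. You instead solve the problem in closed form: since the objective is linear and the vertices of $\{r\geq 0,\ \sum_i r_i\leq\beta\}$ are the origin and the points $\beta e_i$, you get $M(\beta)=\beta\min\{0,d_{\min}\}$, from which the difference is exactly $\ep\max\{0,-d_{\min}\}\leq\ep\max_i|d_i|$. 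Your route is more elementary and actually sharper --- it identifies the gap exactly (in particular it is $0$ whenever all $d_i\geq 0$), whereas the paper's bound is only an upper estimate; the paper's decomposition argument, on the other hand, does not rely on knowing the vertex structure and so transfers more readily to feasible sets with additional constraints (e.g.\ box constraints $r_i\leq 1$, which do appear in the surrounding application even though the lemma as stated omits them). Your alternative scaling argument via $(\alpha/S)s^\star$ is also valid and likewise independent of vertex enumeration; all three proofs establish the lemma as stated.
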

    \begin{proof}[Proof of lemma]
        We know that for every positive vector $\rv$ with $\sum_{i=1}^n r_i\leq \alpha+\ep$, we could rewrite that as a sum of two vectors $\rv = \rv'+\rv''$ for which
        \ean{
        \sum_{i=1}^n r'_i\leq \alpha,
        }
        and
        \ean{
        \sum_{i=1}^n r''_i\leq \ep.
        }
        As a result, we can rewrite $\min_{r_i\geq 0, \sum_{i=1}^n r_i \leq \alpha+\ep}\sum_{i=1}^n r_i d_i$ as
        \ean{
        \min_{r_i\geq 0, \sum_{i=1}^n r_i \leq \alpha+\ep}\sum_{i=1}^n r_i d_i &\geq  \min_{r'_i\geq 0, \sum_{i=1}^n r'_i\leq \alpha}\min_{r''_i\geq0, \sum_{i=1}^n r''_i\leq \ep}\sum_{i=1}^n (r'_i+r''_i)\cdot d_i\\
        &= \min_{r'_i\geq0, \sum_{i=1}^n r'_i\leq \alpha} r'_id_i+ \min_{r''_i\geq0, \sum_{i=1}^n r''_i\leq \ep}\sum_{i=1}^n r''_i d_i.
        }
        Hence, we have that
        \ean{
        \min_{r_i\geq 0, \sum_{i=1}^n r_i \leq \alpha+\ep}\sum_{i=1}^n r_i d_i - \min_{r'_i\geq0, \sum_{i=1}^n r'_i\leq \alpha} r'_id_i &\geq -\big|\min_{r''_i\geq0, \sum_{i=1}^n r''_i\leq \ep}\sum_{i=1}^n r''_i d_i\big|\\
        &\overset{(a)}{\geq} -\sum_{i=1}^n r''_i\cdot\max_{i\in[1:n]}|d_i|\geq -\ep \cdot\max_{i\in[1:n]}|d_i|,
        }
        where $(a)$ holds using H\"older's inequality.
    \end{proof}

    Next, we know that the optimal deterministic deferral policy should satisfy
    \ean{
    &\min_{r(x_i)\in \{0, 1\}, \, \frac{1}{n}\sum_{i} r(x_i)\leq b} \frac{1}{n}\sum_{i} r(x_i)\ell_{H}(x_i, y_i, m_i)+\big(1-r(x_i)\big)\cdot \ell_{AI}(x_i, y_i)\\&=\frac{1}{n}\sum_{i} \ell_{AI}(x_i, y_i) + \min_{r(x_i)\in \{0, 1\}, \, \frac{1}{n}\sum_{i=1}^n r(x_i)\leq b} \frac{1}{n}\sum_{i=1}^n r(x_i) \big(\ell_{H}(x_i, y_i, m_i)-\ell_{AI}(x_i, y_i)\big)\\
    &\overset{(a)}{=} \frac{1}{n}\sum_{i} \ell_{AI}(x_i, y_i) + \underbrace{\min_{r(x_i)\in \{0, 1\}, \, \sum_{i=1}^n r(x_i)\leq \lfloor bn\rfloor } \frac{1}{n}\sum_{i=1}^n r(x_i) \big( \ell_{H}(x_i, y_i, m_i)- \ell_{AI}(x_i, y_i)\big)}_{B},
    }
    where $(a)$ holds because $r(x_i)\in\{0, 1\}$ and therefore $\sum r(x_i)\leq bn$ if and only if $\sum r(x_i)\leq \lfloor bn\rfloor$. 
    Now, we turn to examining $B$. To that end, we first consider the following optimization problem:
    \ea{
    \min_{ r(x_i)\in [0, 1], \sum_{i=1}^n r(x_i)\leq \lfloor bn\rfloor } \frac{1}{n}\sum_{i=1}^n r(x_i) \big(\ell_{H}(x_i, y_i, m_i)-\ell_{AI}(x_i, y_i)\big). \label{eqn: optim}
    }
    For a minimizer $\rv^*$ of the above problem, we could form $\hat{\rv}$ as
    \ean{
    \hat{r}_i = \left\{\begin{array}{cc}
        r^*_i(x_i) &  \ell_{H}(x_i, y_i, m_i)-\ell_{AI}(x_i, y_i)\leq 0 \\
        0 & \ell_{H}(x_i, y_i, m_i)-\ell_{AI}(x_i, y_i) >0
    \end{array}\right. .
    }
    One can see that $\hat{\rv}(x)$ is also a minimizer of the above problem. Hence, without loss of generality, we assume that there is an optimal deferral policy that has only non-zero value when $(x, y, m)\in A=\{(x, y, m)\in \Dcal: \}$. Furthermore, we know that since $\hat{r}(x_i)\leq 1$, then $\sum_{i}\hat{r}(x_i)\leq \min\{\lfloor nb\rfloor, |A|\}$. We argue that this inequality does not hold in a strict form, i.e., we have $\sum_{i}\hat{r}(x_i)= \min\{\lfloor nb\rfloor, |A|\}$. The reason is that otherwise one can find $r'(x)\in [0, 1]^{\Xcal}$ such that $\sum_{(x_i, y_i, m_i)\in A} \hat{f}(x_i)+r'(x_i)=\min\{nb, |A|\}$ and because of negativity of $\ell_{H}(x_i, y_i, m_i)-\ell_{AI}(x_i, y_i)$, we can strictly reduce the objective function that is a contradiction. 

    Next, we order $\ell_{H}(x_i, y_i, m_i)-\ell_{AI}(x_i, y_i)$ increasingly and we name them $d_j$. In fact, we define $k_j$ such that $d_j = \ell_{H}(x_{k_j}, y_{k_j}, m_{k_j})-\ell_{AI}(x_{k_j}, y_{k_j})$ and that $d_1\leq d_2 \ldots\leq d_{|A|}\leq 0$. For the sake of simplicity, we further define $r_j:=r(x_{k_j})$. As a result, the optimization problem in \eqref{eqn: optim} can be rewritten as
    \ean{
    \min_{r_i\in [0, 1], \, \sum_{i=1}^{n}r_i=\min\{\lfloor nb\rfloor, |A|\}} \sum_{i=1}^{n}r_i d_i.
    }
    Here, we show that the optimizer of the above problem is $r_i =\mathds{1}_{i\leq \min\{\lfloor nb\rfloor, |A|\}}$. To show that, we consider $r'_i\in[0, 1]$ such that $ \sum_{i=1}^{n}r'_i=\min\{\lfloor nb\rfloor, |A|\}$. Then, we have
    \ea{
    \sum_{i=1}^{n} \mathds{1}_{i\leq \min\{\lfloor nb\rfloor, |A|\}} d_i - \sum_{i=1}^{n} r'_i d_i &= \sum_{i:\, \mathds{1}_{i\leq \min\{\lfloor nb\rfloor, |A|\}}-r'_i<0} (\mathds{1}_{i\leq \min\{\lfloor nb\rfloor, |A|\}}-r'_i)\cdot d_i \nonumber\\&~~~~+ \sum_{i:\, \mathds{1}_{i\leq \min\{\lfloor nb\rfloor, |A|\}}-r'_i>0} (\mathds{1}_{i\leq \min\{\lfloor nb\rfloor, |A|\}}-r'_i)\cdot d_i. \label{eqn: diffs}
    }
    Now, since we know that $\sum_{i} \mathds{1}_{i\leq \min\{\lfloor nb\rfloor, |A|\}}=\sum_{i}r'_i $, we can define a parameter $Q$ as
    \ea{
    Q := \sum_{i:\, \mathds{1}_{i\leq \min\{\lfloor nb\rfloor, |A|\}}-r'_i> 0} \mathds{1}_{i\leq \min\{\lfloor nb\rfloor, |A|\}}-r'_i = \sum_{i:\, \mathds{1}_{i\leq \min\{\lfloor nb\rfloor, |A|\}}-r'_i< 0} r'_i - \mathds{1}_{i\leq \min\{\lfloor nb\rfloor, |A|\}} . \label{eqn: def_q}
    }
    Next, by defining $p_i:=\frac{ \mathds{1}_{i\leq \min\{\lfloor nb\rfloor, |A|\}}-r'_i}{Q}$ for $i$s in which $ \mathds{1}_{i\leq \min\{\lfloor nb\rfloor, |A|\}}-r'_i> 0$ and $q_i:=\frac{ r'_i\mathds{1}_{i\leq \min\{\lfloor nb\rfloor, |A|\}}}{Q}$ for $i$s in which $ \mathds{1}_{i\leq \min\{\lfloor nb\rfloor, |A|\}}-r'_i<0$ and $0$ otherwise, we conclude that $\{p_i\}_i$ and $\{q_i\}_i$ are probability mass functions. Hence, using \eqref{eqn: diffs} and \eqref{eqn: def_q}, we have
    \ean{
    \sum_{i=1}^{n} \mathds{1}_{i\leq \min\{\lfloor nb\rfloor, |A|\}} d_i - \sum_{i=1}^{n} r'_i d_i = Q\big(\sum_{i=1}^{\min\{\lfloor nb\rfloor, |A|\}}p_id_i-\sum_{i=\min\{\lfloor nb\rfloor, |A|\}+1}^{n}q_id_i\big).
    }
    The above identity contains the difference of two expected value over random variables that one is always smaller than the other. As a result, we show that
    \ean{
    \sum_{i=1}^{n} \mathds{1}_{i\leq \min\{\lfloor nb\rfloor, |A|\}} d_i - \sum_{i=1}^{n} r'_i d_i\leq 0,
    }
    which completes the proof.

\end{document}